\tikzstyle{c1}=[circle,thick,auto,draw,inner
\newtheorem{thm}{Theorem}[section]
\newtheorem{cor}[thm]{Corollary}
\newtheorem{prop}[thm]{Proposition}
\newtheorem{lem}[thm]{Lemma}
\newtheorem{defn}[thm]{Definition}
\newtheorem{rem}[thm]{Remark}
\newtheorem{assm}[thm]{Assumption}
\newcommand{\PA}{\textnormal{pa}}
\newcommand{\ADJ}{\textnormal{adj}}
\newcommand{\CH}{\textnormal{ch}}
\newcommand{\DAG}{{DAG}}
\newcommand{\DAGS}{{DAGs}}
\newcommand{\amat}{{\Theta}}
\newcommand{\pa}{\textnormal{pa}}
\newcommand{\ra}{\rightarrow}
\newcommand{\nra}{\nrightarrow}
\newcommand{\la}{\leftarrow}
\newcommand{\indep}{{\bot\negthickspace\negthickspace\bot}}
\newcommand{\dep}{{\medspace\slash\negmedspace\negmedspace\negthickspace\bot\negthickspace\negthickspace\bot}}
\newcommand{\RR}{{\mathbb{R}}}
\newcommand{\cmb}{{\mathrm{cmb}}}
\newcommand{\X}{{{\mathcal{X}}}}
\newcommand{\Y}{{{\mathcal{Y}}}}
\newcommand{\W}{{{\mathcal{W}}}}
\newcommand{\PP}[1]{\mathbb{P}\left\{{#1}\right\}} 
\newcommand{\EE}[1]{\mathbb{E}\left[{#1}\right]} 
\newcommand\independent{\protect\mathpalette{\protect\independenT}{\perp}}
\def\independenT#1#2{\mathrel{\rlap{$#1#2$}\mkern2mu{#1#2}}}
\DeclareMathOperator*{\argmin}{arg\,min}
\newcommand{\norm}[1]{\lVert{#1}\rVert}
\newcommandx{\unsure}[2][1=]{\todo[linecolor=red,backgroundcolor=red!25,bordercolor=red,#1]{#2}}
\newcommandx{\change}[2][1=]{\todo[linecolor=blue,backgroundcolor=blue!25,bordercolor=blue,#1]{#2}}
\newcommandx{\info}[2][1=]{\todo[linecolor=OliveGreen,backgroundcolor=OliveGreen!25,bordercolor=OliveGreen,#1]{#2}}
\newcommandx{\improvement}[2][1=]{\todo[linecolor=Plum,backgroundcolor=Plum!25,bordercolor=Plum,#1]{#2}}
\title{Learning Directed Acyclic Graphs From Partial Orderings}
\author{Ali Shojaie$^{\ast,\dagger}$, Wenyu Chen$^\ast$ \\ Department of Biostatistics, University of Washington}
\date{}
\begin{document}

\maketitle

\def\thefootnote{$\ast$}\footnotetext{The authors contributed equally to this work.}\def\thefootnote{\arabic{footnote}}

\def\thefootnote{$\dagger$}\footnotetext{Corresponding eamil: ashojaie@uw.edu}\def\thefootnote{\arabic{footnote}}

\abstract{
Directed acyclic graphs ({\DAGS}) are commonly used to model causal relationships among random variables. In general, learning the {\DAG} structure is both computationally and statistically challenging. Moreover, without additional information, the direction of edges may not be estimable from observational data. In contrast, given a complete causal ordering of the variables, the problem can be solved efficiently, even in high dimensions. In this paper, we consider the intermediate problem of learning {\DAGS} when a partial causal ordering of variables is available. We propose a general estimation framework for leveraging the partial ordering and present efficient estimation algorithms for low- and high-dimensional problems. The advantages of the proposed framework are illustrated via numerical studies. 
}


\section{Introduction}\label{sec:intro}
Directed acyclic graphs ({\DAGS}) are widely used to capture causal relationships among components of complex systems \citep{spirtes_causation_2001,pearl_causality_2009,maathuis_handbook_2018}. They also form a foundation for causal discovery and inference \citep{pearl_causality_2009}. Probabilistic graphical models defined on {\DAGS}, known as Bayesian networks \citep{pearl_causality_2009}, have thus found broad applications in various scientific disciplines, from biology \citep{markowetz2007inferring, zhang2013integrated} and social sciences \citep{gupta2008linking}, to knowledge representation and machine learning \citep{heckerman1997bayesian}. However, learning the structure of {\DAGS} from observational data is very challenging due to at least two major factors: First, it may not be possible to infer the direction of edges from observational data alone. In fact, unless the model is \emph{identifiable} \citep[see, e.g.,][]{peters2014causal}, observational data only reveal the structure of the Markov equivalent class of {\DAGS} \citep{maathuis_handbook_2018}, captured by a complete partially directed acyclic graph (CPDAG) \citep{andersson_characterization_1997}. The second reason is computational---learning {\DAGS} from observational data is an NP-complete problem \citep{chickering_learning_1996}. In fact, while a few polynomial time algorithms have been proposed for special cases, including sparse graphs \citep{kalisch_estimating_2007} or identifiable models \citep{chen_causal_2019,ghoshal_learning_2018,peters_causal_2014,wang_high-dimensional_2020,shimizu_linear_2006,yu2020directed}, existing general-purpose algorithms are not scalable to problems involving many variables. 

\begin{figure}[t]
 \begin{center}
    \scalebox{0.75}{
      \begin{tikzpicture}[
		> = stealth, shorten > = 1pt,
		auto,node distance = 1.5cm,
		semithick, line width= 1.5,
		]
	\node[c1] at (0,0) (1) {$1$};
	\node[c1] at (2,0.9) (2) {$2$};
	\node[c1] at (2,-0.9) (3) {$3$};
	\node[c1] at (4,0) (4) {$4$};
        \draw[->] (1) -- (2);
        \draw[->] (1) -- (3);
        \draw[->] (2) -- (4);
        \draw[->] (3) -- (4);
      \end{tikzpicture}
    }
  \end{center}
  \caption{\footnotesize A directed graph with four nodes.\label{fig:simplegraph}}
\end{figure}
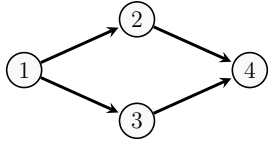

In spite of the many challenges of learning {\DAGS} in general settings, the problem becomes very manageable if a \emph{valid causal ordering} among variables is known \citep{shojaie_penalized_2010}. In a valid causal ordering for a {\DAG} $G$ with node set $V$, any node $j$ can appear before another node $k$ (denoted $j \prec k$) only if there is no directed path from $k$ to $j$. Multiple valid causal orderings may exists for a given {\DAG}, as illustrated in the simple example of Figure~\ref{fig:simplegraph}, where both $\mathcal{O}_1 = \{1 \prec 2 \prec 3 \prec 4\}$ and $\mathcal{O}_2 = \{1 \prec 3 \prec 2 \prec 4\}$ are valid causal orderings.

Clearly, a known causal ordering of variables resolves any ambiguity about the directions of edges in a {\DAG} and hence addresses the first source of difficulty in estimation of {\DAGS} discussed above. However, this knowledge also significantly simplifies the computation: given a valid causal ordering, {\DAG} learning reduces to a variable selection problem that can be solved efficiently even in the high-dimensional setting \citep{shojaie_penalized_2010}, when the number of variables is much larger than the sample size.

In the simplest case, the idea of \citet{shojaie_penalized_2010} is to regress each variable $k$ on all preceding variables in the ordering, $\{j: j \prec k\}$. While simple and efficient, this idea, and its extensions \citep[e.g.,][]{fu2013learning, shojaie_inferring_2014, han2016estimation}, require a \emph{complete} (or full) casual ordering of variables, i.e., a permutation of the list of variables in {\DAG} $G$. 
However, complete causal orderings are rarely available in practice. To relax this assumption, a few recent proposals have combined regularization strategies with algorithms that search over the space of orderings \citep[e.g.,][]{raskutti_learning_2018}. These algorithms are more efficient than those searching over the super-exponentially large space of {\DAGS} \citep{friedman2003being}. Nonetheless, the computation for these algorithms remains prohibitive for moderate to large size problems \citep{manzour_integer_2021, kucukyavuz_consistent_2022}. 

In this paper, we consider the setting where a \emph{partial} causal ordering of variables is known.
This scenario---which is an intermediate between assuming a complete causal ordering as in \citet{shojaie_penalized_2010} and no assumption on causal ordering---occurs commonly in practice. An important example is the problem of identifying \emph{direct} causal effects of multiple exposures on multiple outcomes (assuming no unmeasured confounders). Formally, let $\X = \{X_{1}, \ldots, X_{p}\}$ and $\Y = \{Y_{1}, \ldots, Y_{q}\}$ denote the set of $p$ exposure and $q$ outcome variables, respectively. 
Then, we have a partial ordering among $\X$ and $\Y$ variables, namely, $\X \prec \Y$, but we do not have any knowledge of the ordering among $\X$ or $\Y$ variables themselves. Nonetheless, we are interested in identifying direct causal effects of exposures $\X$ on outcomes $\Y$. This corresponds to learning edges from $\X$ to $\Y$, which would form a bipartite graph. 

Estimation of {\DAGS} from partial orderings also arises naturally in the analysis of biological systems. For instance, in gene regulatory networks, `transcription factors' are often known \emph{a priori} and they are not expected to be affected by other `target genes'. Similar to the previous example, here the set of transcription factors, $\X$, appear before the set of target genes, $\Y$, and the goal is to infer gene regulatory interactions. Similar problems also occur in integrative genomics, including in eQTL mapping \citep{ha_estimation_2020}. 

Despite its importance and many applications, the problem of learning {\DAGS} from partial orderings has not been satisfactorily addressed. In particular, as we show in the next section, various regression-based strategies currently used in applications result in incorrect estimates. As an alternative to these methods, one can use general {\DAG} learning algorithms, such as the PC algorithm \citep{spirtes_causation_2001,kalisch_estimating_2007}, to learn the structure of the CPDAG and then orient the edges between $\X$ and $\Y$  according to the known partial ordering. However, such an approach would not utilize the partial ordering in the estimation of edges and is thus inefficient. The recent proposal of \citet{wang2019directed} is also not computationally feasible as it requires searching over all possible orderings of variables. 

To overcome the above limitatiosn, we present a new framework for leveraging the partial ordering information into {\DAG} learning. To this end, after formulating the problem, in Section~\ref{sec:problem}, we also  investigate the limitations of existing approaches. Motivated by these findings, we present a new framework in Section~\ref{sec:method} and establish the correctness of its population version. Then, in Section~\ref{sec:estimation}, we present different estimation strategies for learning low- and high-dimensional {\DAGS}. 
To simplify the presentation, the main ideas are presented for the special case of two-layer networks corresponding to linear structural equation models (SEMs); the more general version of the algorithm and its extensions are discussed in Section~\ref{sec:extensions} and Appendix~\ref{sec:CAM}. 
The advantages of the proposed framework are illustrated through simulation studies and an application in integrative genomics in Section~\ref{sec:perfanal}.

\section{Learning Directed Graphs from Partial Orderings}\label{sec:problem}

\subsection{Problem Formulation}\label{sec:prob}
Consider a {\DAG} $G = (V, E)$ with the node set $V$, and the edge set $E \subset V \times V$. For the general problem, we assume that $V$ is partitioned into $L$ sets, $V_1, \ldots, V_L$ such that for any $\ell \in \{1, \ldots, L\}$, the nodes in $V_{\ell}$ cannot be parents of nodes in any set $V_{\ell'}, \ell' < \ell$. Such a partition defines a \emph{layering} of $G$ \citep{manzour_integer_2021}, denoted $V_1 \prec V_2 \prec \cdots \prec V_L$. 
In fact, a valid layering can be found for any {\DAG}, though some layers may contain a single node. 
As such, the notion of layering is general: We make no assumption on the size of each layer, the causal ordering of variables in each set $V_\ell$, or interactions among them, except that $G$ is a {\DAG}.

To simplify the presentation, we primarily focus on two-layer, or bipartite, {\DAGS}, and defer the discussion of more general cases to Section~\ref{sec:extensions}.
Let $V_1 = \X \equiv \{X_1, \ldots, X_p\}$ be the nodes in the first layer and $V_2 = \Y \equiv \{Y_1, \ldots, Y_q\}$ be those in the second layer. 
In the case of causal inference for multiple outcomes discussed in Section~\ref{sec:intro}, $\X$ represents the \emph{exposure} variables, and $\Y$ represents the \emph{outcome} variables.

Throughout the paper, we denote individual variables by regular upper case letters, e.g., $X_{k}$ and $Y_{j}$, and sets of variables as calligraphic upper case letters, e.g., $\X_{-k}$ and $\X_{S}$, where $S$ is a set containing more than one index. We also refer to nodes of the network by using both their indices and the corresponding random variables; for instance, $k \in \X$ and $X_k$. We denote the parents of a node $j$ in $G$ by $\PA_j$, its ancestors and children by $\mathrm{an}_j$ and $\CH_j$, respectively, and its adjacent nodes by $\ADJ_j$, where $\ADJ_j = \PA_j \cup \CH_j$. 
We may also represent the edges of $G$ using its adjacency matrix $\amat$, which satisfies $\amat_{jj} = 0$ and $\amat_{jj'} \neq 0$ if and only if $j' \in \pa_{j}$. For any bipartite {\DAG} with layers $\X$ and $\Y$ ($\X \prec \Y$), we can partition $\amat$ into the following block matrix
\begin{equation}\label{eq:adjmat}
	\amat = \left[
	\begin{array}{cc}
	  A   &   0 \\
	  B   &  C 
	\end{array}
	\right],
\end{equation}
where the zero constraint on the upper right block of $\amat$ follows from the partial ordering.
Here $A$ and $C$ contain the information on the edges amongst $X_k$  $(k = 1, \ldots, p)$ and $Y_j$ $(j = 1, \ldots, q)$, respectively. 
Both of these matrices can be written as lower-triangular matrices \citep{shojaie_penalized_2010}. 
However, there are generally no constraints on $B$.
We denote by $H$ the subgraph of $G$ containing edges from $\X$ to $\Y$ only, i.e., those corresponding to entries in $B$.
Our goal is to estimate $H$ using the fact that $\X \prec \Y$.

\begin{figure}[t]
\centering
\begin{subfigure}[b]{.25\linewidth}
    \centering
    \begin{tikzpicture}[
		> = stealth, shorten > = 1pt,
		auto,node distance = 1.5cm,
		semithick, line width= 1.5,
		]
		\node[c1] (x1) at(0,2) {$X_1$};
		\node[c1] (x2) at(2,2) {$X_2$};
		\node[c1] (y1) at(0,0) {$Y_1$};
		\node[c1] (y2) at(2,0) {$Y_2$};
		\draw[->,color = gray] (x1) -- (x2);
		\draw[->,color = blue] (x1) -- (y1);
		\draw[->,color = gray] (y1) -- (y2);
		\draw[->,color = blue] (x2) -- (y2);
		\end{tikzpicture}
		\caption{full \DAG}
	\end{subfigure}%
	\begin{subfigure}[b]{.25\linewidth}
	\centering
	\begin{tikzpicture}[
		> = stealth, shorten > = 1pt,
		auto,node distance = 1.5cm,
		semithick, line width= 1.5,
		]
		\node[c1] (x1) at(0,2) {$X_1$};
		\node[c1] (x2) at(2,2) {$X_2$};
		\node[c1] (y1) at(0,0) {$Y_1$};
		\node[c1] (y2) at(2,0) {$Y_2$};
		\draw[->,color = blue] (x1) -- (y1);
		\draw[->,color = blue] (x2) -- (y2);
		\draw[->,color = orange,dashed] (x1) -- (y2);
		\end{tikzpicture}
		\caption{$H^{(0)}$}
	\end{subfigure}%
	\begin{subfigure}[b]{.25\linewidth}
	\centering
	\begin{tikzpicture}[
		> = stealth, shorten > = 1pt,
		auto,node distance = 1.5cm,
		semithick, line width= 1.5,
		]
		\node[c1] (x1) at(0,2) {$X_1$};
		\node[c1] (x2) at(2,2) {$X_2$};
		\node[c1] (y1) at(0,0) {$Y_1$};
		\node[c1] (y2) at(2,0) {$Y_2$};
		\draw[->,color = blue] (x1) -- (y1);
		\draw[->,color = blue] (x2) -- (y2);
		\draw[->,color = orange,dashed] (x2) -- (y1);
		\end{tikzpicture}
		\caption{$H^{(-j)}$}
	\end{subfigure}%
	\begin{subfigure}[b]{.25\linewidth}
	\centering
	\begin{tikzpicture}[
		> = stealth, shorten > = 1pt,
		auto,node distance = 1.5cm,
		semithick, line width= 1.5,
		]
		\node[c1] (x1) at(0,2) {$X_1$};
		\node[c1] (x2) at(2,2) {$X_2$};
		\node[c1] (y1) at(0,0) {$Y_1$};
		\node[c1] (y2) at(2,0) {$Y_2$};
		\draw[->,color = blue] (x1) -- (y1);
		\draw[->,color = blue] (x2) -- (y2);
		\end{tikzpicture}
		\caption{$H$}
	\end{subfigure}%
\caption{Toy example illustrating estimation of {\DAGS} from partial orderings in a two-layer network with $\X \prec \Y$, $\X=\{X_1,X_2\}$ and $\Y = \{Y_1, Y_2\}$. a) The full {\DAG} $G$ with the edges between layers drawn in blue and edges within each layer shown in gray. Here, the true causal relations are linear and the goal is to estimate the bipartite graph $H$ defined by edges $X_1 \ra Y_1$ and $X_2 \ra Y_2$; b) Estimate of $H$ using $H^{(0)}$ in \eqref{eq:H0}, obtained by regressing each $Y_j$, $j = 1, 2$ on $\{X_1, X_2\}$ using a linear model with $n=1,000$ observations, and drawing an edge if the corresponding coefficient is significant at $\alpha = 0.05$; the graph contains a false positive edge shown by an orange dashed arrow; c) estimate of $H$ using $H^{(-j)}$ in \eqref{eq:Hminusj} obtained by regressing each $Y_j$, $j = 1, 2$ on $\{X_1, X_2, Y_{-j}\}$ using a linear regression similar to (b); d) estimate of $H$ obtained using the proposed approach.}\label{fig:toy}
\end{figure}
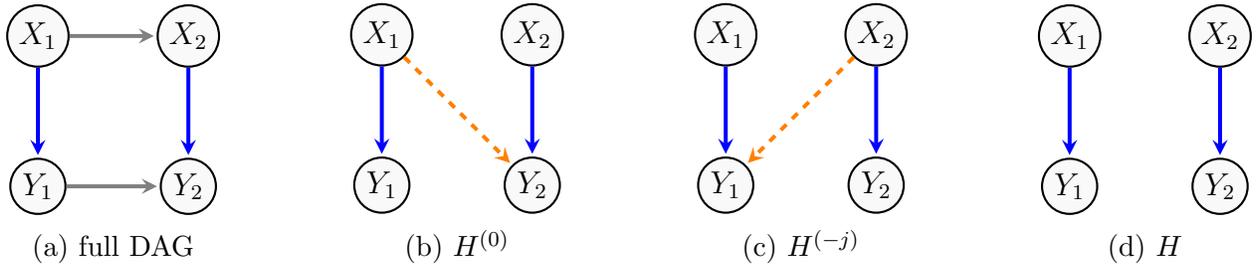

Figure~\ref{fig:toy}a shows a simple example of a two-layer {\DAG} $G$ with layers consisting of $p = q =2$ nodes. 
This example illustrates the difference between full causal orderings of variables in a {\DAG} 
and partial causal orderings: In this case, the graph admits two full causal orderings, namely $\mathcal{O}_1 = \{X_1,X_2,Y_1,Y_2\}$ and $\mathcal{O}_2 = \{X_1,Y_1,X_2,Y_2\}$; either of these orderings can be used to correctly discover the structure of the graph. Here, the partial ordering of the variables defined by the layering of the graph to sets $\X$ and $\Y$, i.e., $\X \prec \Y$, can be written as $\mathcal{O}' = \{ \{X_1,X_2\}, \{Y_1,Y_2\} \}$. This ordering determines that $X_k$s should appear before $Y_{j}$s in the causal ordering but does not restrict the ordering of $\{X_1,X_2\}$ and $\{Y_1,Y_2\}$.
In this example, only $\mathcal{O}_1$ is consistent with the partial ordering $\mathcal{O}'$. Moreover, $\mathcal{O}'$ is also consistent with $\{X_2,X_1,Y_2,Y_1\}$, which is not a valid causal ordering of $G$, indicating that a partial causal order is not sufficient for learning the full \DAG.

\subsection{Failure of Simple Algorithms}
\label{sec:challenge}
In this section, we discuss the challenges of estimating the graph $H$ and why this problem cannot be solved using simple approaches. 
Since the partial ordering of nodes, $V_1 \prec V_2$, provides information about direction of causality between the two layers of the network, we focus on simple \emph{constraint-based} methods \citep{spirtes_causation_2001}, which learn the network edges based on conditional independence relationships among nodes. 
This requires conditional independence relations among variables to be compatible with the edges in $G$; formally, the joint probability distribution $\mathcal{P}$ needs to be \emph{faithful} to $G$ \citep{spirtes_causation_2001}. (As discussed in Section~\ref{sec:estimation}, our algorithm requires a weaker notion of faithfulness; however, for simplicity, we consider the classical notion of faithfulness in this section.)

Given the partial ordering of variables---which means that $Y_j$s cannot be parents of $X_k$s---one approach for estimating the bipartite graph $H$ is to draw an edge from $X_k$ to $Y_j$ whenever $Y_j$ is dependent on $X_k$ given all other nodes in the first layer, $\X_{-k} \equiv \{X_{k'}, k' \ne k\}$. Formally, denoting by $Y_j \indep X_k$ the conditional independence of two variables $Y_j$ and $X_k$, we define
\begin{equation}\label{eq:H0}
H^{(0)} \equiv \left\{ (k \ra j): Y_j \dep X_k \mid \X_{-k} \right\}, 
\end{equation}
to emphasize that the estimate is obtained without conditioning on any $Y_{j'} \ne Y_j$. 

Figure~\ref{fig:toy}b shows the estimated $H^{(0)}$ in the setting where true causal relationships are linear. 
In this setting, edges in $H^{(0)}$ represent nonzero coefficients in linear regressions of each $Y_j$ on its parents in $G$, $\pa_j$. 
It can be seen that, in this example, the true causal effects from $X_k$s to $Y_j$s are in fact captured in $H^{(0)}$; these are shown in solid blue lines in Figure~\ref{fig:toy}b. 
However, this simple example suggests that $H^{(0)}$ may include spurious edges, shown by orange dashed edges in the figure. The next lemma formalizes and generalizes this finding. The proof of this and other results in the paper are gathered in the Appendix~\ref{sec:proofs}. 
\begin{lem}\label{lemma:1}
Assume that $\mathcal{P}$ is faithful 
with respect to $G$. Let $H^{(0)}$ be the directed bipartite graph defined in \eqref{eq:H0}. Then, if $\{ X_1, \ldots X_p \} \prec \{ Y_1, \ldots Y_q \}$,
\begin{enumerate}
\item[i)] $X_k \ra Y_j \in H^{(0)}$ whenever $X_k \ra Y_j \in G$;
\item[ii)] for any path of the form $X_{k_0} \ra Y_{j_1} \ra \cdots \ra Y_{j_0}$ such that $X_{k_0} \ra Y_{j_0} \notin G$, $H^{(0)}$ will include a \emph{false positive} edge $X_{k_0} \ra Y_{j_0}$.
\end{enumerate}
\end{lem}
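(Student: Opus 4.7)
The proof proceeds by translating each statement about $H^{(0)}$ into a statement about conditional independence via the definition in \eqref{eq:H0}, and then into a statement about d-separation in $G$ via faithfulness. Once this translation is made, each direction reduces to exhibiting a single open path between $X_{k}$ and $Y_j$ given $\X_{-k}$.

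For part (i), the single edge $X_k \ra Y_j$ is itself a one-edge directed path between $X_k$ and $Y_j$. It has no interior vertex, hence nothing on it can be blocked by any conditioning set. Since $\X_{-k}$ excludes both endpoints $X_k$ and $Y_j$, the path is trivially open, giving d-connection of $X_k$ and $Y_j$ given $\X_{-k}$. Faithfulness then yields $Y_j \dep X_k \mid \X_{-k}$, and by the definition of $H^{(0)}$ the edge $X_k \ra Y_j$ lies in $H^{(0)}$.

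For part (ii), I would exhibit the assumed directed path $\pi: X_{k_0} \ra Y_{j_1} \ra \cdots \ra Y_{j_0}$ and show it is open given $\X_{-k_0}$. Since $\pi$ is directed, each interior vertex has one incoming and one outgoing arrow along $\pi$, so each is a non-collider. Because every interior vertex of $\pi$ is a $Y$-variable while $\X_{-k_0}$ contains only $X$-variables, no interior vertex lies in the conditioning set. Hence no vertex on $\pi$ blocks it, so $X_{k_0}$ is d-connected to $Y_{j_0}$ given $\X_{-k_0}$. Faithfulness yields $Y_{j_0} \dep X_{k_0} \mid \X_{-k_0}$, so the edge $X_{k_0} \ra Y_{j_0}$ appears in $H^{(0)}$; since it is absent from $G$ by hypothesis, it is a false positive.

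The main obstacle, insofar as there is one, is conceptual rather than technical: one must identify why conditioning on $\X_{-k_0}$ alone fails to block directed paths that pass through the second layer. The partial ordering $\X \prec \Y$ is precisely what permits such paths to consist of $Y$-vertices in the interior and what makes $\X_{-k_0}$ too small to block them. Thus both parts of the lemma follow from a single d-separation argument once the conditioning set is read off correctly from \eqref{eq:H0}.
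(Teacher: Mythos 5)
Your proposal is correct and follows essentially the same route as the paper: both parts are reduced to exhibiting a directed path from $X_k$ (resp.\ $X_{k_0}$) to $Y_j$ (resp.\ $Y_{j_0}$) whose interior vertices are all non-colliders lying in $\Y$, hence outside the conditioning set $\X_{-k}$, so the path is open and faithfulness converts the resulting d-connection into the conditional dependence defining membership in $H^{(0)}$. The paper phrases part (ii) contrapositively (any d-separator must contain one of $Y_{j_1},\ldots,Y_{j_m}$, which $\X_{-k_0}$ does not), but this is the same observation.
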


Lemma~\ref{lemma:1} suggests that if $G$ includes edges among $Y_j$s, failing to condition on $Y_j$s can result in falsely detected causal effects from $X_k$s to $Y_j$s. Thus, without knowledge of interactions among $Y_j$s, one may consider an estimator that corrects for both $X_{- k}$ and $Y_{- j}$ when trying to detect the casual effects of $X_k$ on $Y_j$; in other words, we may declare an edge from $X_k$ to $Y_j$ whenever $Y_j \dep X_k \mid \{ \X_{- k}, \Y_{- j}\}$. 
We denote the resulting estimate of $H$ as $H^{(-j)}$:
\begin{equation}\label{eq:Hminusj}
H^{(-j)} = \{ (k \to j): Y_j \dep X_k \mid \X_{-k} \cup \Y_{-j} \}.
\end{equation}

Unfortunately, as Figure~\ref{fig:toy}c shows, estimation based on this model may also include false positive edges. 
As the next lemma clarifies, the false positive edges in this case are due to the conditioning on common descendants of a pair of $X_k$ and $Y_j$ that are not connected in $G$, which is sometimes referred to as Berkson's Paradox \citep{pearl_causality_2009}.  
\begin{lem}\label{lemma:2}
Assume that $\mathcal{P}$ is faithful with respect to $G$, and let $H^{(-j)}$ be the directed bipartite graph $H^{(-j)}$  defined in \eqref{eq:Hminusj}. 
Then, if $\{ X_1, \ldots, X_q \} \prec \{ Y_1, \ldots Y_p \}$, for any $X_{k} \ra Y_{j} \in G$, $X_{k} \ra Y_{j} \in H^{(-j)}$.
Moreover, for any triplets of nodes $X_{k_0}$, $Y_{j_0}$ and $Y_{j_1}$ that form an open collider in $G$ \citep{pearl_causality_2009}, i.e.,
\begin{itemize}
\item[-] $X_{k_0} \ra Y_{j_1} \la Y_{j_0}$
\item[-] $X_{k_0} \nra Y_{j_0}$
\end{itemize}
$H^{(-j)}$ will include a \emph{false positive} edge from $X_{k_0}$ to $Y_{j_0}$.
\end{lem}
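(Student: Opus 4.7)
The plan is to reduce both claims to statements about d-separation via the faithfulness assumption, and then verify them by exhibiting an explicit active path between $X_{k_0}$ (respectively $X_k$) and $Y_{j_0}$ (respectively $Y_j$) under the conditioning set $\X_{-k} \cup \Y_{-j}$. The whole argument is structural: once d-connection is established, faithfulness converts it into the required conditional dependence.

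For the first claim, suppose $X_k \ra Y_j \in G$. The single directed edge between $X_k$ and $Y_j$ is itself a path of length one with no intermediate nodes. Since a path is blocked only by conditioning on a non-collider along it or by failing to condition on a collider (or a descendant thereof), and this path has no intermediate nodes whatsoever, no conditioning set disjoint from $\{X_k, Y_j\}$ can block it. In particular, $X_k$ and $Y_j$ are d-connected given $\X_{-k} \cup \Y_{-j}$, so faithfulness yields $Y_j \dep X_k \mid \X_{-k} \cup \Y_{-j}$, hence $(k \ra j) \in H^{(-j)}$.

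For the second claim, the hypothesis that $X_{k_0} \ra Y_{j_1} \la Y_{j_0}$ is an open collider in $G$ guarantees that both edges are present with $Y_{j_1}$ as the collider node, and that $X_{k_0} \ra Y_{j_0} \notin G$. Since $j_1 \neq j_0$, we have $Y_{j_1} \in \Y_{-j_0}$, so conditioning on $\X_{-k_0} \cup \Y_{-j_0}$ opens the collider at $Y_{j_1}$. The length-two path $X_{k_0} \ra Y_{j_1} \la Y_{j_0}$ then has its unique intermediate node unblocked, making the path active under $\X_{-k_0} \cup \Y_{-j_0}$. Hence $X_{k_0}$ and $Y_{j_0}$ are d-connected given this set, and faithfulness gives $Y_{j_0} \dep X_{k_0} \mid \X_{-k_0} \cup \Y_{-j_0}$. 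The resulting edge $(k_0 \ra j_0) \in H^{(-j)}$ is therefore a false positive since $X_{k_0} \nra Y_{j_0}$.

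The main subtlety, though minor, is the d-separation bookkeeping: one must verify that the conditioning set does not inadvertently block the short paths above by including a relevant non-collider, and one must cite faithfulness, rather than just the Markov property, to pass from d-connection to genuine conditional dependence. Since the exhibited paths have length at most two and their intermediate structure is fully determined by the hypotheses (no intermediate node in the first case, a single collider in the second), these checks are immediate. It is worth noting that the argument requires only the existence of one active path; other paths between $X_{k_0}$ and $Y_{j_0}$, if any, are irrelevant for establishing d-connection.
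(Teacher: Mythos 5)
Your proof is correct and follows essentially the same route as the paper's: the direct edge cannot be blocked by any conditioning set, and conditioning on the collider node $Y_{j_1}$ (which lies in $\Y_{-j_0}$) activates the path $X_{k_0} \ra Y_{j_1} \la Y_{j_0}$, with faithfulness converting d-connection into conditional dependence in both cases. Your version is, if anything, slightly more careful in spelling out the d-separation bookkeeping than the paper's.
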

\begin{rem}
Examining the proofs of Lemmas~\ref{lemma:1} and \ref{lemma:2}, if $C = 0$ in \eqref{eq:adjmat}, then $H^{(0)} = H^{(-j)} = H$. In other words, if $G$ does not include any edges among $Y_k$s, then both $H^{(0)}$ and $H^{(-j)}$ provide valid estimates of $H$.
\end{rem}
While false positive edges in $H^{(0)}$ are caused by failing to condition on necessary variables, the false positive edges in $H^{(-j)}$ are caused by conditioning on extra variables that are not part of the correct causal order of variables. More generally, the partial ordering information does not lead to a simple estimator that correctly identifies direct causal effects of exposures, $X_k \, (k = 1, \ldots, p)$, on outcomes, $Y_j \, (j = 1, \ldots, q)$. 

Building on the above findings, in the next section we first present a modification of the PC algorithm that leverages the partial ordering information. We then present a general framework for leveraging the partial ordering of variables more effectively, especially when the goal is to learn the graph $H$.

\section{Incorporating Partial Orderings into {\DAG} Learning}
\label{sec:method}

\subsection{A modified PC Algorithm}
Given an ideal test of conditional independence, we can estimate $H$ by first learning the skeleton of $G$, using the PC Algorithm \citep{kalisch_estimating_2007} and then orienting the edges in $H$ according to the partial ordering of nodes. However, such an algorithm uses the ordering information in a \textit{post hoc} way---the information is not utilized to learn the edges in $H$. 

Alternatively, we can incorporate the partial ordering directly into the PC algorithm. 
Recall that PC starts from a fully-connected graph and iteratively removes edges by searching for conditional independence relations that corresponds to graphical d-separations, to this end, it uses separating sets of increasing sizes, starting with sets of size 0 (i.e., no conditioning) and incrementally increasing the size as the algorithm progresses. 
To modify the PC algorithm, we use the fact that if two nodes $j$ and $k$ are non-adjacent in a DAG, then for any of their d-separators, say $S$, the set $S\cap \mathrm{an}(\{j,k\})$ is also a d-separator; see Chapter~6 of \cite{spirtes_causation_2001}.
Given a partial ordering of variables, 
this rule allows excluding from the PC search step any nodes that are in lower layers than nodes $j$ and $k$ under consideration. 
Then, under the assumptions of the PC algorithm, the above modification, which we refer to as PC+, enjoys the same population and sample  guarantees.

Given its construction, the partial ordering becomes more informative in PC+ as the number of layers increases:
in two-layer settings, for learning the edges in $H$, PC+ is exactly the same as PC, and cannot utilize the partial ordering. 
However, as the results in Section~\ref{sec:sim} show, PC+ remains ineffective in graphs with larger number of layers, motivating the development of the framework proposed in the next section.

\subsection{A new framework}

In this section, we propose a new framework for learning {\DAGS} from partial orderings. The proposed approach is motivated by two key observations in Lemmas~\ref{lemma:1} and \ref{lemma:2}: First,  the graphs $H^{(0)}$ and  $H^{(-j)}$ include all true causal relationships from nodes $X_k$ $(k=1, \ldots, p)$ in the first layer to nodes $Y_j$ $(j=1, \ldots, q)$ in the second layer. Second, both graphs may also include additional edges; $H^{0}$ due to not conditioning on parents of $Y_j$ in $\Y$, and $H^{-j}$ due to conditioning on common children of $X_k$ and $Y_j$.

Let $S^{(0)}_j:=\{k:(k\to j)\in H^{(0)}\}$ and $S^{(-j)}_j:=\{k:(k\to j)\in H^{(-j)}\}$. The next lemma, which is a direct consequence of Lemmas~\ref{lemma:1} and \ref{lemma:2}, characterizes the intersection of these two sets.
\begin{lem}\label{lemma:h0hjintersection}
    Let $G$ be a graph admitting the partial ordering $\X\prec \Y$. Then for any $j\in \Y$, and $k\in S^{(0)}_j\cap S^{(-j)}_j$, either $k\in\PA_j$, or $k$ satisfies the followings:
    \begin{itemize}
        \item there exists a path $k\to j'\to \cdots \to j$ with $j'\in \Y$; 
        \item $\mathrm{ch}_j\cap \mathrm{ch}_k\neq \varnothing$.
    \end{itemize}
\end{lem}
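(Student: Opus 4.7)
The plan is to use faithfulness to convert the conditional (in)dependence conditions defining $H^{(0)}$ and $H^{(-j)}$ into d-connection statements in $G$, and then, exploiting the partial ordering $\X \prec \Y$, to fully characterize the two neighborhoods $S^{(0)}_j$ and $S^{(-j)}_j$. The claim then drops out by intersecting the two characterizations.

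First I would show that $k \in S^{(0)}_j$ if and only if either $k \in \PA_j$ or there is a directed path $k \to j_1 \to \cdots \to j$ with all intermediate nodes in $\Y$. The sufficiency is immediate from Lemma~\ref{lemma:1}. For necessity, I would take any path between $X_k$ and $Y_j$ that is d-connecting given $\X_{-k}$ and examine the \emph{last} $\X$-node on it read from $X_k$. If this node is not $X_k$ itself, the edge leaving it toward the next node (which must be in $\Y$) is outgoing by the partial ordering; so this $\X$-node is a non-collider in the conditioning set and blocks the path, a contradiction. Hence only $\Y$-nodes appear after $X_k$, and since these and their descendants all lie outside $\X_{-k}$, no collider among them can be activated, forcing the path to be a directed chain of the stated form.

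Next I would establish that $k \in S^{(-j)}_j$ iff $k \in \PA_j$ or $\mathrm{ch}_{X_k} \cap \mathrm{ch}_{Y_j} \neq \varnothing$. Sufficiency again follows from Lemma~\ref{lemma:2}. For necessity, the conditioning set $\X_{-k} \cup \Y_{-j}$ contains every vertex of $G$ except the endpoints, so every intermediate node on a d-connecting path must be a collider. The key structural observation is that two consecutive intermediate nodes cannot both be colliders, since the edge between them would have to point toward both of its endpoints simultaneously. Consequently any d-connecting path has length at most two: length one gives $k \in \PA_j$, and length two has the form $X_k \to Z \leftarrow Y_j$, producing a common child which must lie in $\Y$ by the partial ordering.

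Combining the two characterizations yields the lemma: when $k \in S^{(0)}_j \cap S^{(-j)}_j$ with $k \notin \PA_j$, the first characterization supplies the directed path $k \to j' \to \cdots \to j$ through $\Y$, and the second supplies the common child witnessing $\mathrm{ch}_j \cap \mathrm{ch}_k \neq \varnothing$. The main obstacle is the consecutive-collider observation in the second step; aside from that, the argument is standard d-separation bookkeeping made tractable by the layering.
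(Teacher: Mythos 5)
Your proof is correct, and it is in fact more complete than what the paper itself provides: the paper gives no standalone proof of this lemma, asserting only that it is ``a direct consequence'' of Lemmas~\ref{lemma:1} and \ref{lemma:2}. Those two lemmas, as stated and proved, only establish the forward directions---that direct edges and directed paths through $\Y$ force membership in $H^{(0)}$, and that direct edges and open colliders force membership in $H^{(-j)}$. The present lemma needs the converses: that these are the \emph{only} ways to land in $S^{(0)}_j$ and $S^{(-j)}_j$. Your two exact characterizations supply precisely this. In (a), the last-$\X$-node argument is sound: any $X_m\neq X_k$ on the path has an outgoing edge into the following $\Y$-node by the layering, hence is a non-collider lying in the conditioning set $\X_{-k}$ and blocks the path; and since all descendants of $\Y$-nodes stay in $\Y$, no collider can be activated, forcing a directed chain $X_k\to Y_{j_1}\to\cdots\to Y_j$. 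In (b), the observation that every intermediate node of a path d-connecting given $\X_{-k}\cup\Y_{-j}$ must be a collider, together with the fact that two consecutive path nodes cannot both be colliders, correctly bounds the path length by two and yields the common child. One minor point of hygiene: the necessity directions you need rest on the Markov property (dependence implies d-connection), not on faithfulness; faithfulness enters only in the sufficiency directions, which the lemma's stated one-way implication does not actually require.
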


Lemma~\ref{lemma:h0hjintersection} implies that even though 
$H^{(0)}_j\cap H^{(-j)}_j$ may contain more edges than the true edges in $H$, these additional edges must be in some special 
configuration. 
Based on this insight, our approach examines the edges in $S^{(0)}_j\cap S^{(-j)}_j$, for each $j\in \Y$, 
to remove the spurious edges efficiently. 
Let $k \in S^{(0)}_j\cap S^{(-j)}_j$ and 
$(k \to j)\notin H$. Suppose, for simplicity, that conditional independencies are faithful to the graph $G$. Then, there must be some set of variables $\mathcal{Z} \subset\X \cup \Y$ such that $X_l \independent Y_j|\mathcal{Z}$. 
In general, to find such a set of variables, we will need to search among subsets of $\X \cup \Y$.
However, utilizing the partial ordering, we can significantly reduce the complexity of this search. 
Specifically, we can restrict the search to 
the \emph{conditional Markov blanket}, introduced next for general {\DAG}s.
\begin{defn}[Conditional Markov Blanket]
Let $v \in V$ and $U \subset V \setminus v$ be arbitrary nodes and subsets in a {\DAG} $G$. The \emph{conditional Markov blanket} of $v$ given $U$, denoted $\cmb_U(v)$, is the smallest set of nodes such that for any other set of nodes $W \subseteq V$,
\begin{equation}\label{eq:cmd_definition}
\PP{v \mid \cmb_U(v), U, W}= \PP{v \mid \cmb_U(v), U}.
\end{equation}
\end{defn}
Our new framework builds on the idea that by limiting the search to the conditional Markov blankets, \emph{given the nodes in the previous layer}, we can significantly reduce the search space and the size of conditioning sets. 
Moreover, the conditional Markov blanket can be easily inferred together with $H^{(0)}$ and $H^{(-j)}$. 
Coupled with the observations in Lemmas~\ref{lemma:1} and \ref{lemma:2}, especially the fact that conditioning on the nodes in the previous layer does not remove true causal edges, these reductions lead to improvements in both computational and statistical efficiency. 

The new framework is summarized in Algorithm~\ref{alg:framework}. 
The algorithm has two main steps: a  \emph{screening loop}, where supersets of relevant edges are identified by leveraging the partial ordering information, and a \emph{searching loop}, similar to the one in the PC algorithm, but tailored to searching over the conditional Markov blankets. 
To this end, we will next show that in order to learn the edges between any node in $\X$ and a node in $\Y$, it suffices to search over subsets of the conditional Markov blanket of nodes in $\Y$.

\begin{algorithm}[t]
	\caption{Learning between-layer edges from partial orderings (PODAG)}\label{alg:framework}
	\SetKwInOut{Input}{Input}
	\SetKwInOut{Output}{Output}
	\Input{Observations from variables $\X = \{X_1,\ldots,X_p\}$ and $\Y = \{Y_1,\ldots,Y_q\}$\\
}
	\Output{A set of edges $\widehat E_{\X \to \Y}$}
	\tcc{Screening loop}
	\For{$j\in \Y$}{
Infer
	${S}^{(0)}_j$, 
	${S}^{(-j)}_j$, and
	$\cmb_{\X}(j)$\;
	}
	$\widehat E\gets \left\{(k, j):j\in\Y,k\in S^{(0)}_j\cap S^{(-j)}_j\right\}$\;
		\tcc{Searching loop}
	\For{$\ell=0,1,\ldots$}{
		\For{$(j,k)\in \widehat E$}{
				\For{$T\subseteq \cmb_{\X}(j)$, $|T|=\ell$}{
					\lIf{$X_k\independent Y_j| \X_{	{S}^{(0)}_j\cap {S}^{(-j)}_j\setminus  \{k\}}\cup \Y_{T}$}{remove $(k,j)$  and break}
			}
		}
		\lIf{no edge can be removed}{break}
	}
	\Return $\widehat{E}$.
\end{algorithm}

\subsection{Graph Identification using the Conditional Markov Blanket}

The next lemma characterizes key properties of the conditional Markov blanket, which facilitate efficient learning of the graph $H$. 
Specifically, we show that if a distribution satisfies the intersection property of conditional independence---i.e., if 
$X\independent Y \mid Z \cup S$ and $ X\independent Z \mid Y \cup S$ then  $X\independent Y \cup Z \mid S$---then all Markov blankets and conditional Markov blankets are unique.

\begin{lem}\label{lemma:CMBproperties}
	Let $V$ be a set of random variables with joint distribution $\mathcal{P}$. Suppose
	the intersection property of conditional independence holds in $\mathcal{P}$.
	Then, for any variable $v\in V$, there exists
	a unique minimal Markov blanket $\mathrm{mb}(v)$. 
	Moreover, for any $U\subset V\setminus v$,
	$\cmb_U(v)$ is also uniquely defined as $\cmb_U(v)=\mathrm{mb}(v)\setminus U$. 
\end{lem}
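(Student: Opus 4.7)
The plan is to prove the two claims in sequence: first establish the existence and uniqueness of a minimal Markov blanket, then leverage that result to pin down $\cmb_U(v)$ as $\mathrm{mb}(v)\setminus U$.

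For the first claim, I would take two Markov blankets $M_1$ and $M_2$ of $v$ and show their intersection is again a Markov blanket, from which both existence (since $V$ is finite) and uniqueness of the minimum follow. Write the disjoint decomposition $V\setminus\{v\} = S\cup A\cup B\cup C$ with $S=M_1\cap M_2$, $A=M_1\setminus M_2$, $B=M_2\setminus M_1$, and $C$ the remainder. The Markov blanket property of $M_1$ gives $v\independent B\cup C \mid A\cup S$, and that of $M_2$ gives $v\independent A\cup C \mid B\cup S$. Decomposition extracts $v\independent B\mid A\cup S$ and $v\independent A\mid B\cup S$, at which point the intersection property assumed in the lemma yields $v\independent A\cup B\mid S$. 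Separately, weak union applied to $v\independent B\cup C \mid A\cup S$ delivers $v\independent C\mid A\cup B\cup S$, and a contraction with the previous conclusion produces $v\independent A\cup B\cup C\mid S$, so $S$ is a Markov blanket. Consequently, if $M_1$ and $M_2$ are both minimal, the Markov blanket $M_1\cap M_2$ sits inside each, forcing $M_1=M_2$; the same argument shows the unique minimal Markov blanket is contained in every Markov blanket of $v$.

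For the conditional part, I would set $M=\mathrm{mb}(v)$ and $\tilde M=M\setminus U$, and verify separately that $\tilde M$ satisfies the defining condition of $\cmb_U(v)$ and that any other set with that property contains $\tilde M$. Since $\tilde M\cup U = M\cup U$, the required equality $\mathbb{P}(v\mid \tilde M,U,W)=\mathbb{P}(v\mid \tilde M,U)$ for all $W\subseteq V$ reduces to $v\independent V\setminus(M\cup U\cup\{v\})\mid M\cup U$. This follows from $v\independent V\setminus(M\cup\{v\})\mid M$ by weak union, peeling the portion of $V\setminus(M\cup\{v\})$ that lies in $U$ into the conditioning set. For minimality, any candidate $\tilde M'$ satisfying the conditional Markov blanket condition means precisely that $\tilde M'\cup U$ is a Markov blanket of $v$; by the first part, $M\subseteq \tilde M'\cup U$, so $\tilde M=M\setminus U\subseteq \tilde M'$, establishing that $\tilde M$ is the unique smallest such set.

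The main obstacle I anticipate is bookkeeping: keeping the sets $S, A, B, C$ disjoint and tracking which graphoid axiom (decomposition, weak union, contraction, intersection) is invoked at each step. The intersection property is the only non-generic hypothesis and should appear exactly once, in the step that merges $v\independent B\mid A\cup S$ and $v\independent A\mid B\cup S$ into $v\independent A\cup B\mid S$; every other manipulation is a standard semi-graphoid axiom that holds for arbitrary probability distributions.
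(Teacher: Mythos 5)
Your proof is correct, and the first half is essentially the paper's argument: both use the intersection property to show that the intersection of two Markov blankets is again a Markov blanket (the paper compresses your ``weak union then contraction'' step into an appeal to ``the elementary formula of conditional independence,'' but the derivation is the same). The one genuine difference is in the conditional part. The paper proves the identity $\cmb_U(v)=\mathrm{mb}(v)\setminus U$ via two separate inclusions, $\mathrm{mb}(v)\subseteq \cmb_U(v)\cup U$ and $\cmb_U(v)\subseteq \mathrm{mb}(v)$, and the first of these invokes the intersection property a second time on a fresh decomposition. You instead run the intersection argument for \emph{arbitrary} (not just minimal) blankets, which yields the slightly stronger statement that $\mathrm{mb}(v)$ is contained in \emph{every} Markov blanket; you then observe that $\tilde M'$ satisfies the conditional-blanket condition exactly when $\tilde M'\cup U$ is an (unconditional) Markov blanket, so the conditional case reduces entirely to the unconditional one with only weak union and decomposition. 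This buys a more economical proof --- the intersection property appears exactly once --- and it sidesteps the paper's implicit assumption that a minimal $\cmb_U(v)$ exists before its formula is derived, since you construct the minimizer directly. One small bookkeeping point worth making explicit in a final write-up: when you conclude $\tilde M = M\setminus U\subseteq \tilde M'$ from $M\subseteq \tilde M'\cup U$, you are using that removing $U$ from both sides preserves the inclusion, i.e.\ $M\setminus U\subseteq(\tilde M'\cup U)\setminus U\subseteq \tilde M'$; this is fine but deserves a line.
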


We next define the faithfulness assumption needed for correct causal discovery from partial orderings. 
This assumption is trivially weaker than the general notion of strong faithfulness \citep{zhang_strong_2002}. 
\begin{defn}[Layering-adjacency-faithfulness]\label{def:LAF}
	Let $\X \prec \Y$ be a layering of random variables $V$ with joint distribution $\mathcal{P}$. 
	We say $\mathcal{P}$ is \emph{layering-adjacency-faithful} to a DAG $G$ with respect to the layering $\X \prec \Y$ if for all $k\in \X$ and $j\in \Y$, if $k \to j \in G$, then $X_k$ and $Y_j$ are (i) dependent conditional on
	$\X_{-k}$; and (ii) dependent conditional on $\X_{-k}\cup T$ for any $T\subseteq \Y_{-j}$.
\end{defn}

Our main result, given below, describes how conditional Markov blankets can be used to effectively incorporate the knowledge of partial ordering into {\DAG} learning and reduce the computational cost of learning causal effects of $\X_k$s on $\Y_j$s. 

\begin{thm}\label{theorem:main}
	For a probability distribution that is Markov and layering-adjacency-faithful with respect to $G$,  a pair of nodes $k\in \X$ and $j\in \Y$ are  non-neighbor in $G$ if and only if there exist a set $T\subseteq \cmb_{\X}(j)$ such that 
	$X_k\independent Y_j|\X_{\big(S_j^{(0)}\cap S^{(-j)}_j\big)\setminus k}\cup \Y_T$.
\end{thm}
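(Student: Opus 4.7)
The statement is an iff, so I would prove each direction separately, handling the harder forward implication by explicit d-separation analysis and the backward one via the faithfulness assumption.

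For the backward direction ($\Leftarrow$), I would argue by contrapositive: suppose $k\to j\in G$. By Lemmas~\ref{lemma:1} and~\ref{lemma:2}, $k\in S^{(0)}_j\cap S^{(-j)}_j$, so $k$ is excluded from the conditioning set $\X_{(S^{(0)}_j\cap S^{(-j)}_j)\setminus k}$ by construction. Since $T\subseteq \cmb_\X(j)\subseteq \Y_{-j}$, layering-adjacency-faithfulness (part (ii)) directly yields $X_k\dep Y_j\mid \X_{-k}\cup \Y_T$. The remaining step is to reduce the conditioning set on the $\X$ side from $\X_{-k}$ to $\X_{(S^{(0)}_j\cap S^{(-j)}_j)\setminus k}$ while preserving dependence. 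I would do this by characterizing the $X_{k'}$'s with $k'\notin S^{(0)}_j\cap S^{(-j)}_j$ via their defining conditional independencies and showing, using graphoid axioms, that they are conditionally irrelevant for the $(X_k,Y_j)$ relation.

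For the forward direction ($\Rightarrow$), assume $k\in\X$ and $j\in\Y$ are non-adjacent. I would propose $T=\cmb_\X(j)$ as the candidate; by Lemma~\ref{lemma:CMBproperties}, $\Y_T$ equals $\mathrm{mb}(Y_j)\cap \Y$. Then the conditioning set $\X_{(S^{(0)}_j\cap S^{(-j)}_j)\setminus k}\cup \Y_T$ contains $\PA_{Y_j}$, since any $X$-parent of $Y_j$ is in $S^{(0)}_j\cap S^{(-j)}_j$ by Lemmas~\ref{lemma:1} and~\ref{lemma:2}, and any $Y$-parent lies in $\cmb_\X(j)$. Because $X_k$ is not a descendant of $Y_j$ (by the layering $\X\prec \Y$), $\PA_{Y_j}$ alone d-separates $X_k$ from $Y_j$. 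The task is then to verify that adding the remaining conditioning variables (extra $X$'s in $(S^{(0)}_j\cap S^{(-j)}_j)\setminus \PA_{Y_j}$, and $Y$-children/$Y$-spouses in $\cmb_\X(j)\setminus \PA_{Y_j}$) does not open any previously blocked path between $X_k$ and $Y_j$. By the Markov property, the resulting d-separation translates into the required conditional independence.

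The main obstacle is verifying this non-creation of open paths in the augmented conditioning set. I would proceed by a case analysis on the possible paths from $X_k$ to $Y_j$, using that (a) any newly conditioned $Y$-child opens a collider only if the remaining path leaves $Y_j$ via an unblocked spouse, which is either in $\PA_{Y_j}$ or itself handled by the $\X$ conditioning, and (b) the layering $\X\prec \Y$ prevents colliders in $\X$ from being activated into $(X_k,Y_j)$-connecting paths since all descendants of such colliders that matter must sit above $Y_j$ or be blocked by $\PA_{Y_j}$. Ties between $S^{(0)}_j$ and $S^{(-j)}_j$ (as hinted by Lemma~\ref{lemma:h0hjintersection}) are crucial here, since they rule out exactly the $X$-spouse configurations that would otherwise leak through a conditioned $Y$-child. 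Combining these arguments with the uniqueness of minimum Markov blankets from Lemma~\ref{lemma:CMBproperties} completes the d-separation verification and hence the proof.
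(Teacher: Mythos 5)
Your backward direction is essentially the paper's argument: show $k\in S^{(0)}_j\cap S^{(-j)}_j$ for any true edge, pass between the conditioning sets $\X_{(S^{(0)}_j\cap S^{(-j)}_j)\setminus k}$ and $\X_{-k}$ via the intersection/graphoid properties, and invoke part (ii) of layering-adjacency-faithfulness. That half is sound, though you only sketch the graphoid step.

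The forward direction contains a genuine error: the candidate $T=\cmb_{\X}(j)$ does not work. By Lemma~\ref{lemma:CMBproperties}, $\cmb_{\X}(j)=\mathrm{mb}(j)\setminus \X$, which contains the children of $Y_j$ in $\Y$. If $X_k$ and $Y_j$ are non-adjacent but share a child $Y_c\in\Y$ (the open-collider configuration of Lemma~\ref{lemma:2}), the length-two path $X_k\ra Y_c\la Y_j$ becomes d-connecting once $Y_c$ is conditioned on, and since $Y_c$ is the only interior vertex of that path, no further conditioning can re-block it; so $X_k \dep Y_j\mid \X_{(S^{(0)}_j\cap S^{(-j)}_j)\setminus k}\cup \Y_{\cmb_{\X}(j)}$. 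Your case (a) does not cover this, because the path leaves $Y_j$ directly into the conditioned collider --- there is no intermediate spouse for the $\X$-conditioning to handle. Worse, Lemma~\ref{lemma:h0hjintersection} says the non-adjacent pairs that survive screening \emph{necessarily} have such a common child, so your candidate fails on exactly the pairs the searching loop must eliminate. The paper instead takes $T=\PA_j\cap\Y$, which is a (generally proper) subset of $\cmb_{\X}(j)$: since all of $\X$ are non-descendants of $Y_j$ under the layering, the local Markov property gives $Y_j\independent \{X_k\}\cup W \mid \PA_j$ with $W=\X_{(S^{(0)}_j\cap S^{(-j)}_j)\setminus k}\setminus \PA_j$, and weak union then yields $X_k\independent Y_j\mid \X_{(S^{(0)}_j\cap S^{(-j)}_j)\setminus k}\cup \Y_{\PA_j\cap\Y}$ directly --- no case analysis over reopened paths is needed.
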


It follows directly from Theorem~\ref{theorem:main} that the proposed framework in  
Algorithm~\ref{alg:framework}---i.e., taking the intersection of
$H^{(0)}$ and $H^{(-j)}$ and then searching within conditional Markov blankets to remove additional edges---recovers the correct bipartite DAG $H$. 
\begin{cor}
    Algorithm~\ref{alg:framework} correctly identifies direct causal effects of $X_k$s on $Y_j$s. 
\end{cor}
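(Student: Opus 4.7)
The plan is to derive the corollary as a direct bookkeeping consequence of Theorem~\ref{theorem:main}, combined with the initial-inclusion guarantees supplied by Lemmas~\ref{lemma:1} and \ref{lemma:2}. Concretely, I would track two invariants across the two loops of Algorithm~\ref{alg:framework}: (a) no true edge of $H$ is ever absent from $\widehat E$, and (b) after the searching loop terminates, no spurious edge remains in $\widehat E$. Together these give $\widehat E = H$.

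First, after the screening loop, I would argue that $H \subseteq \widehat E$. For any $X_k \to Y_j \in G$, part (i) of Lemma~\ref{lemma:1} gives $k \in S_j^{(0)}$, and the analogous statement at the top of Lemma~\ref{lemma:2} gives $k \in S_j^{(-j)}$. Hence $k \in S_j^{(0)} \cap S_j^{(-j)}$ and so $(k,j) \in \widehat E$ at the initialization of the searching loop. This rules out false negatives before any conditional-independence test is performed.

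Next I would verify that the searching loop removes every false positive while preserving every true edge. Fix an edge $(k,j) \in \widehat E$ with $X_k$ not a parent of $Y_j$ in $G$. By the forward direction of Theorem~\ref{theorem:main}, there exists $T \subseteq \cmb_{\X}(j)$ with
\[
X_k \independent Y_j \,\Big|\, \X_{(S_j^{(0)} \cap S_j^{(-j)}) \setminus k} \cup \Y_T.
\]
Because the inner loop enumerates every subset $T$ of $\cmb_{\X}(j)$ in order of cardinality, this particular $T$ is eventually tested, and the edge is removed. Conversely, if $X_k \to Y_j \in G$, then the reverse direction of Theorem~\ref{theorem:main} (which is where layering-adjacency-faithfulness, Definition~\ref{def:LAF}, is used) guarantees that no such $T$ exists, so the test never fires and the edge is retained throughout.

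Finally I would handle termination and matching of conditioning sets: $\cmb_{\X}(j)$ is finite, so only finitely many subsets are tested per edge, and the loop terminates once a full sweep removes no edge. The conditioning set used inside the algorithm, $\X_{(S_j^{(0)} \cap S_j^{(-j)}) \setminus k} \cup \Y_T$, matches verbatim the one appearing in Theorem~\ref{theorem:main}, so the translation between theorem and algorithm is immediate. The only mildly delicate point is confirming that removing earlier edges during the sweep does not invalidate the conditioning sets used for later edges — but the conditioning sets are defined in terms of the fixed sets $S_j^{(0)}$, $S_j^{(-j)}$, and $\cmb_{\X}(j)$ computed once in the screening loop, not in terms of the evolving $\widehat E$, so no such interference occurs. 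Because all of the conceptual heavy lifting has already been done in Theorem~\ref{theorem:main} and Lemmas~\ref{lemma:1}--\ref{lemma:h0hjintersection}, I do not anticipate any real obstacle; the proof is a short reduction.
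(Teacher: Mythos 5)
Your proposal is correct and follows essentially the same route as the paper, which states this corollary as an immediate consequence of Theorem~\ref{theorem:main} without a separate proof: screening retains all true edges (the paper derives this directly from layering-adjacency-faithfulness inside the proof of Theorem~\ref{theorem:main}, rather than citing Lemmas~\ref{lemma:1}--\ref{lemma:2}, which assume full faithfulness), and the two directions of the theorem's equivalence handle removal of spurious edges and retention of true ones in the searching loop. Your additional remarks on termination and on the conditioning sets being fixed by the screening loop are accurate but not needed beyond what the theorem already supplies.
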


Next, we show that instead of inferring $H^{(0)}$ and $H^{(-j)}$ separately and taking their interception, 
we can use $H^{(0)}$ to infer $H^{(0)}\cap H^{(-j)}$ directly.
More specifically, noting that Algorithm~\ref{alg:framework} only relies on  $S^{(0)}_j\cap S^{(-j)}_j$ and $\cmb_{\X}(j)$ for each $j\in\Y$, the next lemma shows that given $S^{(0)}_j$, we can learn $S^{(0)}_j\cap S^{(-j)}_j$ and $\cmb_{\X}(j)$ without having to learn $S^{(-j)}_j$ separately. 
This implies that Algorithm~\ref{alg:framework} need not learn the unconditional Markov blankets, but only the conditional Markov blankets. 

\begin{lem}\label{lemma:screening}
    The followings hold for each $j\in \Y$: 
    \begin{itemize}
    \item $S^{(0)}_j = \Big\{k\in\X : X_k\not\independent Y_j \mid \X_{-k}\Big\}$,
        \item  $S^{(0)}_j\cap S^{(-j)}_j=\left\{k\in S^{(0)}_j:X_k\not\independent Y_j \mid \X_{S^{(0)}_j\setminus k}\cup \Y_{-j}\right\}$,
        \item $\cmb_{\X}(j)=\left\{
        \ell\in\Y_{-j}: Y_j\not\independent Y_\ell|\X_{S^{(0)}_j}\cap \Y_{-\{j,\ell\}}
        \right\}.$
    \end{itemize}
\end{lem}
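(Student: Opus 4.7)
The plan is to prove the three identities in sequence, leveraging the definitions together with Markov blanket uniqueness (Lemma~\ref{lemma:CMBproperties}) and the graphoid axioms (in particular the intersection property of conditional independence) that it entails. The first identity is essentially a tautology: by construction $S^{(0)}_j = \{k : (k \to j) \in H^{(0)}\}$, and $H^{(0)}$ is defined in \eqref{eq:H0} via $Y_j \dep X_k \mid \X_{-k}$, so unpacking yields the claim.

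For the second identity, the essential step is to prove that for any $k \in S^{(0)}_j$,
\[
X_k \independent Y_j \mid \X_{-k} \cup \Y_{-j} \iff X_k \independent Y_j \mid \X_{S^{(0)}_j \setminus k} \cup \Y_{-j}.
\]
The key observation is that every $k' \in \X \setminus S^{(0)}_j$ satisfies $X_{k'} \independent Y_j \mid \X_{-k'}$ by the first identity. Applying the intersection property iteratively, one can promote these single-index independencies to the joint statement that $\X \setminus \X_{S^{(0)}_j}$ is conditionally independent of $Y_j$ given $\X_{S^{(0)}_j}$. Combining this with weak union and contraction, I would argue that adding or removing the variables outside $S^{(0)}_j \cup \{k\}$ from the conditioning set leaves the conditional independence of $X_k$ and $Y_j$ unchanged. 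The stated identity for $S^{(0)}_j \cap S^{(-j)}_j$ then follows by taking the contrapositive and intersecting with $S^{(0)}_j$.

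For the third identity, I would first invoke Lemma~\ref{lemma:CMBproperties} to obtain $\cmb_{\X}(j) = \mathrm{mb}(Y_j) \setminus \X$, so that $\ell \in \cmb_{\X}(j)$ iff $Y_\ell \not\independent Y_j \mid V \setminus \{Y_j, Y_\ell\}$. An argument parallel to that for the second identity then lets us strip the $\X$-variables outside $S^{(0)}_j$ from this conditioning set: they are jointly independent of $Y_j$ given what remains, so dropping them preserves the conditional (in)dependence of $Y_j$ and $Y_\ell$. This yields the characterization in terms of the smaller conditioning set $\X_{S^{(0)}_j} \cup \Y_{-\{j, \ell\}}$.

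The main technical obstacle is the careful bookkeeping required when iterating the intersection property in the presence of mixed conditioning sets containing both $\X$- and $\Y$-variables. While the intuition---that $\X$-variables outside $S^{(0)}_j$ are irrelevant to $Y_j$---is immediate from the first identity, formally pushing this through when the target conditioning set also contains $\Y_{-j}$ or $\Y_{-\{j,\ell\}}$ requires a careful combination of intersection, weak union, and contraction, which is where the abstract Markov blanket machinery of Lemma~\ref{lemma:CMBproperties} becomes essential rather than merely convenient.
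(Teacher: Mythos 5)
The paper does not actually print a proof of Lemma~\ref{lemma:screening}, so there is no argument of the authors' to compare yours against; judged on its own terms, your first bullet is indeed definitional, but the derivation you sketch for the second and third bullets has a genuine gap. Your reduction rests on promoting the single-variable independencies $X_{k'}\independent Y_j\mid \X_{-k'}$ for $k'\notin S^{(0)}_j$ to the joint statement $Y_j\independent \X_{N}\mid \X_{S^{(0)}_j}$ with $N=\X\setminus S^{(0)}_j$ (fine, by iterated intersection), and then using it to insert or delete $\X_N$ from conditioning sets that \emph{also contain} $\Y_{-j}$ or $\Y_{-\{j,\ell\}}$. That step needs $Y_j\independent \X_N\mid \X_{S^{(0)}_j}\cup\Y_{-j}$, which does not follow from $Y_j\independent \X_N\mid \X_{S^{(0)}_j}$ by any combination of intersection, weak union and contraction, and is false in general: adjoining $\Y_{-j}$ can activate colliders, which is exactly the Berkson's-paradox phenomenon the paper isolates in Lemma~\ref{lemma:2}. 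Concretely, take $\X=\{X_1,X_2\}$, $\Y=\{Y_1,Y_2,Y_3\}$ with edges $X_1\to X_2$, $X_1\to Y_2$, $Y_2\to Y_1$, $Y_1\to Y_3$, $X_2\to Y_3$, and $j=1$. Then $S^{(0)}_1=\{1\}$ and $Y_1\independent X_2\mid X_1$, yet $Y_1\dep X_2\mid \{X_1\}\cup\Y_{-1}$ through the collider $Y_3$. Moreover $X_1\independent Y_1\mid \X_{-1}\cup\Y_{-1}$ (so $1\notin S^{(-1)}_1$ and the left-hand side of the second bullet is empty), while $X_1\dep Y_1\mid \X_{S^{(0)}_1\setminus 1}\cup\Y_{-1}=\Y_{-1}$ via the open path $X_1\to X_2\to Y_3\leftarrow Y_1$ (so the right-hand side is $\{1\}$). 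So the equivalence you assert---that removing the $\X$-variables outside $S^{(0)}_j\cup\{k\}$ from the conditioning set leaves the (in)dependence unchanged---fails, and with it the claimed equality; the same mechanism produces strict inclusions in the third bullet.

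What survives, and what the algorithm actually needs (cf.\ Lemma~\ref{lem:superset}), is the containment that each right-hand side is a \emph{superset} of the corresponding left-hand side. That direction is best argued graphically rather than axiomatically: by Lemma~\ref{lemma:h0hjintersection}, $k\in S^{(0)}_j\cap S^{(-j)}_j$ forces $k\in\PA_j$ or $\CH_k\cap\CH_j\neq\varnothing$, and in either case the edge $X_k\to Y_j$, respectively the collider path $X_k\to W\leftarrow Y_j$ with $W\in\Y_{-j}$, remains open given $\X_{S^{(0)}_j\setminus k}\cup\Y_{-j}$; similarly, every $\ell\in\cmb_{\X}(j)=\mathrm{mb}(j)\setminus\X$ is a parent, child, or co-parent of $Y_j$ inside $\Y$ and stays dependent given the reduced set. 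If you want the stated equalities, you would have to analyze which paths the \emph{smaller} conditioning sets leave open, and the example above shows they can leave open strictly more; so either prove only the superset containments (and invoke Lemma~\ref{lem:superset} and the searching loop to dispose of the extra elements), or flag the equality itself as requiring qualification.
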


Lemma~\ref{lemma:screening} characterises the conditional dependence sets used in Algorithm~\ref{alg:framework}.
Using these characterizations, we can cast the set inference problems in Algorithm~\ref{alg:framework} as variable selection problems in general regression settings. 
Let
\begin{equation}\label{eq:s1}
    S^{(1)}_j=\left\{Z\in \X_{S^{(0)}_j}\cup \Y_{-j}:Y_j \not\independent Z \mid \X_{S^{(0)}_j}\cup \Y_{-j}\setminus Z \right\}.
\end{equation}
Then, $S^{(0)}_j$ and $S^{(1)}_j$ can be obtained by selecting the relevant variables when regressing $Y_j$ onto $\X$ and $\X_{S^{(0)}_j}\cup \Y_{-j}$, respectively. We can then obtain the two sets used in Algorithm~\ref{alg:framework} from $S^{(1)}_j$:
\[ 
S^{(0)}_j\cap S^{(-j)}_j = S^{(1)}_j\cap \X 
\quad \text{and} \quad
\cmb_{\X}(j) = S^{(1)}_j\cap \Y.
\]

There are multiple benefits to 
directly inferring  $H^{(0)}\cap H^{(-j)}$---i.e., by estimating $S^{(0)}_j$ and $S^{(1)}_j$---instead of separately inferring  $H^{(0)}$ and $H^{(-j)}$.
First, the graph  
$H^{(-j)}$ could be hard to estimate in high-dimensional settings, as learning $H^{(-j)}$ requires  performing tests conditioned on $p+q-2$ variables. 
In contrast, $S_j^{(1)}$ in \eqref{eq:s1} 
only requires performing tests conditioning on at most $\max\left\{p-1,\max_j|S^{(0)}_j|+q-2\right\}$ variables. Moreover, the target 
$H^{(0)}\cap H^{(-j)}$ is often sparse (see Lemma~\ref{lemma:h0hjintersection}) and can thus be efficiently learned in high-dimensional settings. 
In an extreme example, suppose each node in  $\X$ has exactly one outgoing edge into a node in $\Y$, and the first $q-1$ nodes in $\Y$ have as their common child $Y_q$. In this case, $H^{(-j)}$ is fully connected and hard to learn, 
whereas $H^{(0)}\cap H^{(-j)}$ only has $p$ edges.

The second advantage of directly inferring $H^{(0)}\cap H^{(-j)}$ is that, by using the conditional dependence formulation of sets as in
Lemma~\ref{lemma:screening}, we can show that even if the sets $S^{(0)}$ and 
$S^{(1)}_j$---consequently, $S^{(0)}_j\cap S^{(-j)}_j$ and 
$\cmb_{\X}(j)$---are not inferred exactly, the algorithm is still correct as long as no false negative errors are made.

\begin{lem}\label{lem:superset}
Algorithm~\ref{alg:framework} recovers exactly the true DAG $H$ if $S^{(0)}_j$ and $S^{(1)}_j$ are replaced with their arbitrary supersets. 
\end{lem}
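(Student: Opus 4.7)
The plan is to verify that each step of Algorithm~\ref{alg:framework} remains correct when $S^{(0)}_j$ and $S^{(1)}_j$ are replaced by arbitrary supersets $\tilde S^{(0)}_j$ and $\tilde S^{(1)}_j$. Writing $\tilde A_j := \tilde S^{(1)}_j \cap \X$ and $\tilde B_j := \tilde S^{(1)}_j \cap \Y$, the substitution yields $\tilde A_j \supseteq A_j := S^{(0)}_j \cap S^{(-j)}_j$ and $\tilde B_j \supseteq B_j := \cmb_{\X}(j)$ by Lemma~\ref{lemma:screening}. The modified algorithm then initializes $\widehat{E} = \{(k,j): k \in \tilde A_j\}$ in the screening loop and, in the searching loop, tests $X_k \independent Y_j \mid \X_{\tilde A_j \setminus \{k\}} \cup \Y_T$ over $T \subseteq \tilde B_j$.

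First I would check that after the screening loop $\widehat{E}$ still contains every true edge of $H$. By Lemmas~\ref{lemma:1} and~\ref{lemma:2}, every parent of $Y_j$ in $\X$ lies in both $S^{(0)}_j$ and $S^{(-j)}_j$, so $\{k: X_k \to Y_j \in H\} \subseteq A_j \subseteq \tilde A_j$, and hence $H \subseteq \widehat{E}$.

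Second, I would show that the searching loop removes exactly the spurious edges in $\widehat{E}$ by establishing the following extension of Theorem~\ref{theorem:main}: for any $\tilde A_j \supseteq A_j$ and $\tilde B_j \supseteq B_j$, the pair $(X_k, Y_j)$ is non-adjacent in $G$ if and only if there exists $T \subseteq \tilde B_j$ with $X_k \independent Y_j \mid \X_{\tilde A_j \setminus \{k\}} \cup \Y_T$. The reverse direction (soundness) uses the fact that a direct edge $X_k \to Y_j$ is d-connecting under any conditioning set excluding its endpoints, combined with layering-adjacency-faithfulness to rule out coincidental independence; the inclusion $\X_{\tilde A_j \setminus \{k\}} \subseteq \X_{-k}$ keeps us within the scope of Definition~\ref{def:LAF} when paired with the Markov property. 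For the forward direction (completeness), I would start from the d-separator $\X_{A_j \setminus \{k\}} \cup \Y_{T_0}$ supplied by Theorem~\ref{theorem:main} and augment $T_0$ by $Y$-nodes in $\tilde B_j$ that block any paths reopened by conditioning on the additional $\X$-variables in $\tilde A_j \setminus A_j$. The key structural fact, afforded by the layering $\X \prec \Y$, is that any collider $X_m \in \tilde A_j \setminus A_j$ activated on a path from $X_k$ to $Y_j$ forces that path to exit into $\Y$ through an $X \to Y$ edge; the resulting $Y$-node lies in $\cmb_{\X}(Y_j) = B_j \subseteq \tilde B_j$ and can serve as a blocker.

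The hardest step is formalizing this forward direction: verifying rigorously that every collider path reopened by conditioning on $\tilde A_j \setminus A_j$ admits a $Y$-blocker in $\tilde B_j$. A careful case analysis on path structures in the layered DAG---or, more cleanly, an argument using the moralized ancestral subgraph of $\{X_k, Y_j\}$ restricted to the two layers---should formalize this intuition and complete the proof.
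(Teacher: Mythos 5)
Your screening step and your no-false-negative argument match the paper's. The genuine gap is in the completeness direction---showing that every spurious edge in $\widehat E$ is in fact removed. Your plan is to start from the separator $\X_{A_j\setminus\{k\}}\cup\Y_{T_0}$ supplied by Theorem~\ref{theorem:main} and then iteratively augment $T_0$ with $\Y$-blockers for paths reopened by the extra conditioning variables in $\tilde A_j\setminus A_j$. The structural fact this leans on---that a reopened path must exit into $\Y$ through an edge whose $\Y$-endpoint lies in $\cmb_{\X}(j)$---is false in general: on a path such as $X_k\to X_m\leftarrow X_l\to Y_{j'}\to Y_{j''}\to Y_j$ with $X_m\in\tilde A_j\setminus A_j$, the exit node $Y_{j'}$ is merely an ancestor of $Y_j$ and need not belong to $\mathrm{mb}(j)\cap\Y=\cmb_{\X}(j)$, so it is not guaranteed to be available as a blocker (the node that is available is the parent $Y_{j''}$, not the exit node). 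Moreover, each blocker you add can itself activate new collider paths, so the augmentation procedure is not obviously terminating; you explicitly leave this ``hardest step'' unformalized, and as sketched it does not go through.

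The paper avoids the path analysis entirely by exhibiting the separator in one shot: take $T=\pa_j\cap\Y$. Since $\pa_j\cap\X\subseteq S^{(0)}_j\cap S^{(-j)}_j\subseteq\tilde A_j$ and $\pa_j\cap\Y\subseteq\mathrm{mb}(j)\cap\Y=\cmb_{\X}(j)\subseteq\tilde B_j$, the conditioning set $\X_{\tilde A_j\setminus\{k\}}\cup\Y_{\pa_j\cap\Y}$ is reachable by the searching loop, contains all of $\pa_j$ (as $k\notin\pa_j$ for a nonadjacent pair), and consists only of nondescendants of $Y_j$, because every node of $\X$ is a nondescendant of $Y_j$ under the layering $\X\prec\Y$. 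The local Markov property $Y_j\independent \mathrm{nd}(j)\setminus\pa_j\mid\pa_j$ together with weak union then yields $X_k\independent Y_j\mid\X_{\tilde A_j\setminus\{k\}}\cup\Y_{\pa_j\cap\Y}$ for an \emph{arbitrary} superset $\tilde A_j\subseteq\X$, which is exactly the uniformity over supersets that the lemma requires. Replacing your augmentation argument with this one-line separator closes the gap.
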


We refer to the framework proposed in Algorithm~\ref{alg:framework} for \emph{learning {\DAGS} from partial ordering} the PODAG framework. 
In the next section, we discuss specific algorithms for learning direct casual effects of $X_k$s on $Y_j$s. These algorithms utilize the fact that given the partial ordering of nodes in $G$, the conditional Markov blanket of each node $j \in \Y$ can be found efficiently by testing for conditional dependence of $Y_j$ and $Y_{j'}, j' \ne j$ after adjusting for the effect of the nodes in the first layer.

\section{Learning High-Dimensional {\DAGS} from Partial Orderings}\label{sec:estimation}
Coupled with a consistent test of conditional independence, the general framework of Section~\ref{sec:method} can be used to learn {\DAGS} from any layering-adjacent-faithful probability distribution (Definition~\ref{def:LAF}). This involves two main tasks for each for $j\in \Y$: (a) obtaining a consistent estimate of the set of relevant variables, $S^{(0)}_j$, and (b) obtaining a consistent estimate of the conditional Markov blanket, $\mathrm{cmb}_{X}(j)$, and the conditioning set in top layer, $S^{(0)}_j\cap S^{(-j)}_j$. By Lemma~\ref{lemma:screening}, this task is equivalent to learning $S^{(1)}_j$ in \eqref{eq:s1}. 
In this section, we illustrate these steps by presenting  efficient algorithms that leverage the partial orderings information to learn bipartite {\DAGS} corresponding to linear structural equation models (SEMs). 
In this case, existing results on variable selection consistency of network estimation methods can be coupled with a proof similar to that of the PC Algorithm \citep{kalisch_estimating_2007} to establish consistency of the sample version of the proposed algorithm for learning high-dimensional {\DAGS} from partial orderings. 

Suppose, without loss of generality, that the observed random vector
$\W = \X \cup \Y =(W_1,\dots,W_{p+q})$ is centered. In a structural equation model, $\W$ then solves an equation system $
	W_j = f_{j}(\W_{\PA_j},\varepsilon_j)
	$ for $j=1,\ldots, p+q$,
where $\varepsilon_j$ are independent random variables with mean zero and
$f_{j}$ are unknown functions. 
In linear SEMs, each $f_{jk}$ is linear:
\begin{equation}\label{eqn:linearSEM}
	W_j = \sum_{k\in \PA_j}\beta_{jk}W_k + \varepsilon_j,\qquad j=1,\ldots, p+q.
	\end{equation}
Specialized to bipartite graphs,  Equation~\ref{eqn:linearSEM} can be written compactly as
\begin{equation}\label{eqn:linearmatrix}
\begin{pmatrix}
\X \\ \Y
\end{pmatrix} =\begin{pmatrix}
A&0\\B&C
\end{pmatrix}\begin{pmatrix}
\X \\ \Y
\end{pmatrix}+ \varepsilon,    
\end{equation}
where, as in \eqref{eq:adjmat}, $A$, $B$ and $C$ are $p\times p$, $q \times p$ and $q \times q$ coefficients matrices, respectively. 
Our main objective is to estimate the matrix $B$. To this end, we propose statistically and computationally efficient procedures for estimating the sets $S^{(0)}_j$ and $S^{(1)}_j$ for all $j\in \Y$, which, as discussed before, are the main ingredients needed in Algorithm~\ref{alg:framework}. We will also discuss how these procedures can be applied in high-dimensional settings, when $p,q \gg n$.

We next show that whenever the screening loop of Algorithm~\ref{alg:framework} is successful---that is, when it returns a supergraph of $H$---the searching loop can consistently recover the true $H$. 
We start by stating our assumptions, which are the same as those used to establish the consistency of the PC algorithm \citep{kalisch_estimating_2007}. 
The only difference is that our `faithfulness' assumption---Assumption~\ref{ass:gaussian_faith}---is weaker than the corresponding assumption for the PC algorithm. The consequences of this relaxation are examined in Section~\ref{sec:sim}. 

\begin{assm}[Maximum reach level]\label{ass:gaussian_reach}
	Suppose there exists some $0<b\leq 1$
	such that 
    $h_n:=\max_{j\in \Y}|\ADJ_j\cap \Y|=O(n^{1-b})$ and  $m_n=\max_{j\in \Y}|S^{0}_j\cap S^{(-j)}_j|=O(n^{1-b})$.
\end{assm}
\begin{assm}[Dimensions]\label{ass:gaussian_dim}
    The dimensions $p$, $q$ satisfies 
    $pq^{m_n+1}=O(\exp(c_0n^{\kappa}))$
    for some $0<c_0<\infty$  and $0\leq \kappa<1$.
\end{assm}
\begin{assm}[$\lambda$-strong layering-adjacency-faithfulness]\label{ass:gaussian_faith}
The distribution of $(\X,\Y)$ is multivariate Gaussian and the partial correlations satisfy
    \[
    \inf_{j\in\Y, k\in\X,T\subseteq \Y \setminus \{j\}}\Big\{\left|\rho\left(Y_j,X_k \mid \Y_T\cup \X_{-k}\right)\right|:  \rho\left(Y_j,X_k \mid \Y_T\cup \X_{-k}\right)\neq 0 \Big\}\geq c_n,
    \]
    \[
     \sup_{j\in\Y, k\in\X,T\subseteq \Y\setminus \{j\}}\Big\{\left|\rho\left(Y_j,X_k \mid \Y_T\cup \X_{-k}\right)\right|\Big\}\leq M<1 \text{ for some } M,
    \]
    where $c_n^{-1} = O(n^{d})$ for some $0< d < \frac{1}{2}\min(b,1-\kappa)$ with $\kappa$ defined in Assumption~\ref{ass:gaussian_dim} and $b$ as in Assumption~\ref{ass:gaussian_reach}.
\end{assm}
The next result establishes the consistency of the searching step. 
\begin{thm}[Searching step using partial correlation]\label{thm:search_linGaussSEM}
    Suppose Assumptions \ref{ass:gaussian_reach}--\ref{ass:gaussian_faith} hold.
	Let the event 
    $$\mathcal{A}\left(\widehat S^{(0)},\widehat S^{(1)}\right)=\left\{\forall j\in \Y : 
    \widehat S^{(0)}_j\supseteq  S^{(0)}_j, 
    \widehat S^{(1)}_j\supseteq  S^{(1)}_j, 
    |\widehat S^{(0)}_j\cap \widehat S^{(1)}_j|\leq n\right\}$$ 
    denote the success of the screening step. 
	Then, there exists some sequence of thresholds
	$\alpha_n\to 0$ as $n\to \infty$ such that the output $\widehat H$ of Algorithm~\ref{alg:framework} with test of partial correlation in  searching loop satisfies
	\[
	\PP{\widehat H= H|\mathcal{A}\left(\widehat S^{(0)},\widehat S^{(1)}\right)}
	= 1- O\left(\exp\left(-Cn^{1-2d}\right)\right).
	\]
\end{thm}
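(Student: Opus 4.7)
The plan is to mirror the Kalisch--B\"uhlmann consistency argument for the PC algorithm, adapted to the sample supersets produced by the screening step. On the event $\mathcal{A}$, the hypothesis of Lemma~\ref{lem:superset} is met, so the \emph{population} version of the searching loop run with $\widehat S^{(0)}_j$ and $\widehat S^{(1)}_j$ already recovers $H$ exactly. The sample algorithm differs only in that the oracle conditional independence decisions are replaced by thresholded partial correlation tests, so $\{\widehat H\neq H\}\cap \mathcal{A}$ is contained in the event that at least one sample partial correlation queried by the loop is misclassified relative to its population counterpart. The proof therefore reduces to a uniform error bound over a finite family of partial correlation tests.

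First I would control the size of every conditioning set and count the tests. Any test inside the searching loop conditions on $\X_{\widehat S^{(0)}_j\cap\widehat S^{(1)}_j\setminus\{k\}}\cup \Y_T$ with $T\subseteq \widehat{\cmb}_\X(j)=\widehat S^{(1)}_j\cap\Y$. Event $\mathcal{A}$ bounds the $\X$--part by $n$, while Assumption~\ref{ass:gaussian_reach} combined with the screening-success superset property gives $|\widehat{\cmb}_\X(j)|=O(n^{1-b})$, so in particular $|T|=O(n^{1-b})$. The total number of tests is at most $q\cdot \max_j|\{k:(k,j)\in \widehat E\}|\cdot 2^{O(n^{1-b})}$, which by Assumption~\ref{ass:gaussian_dim} is $\exp\bigl(O(n^\kappa)+O(n^{1-b})\bigr)$.

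Next, for each individual test I would apply the standard Gaussian partial correlation concentration bound: for any fixed $S$ with $|S|<n$,
\[
  \PP{\,\bigl|z(\hat\rho_{Y_j X_k\mid S})-z(\rho_{Y_j X_k\mid S})\bigr|>\gamma\,}\leq C_1(n-|S|)\exp\bigl(-C_2(n-|S|)\gamma^2\bigr),
\]
where $z(r)=\tfrac12\log\tfrac{1+r}{1-r}$ is the Fisher $z$--transform. Choosing $\alpha_n$ so that $z(\alpha_n)=z(c_n)/2$, Assumption~\ref{ass:gaussian_faith} guarantees a $z$--gap of order $c_n$ between the two decision regions (true zero vs.\ magnitude at least $c_n$), so any misclassified test forces a $z$--deviation of size at least $c_n/2$. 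A union bound then yields
\[
  \PP{\widehat H\neq H\mid \mathcal{A}}\leq \exp\bigl(O(n^\kappa)+O(n^{1-b})-C'nc_n^2\bigr),
\]
and since $c_n^{-1}=O(n^d)$ with $2d<\min(b,1-\kappa)$, the exponent $-C'nc_n^2\asymp -C'n^{1-2d}$ dominates both $n^\kappa$ and $n^{1-b}$, producing the announced $O(\exp(-Cn^{1-2d}))$ rate.

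The main obstacle I anticipate is verifying that every population partial correlation actually queried by the algorithm is either identically zero or at least $c_n$ in magnitude. Assumption~\ref{ass:gaussian_faith} is stated for conditioning sets of the shape $\X_{-k}\cup\Y_T$, whereas the algorithm conditions only on $\X_{\widehat S^{(0)}_j\cap\widehat S^{(1)}_j\setminus k}\cup\Y_T$, a strict subset on the $\X$--side. Transferring the lower bound to this smaller conditioning set should follow from Lemma~\ref{lemma:screening}: variables in $\X$ outside $S^{(0)}_j$ are conditionally independent of $Y_j$ given the remaining $\X$--variables, so they can be marginalized out of the partial correlation with $X_k$ without altering it. That reduction, together with the careful bookkeeping on the number of tests against Assumptions~\ref{ass:gaussian_reach}--\ref{ass:gaussian_dim}, is the delicate technical step; once in place, the remainder of the argument is a routine exponential-tail plus union-bound computation.
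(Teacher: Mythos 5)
Your overall strategy is the one the paper uses: on the screening-success event, Lemma~\ref{lem:superset} makes the population searching loop exact, so $\{\widehat H\neq H\}\cap\mathcal{A}$ is contained in the event that some queried partial-correlation test is misclassified, and the proof finishes with a per-test Fisher-$z$ concentration bound (Lemma~3 of Kalisch and B\"uhlmann) plus a union bound, checking that $nc_n^2\asymp n^{1-2d}$ dominates the log-cardinality of the family of tests. Your closing observation---that Assumption~\ref{ass:gaussian_faith} is stated for conditioning sets of the form $\X_{-k}\cup\Y_T$ while the loop conditions on a smaller $\X$-part---identifies a real subtlety that the paper's own proof glosses over, and your proposed fix via Lemma~\ref{lemma:screening} is the right one.

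The gap is in your count of the number of tests. You bound it by $2^{|\widehat{\cmb}_{\X}(j)|}$ per node and assert $|\widehat{\cmb}_{\X}(j)|=O(n^{1-b})$ ``by Assumption~\ref{ass:gaussian_reach} combined with the screening-success superset property.'' Neither source delivers this: Assumption~\ref{ass:gaussian_reach} bounds $|\ADJ_j\cap\Y|$ and $|S^{(0)}_j\cap S^{(-j)}_j|$, not $|\mathrm{mb}(j)\cap\Y|$ and certainly not the size of its \emph{estimate}; and the event $\mathcal{A}$ constrains $\widehat S^{(1)}_j$ only from below (it is a superset) and bounds only the $\X$-part $|\widehat S^{(0)}_j\cap\widehat S^{(1)}_j|$, leaving $|\widehat S^{(1)}_j\cap\Y|$ potentially as large as $q-1$. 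Your union bound is then over up to $2^{q-1}$ tests per node, which is not $O(\exp(c_0n^{\kappa}))$ and destroys the rate. The paper closes this by exploiting the level-wise structure of the searching loop: only subsets $T$ of size at most the current level are ever tested, the population loop terminates at level $m_n$, and, following Lemma~5 of Kalisch and B\"uhlmann, the sample loop terminates at the same level with high probability; hence the number of tests is $O\left(pq^{1+m_n'}\right)$, exactly the quantity that Assumption~\ref{ass:gaussian_dim} controls. This termination argument is the missing idea---it is the reason Assumption~\ref{ass:gaussian_dim} is phrased as a bound on $pq^{m_n+1}$---and without it your union bound does not close.
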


Theorem~\ref{thm:search_linGaussSEM} shows that, with high probability, the searching loop returns the correct graph as long as the screening loop makes no false negative errors. 
In the following sections, we discuss various options for the screening step, namely, the estimation of $S^{(0)}$ and $S^{(1)}$ defined in Lemma~\ref{lemma:screening}:
\begin{align*}
S^{(0)}_j &= \{k\in\X : X_k\not\independent Y_j \mid \X_{-k}\}, \\
S^{(1)}_j &= \{Z\in \X_{S^{(0)}_j}\cup \Y_{-j}:Y_j \not\independent Z \mid \X_{S^{(0)}_j}\cup \Y_{-j}\setminus Z \}.
\end{align*}

\subsection{Screening in low dimensions}\label{LDscreen}
Given multivariate Gaussian observations, 
the most direct way to identify the sets satisfying the conditional independence relations in $S^{(0)}_j$ and $S^{(1)}_j$ is to use 
the sample partial correlation $\hat\rho$
and reject the hypothesis of conditional independence when $|\hat\rho|>\xi$ for some suitable threshold $\xi$.
The next results establishes the screening guarantee for such an approach in low-dimensional settings. 

\begin{prop}[Screening with partial correlations]\label{lem:screen_pcor}
    Suppose Assumptions \ref{ass:gaussian_reach}--\ref{ass:gaussian_faith} hold and $n\gg p+q$. 
Let 
\begin{align*}
    \widehat S^{(0)}_j &= \left\{k\in\X:|\hat\rho(Y_j,X_k\mid\X_{-k})|>c_n/2\right\}, \\
    \widehat S^{(1)}_j &= \left\{Z\in\X_{S^{(0)}_j}\cup \Y_{-j}:|\hat\rho(Y_j,Z \mid \X_{S^{(0)}_j}\cup \Y_{-j}\setminus Z)|>c_n/2\right\}.
\end{align*}
Then, 
    $
    \PP{\mathcal{A}\left(\widehat S^{(0)},\widehat S^{(1)}\right)}=1-O\left(\exp\left(-Cn^{1-2d}\right)\right).
    $
\end{prop}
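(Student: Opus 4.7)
The plan is to reduce the claim to a union bound over concentration inequalities for sample partial correlations in the multivariate Gaussian setting. Under Assumption~\ref{ass:gaussian_faith}, every relevant population partial correlation $\rho$ satisfies either $|\rho|\geq c_n$ or $\rho=0$, and is uniformly bounded above by $M<1$; hence the threshold $c_n/2$ cleanly separates the two cases on the event $\{|\hat\rho-\rho|<c_n/2\}$. So it suffices to control the deviation of each sample partial correlation that enters the definitions of $\widehat S^{(0)}_j$ and $\widehat S^{(1)}_j$.

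First, I would invoke the standard concentration bound for sample partial correlations of multivariate Gaussians (the same ingredient used in \citealp{kalisch_estimating_2007}): for any conditioning set of size $s<n-4$,
\[
\PP{|\hat\rho-\rho|>\epsilon}\;\leq\; C_1(n-s)\exp\!\big(-C_2(n-s-4)\,\epsilon^2\big),
\]
with constants $C_1,C_2$ depending only on $M$. Specializing to $\epsilon=c_n/2$ and using $c_n^{-1}=O(n^d)$ yields a per-test failure probability of order $n\exp(-Cn^{1-2d})$. The assumption $n\gg p+q$ ensures that $n-s=\Theta(n)$ for every conditioning set that arises: those used to test membership in $\widehat S^{(0)}_j$ have size $p-1$, and those used for $\widehat S^{(1)}_j$ have size at most $|S^{(0)}_j|+q-2\leq p+q-2$.

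Next, I would union bound over all tests. The inclusion $\widehat S^{(0)}_j\supseteq S^{(0)}_j$ for every $j\in\Y$ follows whenever $|\hat\rho(Y_j,X_k\mid\X_{-k})-\rho(Y_j,X_k\mid\X_{-k})|<c_n/2$ for each pair $(j,k)\in\Y\times\X$, which is $pq$ events. Similarly, $\widehat S^{(1)}_j\supseteq S^{(1)}_j$ reduces to at most $q(p+q-1)$ further such events. By Assumption~\ref{ass:gaussian_dim}, $\log(pq)=O(n^{\kappa})$ with $\kappa<1-2d$ (which follows from the constraint $d<\tfrac{1}{2}(1-\kappa)$ in Assumption~\ref{ass:gaussian_faith}); the dimensional prefactor is therefore absorbed into the exponential tail and the overall failure probability is $O\!\big(\exp(-C'n^{1-2d})\big)$.

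Finally, the cardinality constraint $|\widehat S^{(0)}_j\cap\widehat S^{(1)}_j|\leq n$ is automatic here: $\widehat S^{(0)}_j\cap\widehat S^{(1)}_j\subseteq\X$ has cardinality at most $p\leq p+q\ll n$. The main piece of book-keeping — and really the only point requiring care — is aligning the exponents so that the effective sample size $n-s$ in the concentration bound remains $\Theta(n)$ and the dimensional factor does not swallow the tail. Both are guaranteed by the combination of Assumptions~\ref{ass:gaussian_dim}--\ref{ass:gaussian_faith} with $n\gg p+q$, so the argument closes cleanly.
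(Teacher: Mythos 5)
Your proposal is correct and follows essentially the same route as the paper: both apply the Kalisch--B\"uhlmann concentration bound for sample partial correlations with threshold $c_n/2$, union bound over the $pq + q(p+q-1)$ tests, and use $n\gg p+q$ to keep the effective sample size of order $n$ so the tail $\exp(-C'n^{1-2d})$ absorbs the prefactor. Your explicit verification of the cardinality constraint $|\widehat S^{(0)}_j\cap\widehat S^{(1)}_j|\leq n$ is a small addition the paper's proof leaves implicit, but the argument is otherwise the same.
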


\subsection{Screening in high dimensions}\label{HDscreen}
The conditioning sets in $S^{(0)}_j$ and $S^{(1)}_j$---namely, $\X_{-k}$ and $\X_{S^{(0)}_j}\cup \Y_{-j}\setminus Z$---involve large number of variables. Therefore, in high dimensions, when $p,q \gg n$, it is not feasible to learn $S^{(0)}_j$ and $S^{(1)}_j$ using partial correlations. 
To overcome this challenge, and using the insight from Lemma~\ref{lemma:screening}, we treat the screening problem as selection of non-zero coefficients in linear regressions. 
This allows us to use tools from high-dimensional estimation, including various regularization approaches, for the screening step. 

The success of the screening step requires the event $\mathcal{A}\left(\widehat S^{(0)}_j,\widehat S^{(1)}_j\right)$---defined in Theorem~\ref{thm:search_linGaussSEM}---to hold with high probability. That is, we need to identify supersets of $S^{(0)}_j$ and $S^{(1)}_j$ such that $\left|\widehat S^{(0)}_j \cap \widehat S^{(1)}_j\right| \le n$. 
A simple approach to screen relevant variables is \textit{sure independence screening} \citep[SIS,][]{fan_sure_2008}, which selects the variables with largest marginal association at a given threshold $t \in (0,1)$.
Defining the SIS estimators 
\begin{align*}
    \widehat S^{(0)}_{j,\mathrm{SIS}}(t) &= 
    \left\{
1\leq k\leq p: |X_k^\top Y_j| \textnormal{ is among the first $\lceil tn \rceil$ largest of all } \X^\top Y_j
\right\}\\
\widehat S^{(1)}_{j,\mathrm{SIS}}(t) &= 
    \left\{
Z\in \X_{\widehat S^{(0)}}\cup \Y_{-j}: |Z^\top Y_j| \textnormal{ is among the first $\lceil tn \rceil$ largest of all } (\X_{\widehat S^{(0)}}\cup \Y_{-j})^\top Y_j
\right\},
\end{align*}
we can obtain the following result following  \citet{fan_sure_2008}.
\begin{prop}\label{prop:SIS}
    Suppose Assumptions~\ref{ass:gaussian_reach}--\ref{ass:gaussian_faith} hold
    with $\kappa < 1-2d$, and the maximum eigenvalue of $(\X,\Y)$ is lower bounded by $c n^\xi$. 
    If $2d+\xi<1$, then there exists some $\delta<1-2d-\xi$ such that when $t\sim cn^{-\delta}$ with $c>0$, we have for some $C>0$, 
    \[
    P\left[\mathcal{A}\left(\widehat S^{(0)}_{\mathrm{SIS}}(t),\widehat S^{(1)}_{\mathrm{SIS}}(t)\right)\right]=1-O\left(\exp\left(-Cn^{1-2d}/\log n\right)\right).
    \]
\end{prop}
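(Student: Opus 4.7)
The plan is to cast the two screening sets as supports of regression coefficient vectors and then invoke the sure-screening guarantee of \citet{fan_sure_2008} twice, closing with a union bound over $j\in\Y$. By the joint Gaussianity in Assumption~\ref{ass:gaussian_faith}, the conditional-independence definitions of $S^{(0)}_j$ and $S^{(1)}_j$ are equivalent to vanishing partial correlations, which match exactly the supports of the population OLS coefficients when regressing $Y_j$ on $\X$ and on $\X_{S^{(0)}_j}\cup \Y_{-j}$, respectively. Since SIS thresholds $|X_k^\top Y_j|$ and $|Z^\top Y_j|$ estimate marginal covariances, the core task is to translate the partial-correlation lower bound $c_n=n^{-d}$ of Assumption~\ref{ass:gaussian_faith} into a signal-strength window compatible with SIS.

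\textbf{The two SIS applications.} First, for fixed $j\in\Y$, I would apply Fan--Lv to the linear regression $Y_j = \X\alpha^{(0)}_j+\eta^{(0)}_j$. Identifying their minimum-signal exponent with $d$ and their eigenvalue exponent with $\xi$, the paper's assumption $2d+\xi<1$ is precisely the Fan--Lv condition. Choosing $\delta<1-2d-\xi$ and $t\sim cn^{-\delta}$, Theorem~1 of \citet{fan_sure_2008} yields
\[
\PP{S^{(0)}_j\subseteq \widehat S^{(0)}_{j,\mathrm{SIS}}(t)}=1-O\!\left(\exp\!\left(-Cn^{1-2d}/\log n\right)\right)
\]
with $|\widehat S^{(0)}_{j,\mathrm{SIS}}(t)|=\lceil tn\rceil$. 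Second, conditional on the success of Step~1, I would apply Fan--Lv to the regression $Y_j=[\X_{\widehat S^{(0)}_j},\Y_{-j}]\alpha^{(1)}_j+\eta^{(1)}_j$. The augmented design is still jointly Gaussian with maximum eigenvalue bounded by a constant multiple of $n^\xi$, so the signal, eigenvalue, and dimension hypotheses carry over (the coefficient support on this enlarged regression contains $S^{(1)}_j$ because, by Lemma~\ref{lem:superset}-style reasoning, replacing $S^{(0)}_j$ by any superset preserves the conditional-independence structure). The same rate then holds for $\widehat S^{(1)}_{j,\mathrm{SIS}}(t)$, and the cardinality constraint $|\widehat S^{(0)}_{j,\mathrm{SIS}}(t)\cap \widehat S^{(1)}_{j,\mathrm{SIS}}(t)|\leq \lceil tn\rceil\leq n$ follows automatically.

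\textbf{Union bound and main obstacle.} Taking a union bound over $j\in\Y$ multiplies the failure probability by $q$; Assumption~\ref{ass:gaussian_dim} together with $\kappa<1-2d$ yields $q=\exp(o(n^{1-2d}))$, which is absorbed into $\exp(-Cn^{1-2d}/\log n)$ after shrinking $C$. The principal obstacle is verifying the marginal-signal hypothesis of SIS from the partial-correlation lower bound in Assumption~\ref{ass:gaussian_faith}: SIS screens using $|X_k^\top Y_j|/n$ rather than partial correlations, so one must pass from $|\rho(Y_j,X_k\mid\X_{-k})|\geq c_n$ to a lower bound on $|\mathrm{cov}(Y_j,X_k)|$ of order $c_n$, uniformly in $(j,k)$. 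This is where the maximum-eigenvalue bound enters, playing the same role as Fan--Lv's condition on $\|\Sigma^{-1}\beta\|$; a Schur-complement manipulation together with the Gaussian conditional-variance identity does the translation. A secondary subtlety is the random augmented design in Step~2, which is handled by conditioning on the Step~1 event and using that every superset of $S^{(0)}_j$ preserves the partial-correlation lower bound needed for the second regression.
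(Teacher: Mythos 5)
Your proposal follows essentially the same route as the paper's proof, which is far terser: it simply invokes Theorem~1 of \citet{fan_sure_2008} once per selection problem and closes with a union bound over the roughly $2q$ problems, absorbing the $\exp(c_0 n^\kappa)$ dimension factor using $\kappa<1-2d$. The step you flag as the principal obstacle---passing from the partial-correlation lower bound of Assumption~\ref{ass:gaussian_faith} to the marginal-covariance signal condition that Fan--Lv's theorem actually requires---is not addressed in the paper's proof at all, so your treatment is, if anything, the more careful of the two.
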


By using simple marginal associations, SIS offers a simple and efficient strategy for screening; however, it may result in large sets for the searching step. 
An alternative to SIS screening is to select the sets $\widehat S_j^{(0)}$ and $\widehat S_j^{(1)}$ using two penalized regressions. For instance, using the lasso penalty, $\widehat S_j^{(0)}$ for $j\in\Y$ can be found as
\begin{align*}
    \widehat\gamma_j &:= \argmin_{\gamma\in \RR^{p-1}}\frac{1}{2n}\left\|Y_j - \gamma^\top \X\right\|_2^2+\lambda^{(0)}_n\left\|\gamma_j\right\|_1,  &\widehat S^{(0)}_j = \left\{k:\widehat\gamma_{jk}\neq 0\right\}.
\end{align*}
Similarly, $S_j^{(1)}$ can be learned using a lasso regression on a different set of variables; specifically, given any set $S\subseteq \X$, we define
\begin{align*}
    \widehat\theta_j(S) &:= \argmin_{\theta \in \RR^{|S|+q-1}} \frac{1}{2n}\left\|Y_j - \theta^\top \left[\X_{S}, \Y_{-j}\right] \right\|_2^2+\lambda^{(1)}_n\left\|\theta_j\right\|_1,  &\widehat S^{(1)}_j = \left\{k:\widehat\theta_{jk}\left(\widehat S^{(0)}_j\right)\neq 0\right\}.
\end{align*}
The next result establishes the success of the screening step when using lasso estimators in high dimensions. 
\begin{prop}[Screening with lasso]\label{thm:screen_lasso}
Suppose Assumptions~\ref{ass:gaussian_reach}--\ref{ass:gaussian_faith} hold. 
Also assume that the minimal eigenvalue of  $\mathrm{Cov}(\X\cup\Y)$ is larger than some constant $\Gamma_{\min}$, and 
there exists some $\xi>0$ such that  $\mathrm{Var}(Z|\X\cup\Y\setminus Z)>\xi$ for all $Z\in \X\cup\Y$. 
Assume
$s_n=\max_j|\mathrm{mb}(j)|=O\left(n^{1-a}\right)$ 
and the rate parameters in Assumptions~\ref{ass:gaussian_reach} and \ref{ass:gaussian_dim} satisfy 
$\kappa\leq\min(a,b)$.
Then, lasso estimators with penalization level lower bounded with  
$\lambda_n^{(0)}\asymp\sqrt{2\log p/n}$ and $\lambda_n^{(1)}\asymp\sqrt{2\log (p+q)/n}$ satisfy
$\PP{\mathcal{A}\left(\widehat S^{(0)},\widehat S^{(1)}\right)}\to 1$.
\end{prop}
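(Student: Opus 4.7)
}
The plan is to treat the screening step as two neighborhood-selection problems and invoke standard lasso sure-screening guarantees in each, with care to propagate the first-stage success into the second. Since the joint distribution is Gaussian and $\mathrm{Cov}(\X\cup\Y)$ has minimum eigenvalue bounded below, conditional dependencies translate exactly into nonzero population regression coefficients. Hence $S^{(0)}_j$ equals the support of the population linear regression of $Y_j$ on $\X$, and for any superset $S\supseteq S^{(0)}_j$, the support of the population linear regression of $Y_j$ on $(\X_S,\Y_{-j})$ equals $S^{(1)}_j$ (and is independent of the particular superset, because the extra columns in $\X_{S\setminus S^{(0)}_j}$ have zero population coefficient by definition of $S^{(0)}_j$). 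This identification is what lets us reduce the whole proposition to an application of lasso variable-selection consistency.

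Next I would verify the ingredients needed for lasso sure screening. The restricted/compatibility condition on the design follows from the uniform lower bound $\Gamma_{\min}$ on the eigenvalues of $\mathrm{Cov}(\X\cup\Y)$ together with a standard sample-covariance concentration inequality for Gaussian vectors, combined with $\mathrm{Var}(Z\mid \X\cup\Y\setminus Z)>\xi$, which controls the conditional variances of each response column and hence keeps the normal equations well conditioned on any subset. The minimum-signal condition is supplied by Assumption~\ref{ass:gaussian_faith}: every nonzero partial correlation is at least $c_n\asymp n^{-d}$, and dividing by bounded conditional standard deviations turns this into a lower bound of the same order on every nonzero population coefficient. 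With $\lambda_n^{(0)}\asymp\sqrt{2\log p/n}$ and $s_n=O(n^{1-a})$ with $a\geq\kappa$, the product $s_n\lambda_n^{(0)}$ is of order $n^{-a/2}\sqrt{\log p}=o(c_n)$ under $2d<1-\kappa\leq 1-a$, so a standard lasso $\ell_\infty$ estimation bound (e.g.\ via the primal–dual witness) yields $\widehat S^{(0)}_j\supseteq S^{(0)}_j$ with probability $1-O(p^{-c})$ for each fixed $j$. The same argument with $\lambda_n^{(1)}\asymp\sqrt{2\log(p+q)/n}$, applied to the regression on $(\X_{\widehat S^{(0)}_j},\Y_{-j})$, gives $\widehat S^{(1)}_j\supseteq S^{(1)}_j$; here the key point, already noted above, is that because $\widehat S^{(0)}_j\supseteq S^{(0)}_j$ the extra columns carry only a zero population signal, so the population optimum on the enlarged design has support exactly $S^{(1)}_j$, and the sure-screening argument goes through.

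The sparsity requirement $|\widehat S^{(0)}_j\cap\widehat S^{(1)}_j|\leq n$ is then automatic: in the Gaussian sample setting the design columns are almost surely in general position, and the lasso then has at most $n$ nonzero coefficients, so both $|\widehat S^{(0)}_j|$ and $|\widehat S^{(1)}_j|$ are bounded by $n$ on a probability-one event. Finally, a union bound over $j\in\Y$ (at cost at most a factor of $q$) and over the two regressions, together with the polynomial tail probabilities above and $\log(p+q)=o(n^{1-2d})$ implied by Assumption~\ref{ass:gaussian_dim} and $\kappa<1-2d$, converts the per-coordinate guarantees into $\mathbb{P}\{\mathcal{A}(\widehat S^{(0)},\widehat S^{(1)})\}\to 1$.

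I expect the main obstacle to be the two-stage coupling in the second regression. The design matrix for the $S^{(1)}_j$-step is itself random through $\widehat S^{(0)}_j$, so one has to argue uniformly over all supersets $S$ of $S^{(0)}_j$ of cardinality at most $n$, or condition on the sure-screening event from the first stage and re-derive concentration of the sample covariance restricted to a data-dependent subset. I would handle this by working on the intersection of the first-stage success event with a high-probability event on which the empirical covariance of $(\X,\Y)$ is uniformly close in spectral norm to its population version (so the compatibility and minimum-eigenvalue conditions transfer to every submatrix of interest), and then apply the lasso selection lemma to the fixed-dimensional population problem on that event. Everything else—signal strength, choice of $\lambda$, and sparsity bound—reduces to routine bookkeeping using the scalings $s_n=O(n^{1-a})$, $c_n^{-1}=O(n^d)$, and Assumption~\ref{ass:gaussian_dim}.
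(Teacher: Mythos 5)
Your proposal is correct and follows essentially the same route as the paper's proof: a restricted-eigenvalue/compatibility condition deduced from the $\Gamma_{\min}$ lower bound on $\mathrm{Cov}(\X\cup\Y)$, a standard lasso error bound at $\lambda_n\asymp\sqrt{\log p/n}$ (the paper uses the $\ell_2$ bound of Wainwright's Theorem~7.13 where you invoke an $\ell_\infty$/primal--dual witness bound, but either suffices), a beta-min condition obtained by converting the $c_n$-lower-bounded partial correlations of Assumption~\ref{ass:gaussian_faith} into coefficient lower bounds via the bounded conditional variances, and a union bound over the regressions. You are somewhat more explicit than the paper on two points it leaves implicit---the data-dependent design in the second-stage regression (which the paper handles by establishing RE for the full design $\X\cup\Y_{-j}$, hence for every relevant submatrix) and the cardinality requirement $|\widehat S^{(0)}_j\cap\widehat S^{(1)}_j|\leq n$ in the event $\mathcal{A}$.
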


The above two high-dimensional screening approaches---SIS and lasso---are two extremes: SIS obtains the estimates of $S^{(0)}_j$ and $S^{(1)}_j$ without adjusting for any other variables, whereas lasso adjusts for all other variables (either from the first layer or the two layers combined). 
These two extremes represent examples of broader classes of methods based on marginal or joint associations. They differ with respect to computational and sample complexities of screening and searching steps: 
Compared with SIS, screening with lasso may result in smaller sets for the searching step but may be less efficient for screening. Nevertheless, both approaches are valid choices for screening under milder conditions than those needed for consistent variable selection.
This is because the additional assumption required for the screening step is implied by the layering-adjacency-faithfulness assumption, which, as discussed before, is weaker than the strong faithfulness assumption needed for the PC algorithm (see also Section~\ref{sec:sim}). 
While the above two choices seem natural, other intermediates, including conditioning on small sets \citep[similar to ][]{sondhi_reduced_2019} can also be used for screening.

\subsection{Other Approaches}\label{otherscreening}
Together with Proposition~\ref{lem:screen_pcor}, \ref{prop:SIS}, or \ref{thm:screen_lasso}, Theorem~\ref{thm:search_linGaussSEM} establishes the consistency of Algorithm~\ref{alg:framework} for linear SEMs, in low- and high-dimensional settings. 
While we only considered the simple case of Gaussian noise, Algorithm~\ref{alg:framework} and the results in this section can be extended to linear SEMs with sub-Gaussian errors \citep{harris_pc_2013}. Moreover, many other regularization methods can be used for screening instead of the lasso.
The only requirement is that the method is screening-consistent. For instance, inference-based procedures, such as debiased lasso
\citep{geer_asymptotically_2014, zhang2014confidence, javanmard_hypothesis_2014} can also fulfill the requirement of our screening step. 

Similar ideas can also be used for learning {\DAGS} from other distributions and/or under more general SEM structures. In Appendix~\ref{sec:CAM}, we outline the generalization of the proposed algorithms to causal additive models \citep{buhlmann2014cam}.
The Gaussian copula transformation of \citet{liu_nonparanormal_2009,liu_high-dimensional_2012} and \citet{xue_regularized_2012} can be similarly used to extend the proposed ideas to non-Gaussian distribution. We leave the detailed exploration of these generalizations to future research.

\section{Extensions and Other Considerations}\label{sec:extensions}

\subsection{Learning Edges within Layers}\label{sec:inlayeredges}

In the previous sections, we focused on the problem of learning edges between two layers, $\X$ and $\Y$. 
In particular, Algorithm~\ref{alg:framework} provides a framework that first finds a superset of the true edges, and then uses a searching loop to remove the spurious edges. 
The same idea can be applied to learning edges within the second layer, $\Y$: 
As suggested by Lemma~\ref{lemma:CMBproperties},
for each $j\in\Y$, all of its adjacent nodes are contained in $S^{(1)}_j$. 
This suggests that we can learn edges in $\Y$ by simply modifying Algorithm~\ref{alg:framework} to run the searching loop on $\left\{(k,j):k\in  S^{(1)}_j,j\in\Y\right\}$ instead of only on those between $\X$ and
$\Y$.
After recovering the skeleton, edges among $\Y$ can be oriented using Meek's orientation rules \citep{meek_causal_1995}.
Given correct d-separators and skeleton,  the rules can orient as many edges as possible without making mistakes.

In order to successfully recover within layer edges from observational data, we need to assume faithfulness among these layers.
Following \cite{ramsey_adjacency-faithfulness_2006}, we characterize the faithfulness condition required to learn the {\DAG} via constraint-based algorithms as two parts:
(a) \emph{adjacency faithfulness}, which means that neighboring nodes are not associated with conditional independence; and (b) \emph{orientation faithfulness}, which means that parents in any v-structure are not conditionally independent given any set containing their common child. 
The orientation faithfulness can be defined in our context as follows. 
\begin{assm}[Within-layer-faithfulness]\label{ass:within-layer-faithful}
For all adjacent pairs $(j,j')\in\Y$, it holds that $Y_{j'}\not\independent Y_j \mid \X\cup T$ for any set $T\subseteq \Y\setminus\{j,j'\}$. Also, for any unshielded triple $(i,j,k)$ with $j\in\Y$, if $i \to j\leftarrow k$,  then the variables corresponding to $i$ and $k$ are dependent given any subset of $\X\cup\Y\setminus\{i,k\}$ that contains $j$; otherwise, the variables corresponding to $i$ and $k$ are dependent given any subset of $\X\cup\Y\setminus\{i,k\}$ that does not contains $j$.
\end{assm}

Given the additional information from partial ordering, to describe the graphical object learned by the new framework, we need to introduce a new notion of equivalence. This is because the Markov equivalent class of {\DAGS} can be reduced by using the known partial ordering. 
For example, if the only conditional independent relation among three variables $X_i,X_j,X_k$ is $X_i\independent X_k|X_j$, but we know $X_i\prec \{X_j,X_k\}$, then the edge $X_j\to X_k$ is identifiable. We define partial-ordering-Markov-equivalence simply as Markov equivalence restricted to partial ordering. 
This equivalence class can be represented by a maximally oriented partial {\DAG}, or maximal PDAG  \citep{perkovic_complete_2018}.
With this notion, we can describe the target of learning of within- and between-layer edges as learning the maximal PDAG of the true $G$ given the background information $\X\prec\Y$.

\begin{lem}[within-layer edges]\label{lemma:withinlayer}
    Suppose the conditions for Theorem~\ref{thm:search_linGaussSEM} and Assumption~\ref{ass:within-layer-faithful} hold. Then,  Algorithm~\ref{alg:framework} with an additional orientation step by Meek's rules recovers the maximal PDAG of $G$ given the background information $\X\prec\Y$.
\end{lem}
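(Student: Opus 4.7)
The plan is to establish the lemma in three stages: (i) recovery of the full skeleton (within layers and between layers), (ii) identification of v-structures, and (iii) completion of the orientation via Meek's rules. The key insight is that within-layer-faithfulness (Assumption~\ref{ass:within-layer-faithful}) supplies exactly the missing ingredients needed to extend the between-layer guarantee of Theorem~\ref{thm:search_linGaussSEM} to edges within $\Y$ and then to orient the resulting skeleton.

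For stage (i), the between-layer skeleton comes directly from Theorem~\ref{thm:search_linGaussSEM}. For within-$\Y$ edges, I would first observe that the screening loop of Algorithm~\ref{alg:framework} already produces $S^{(1)}_j\cap \Y = \cmb_{\X}(j)$ for each $j$, and by Lemma~\ref{lemma:CMBproperties} this set contains every $\Y$-adjacent of $j$. Thus the modified search operates on a superset of the true within-$\Y$ adjacencies. The modified searching loop then removes an edge $(j,j')$ whenever $Y_j\independent Y_{j'}\mid \X_S\cup \Y_T$ for some admissible $S,T$. By part (a) of Assumption~\ref{ass:within-layer-faithful}, adjacent pairs in $\Y$ remain dependent given \emph{any} conditioning set containing $\X$, so no true edge is removed. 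Conversely, I need the graph-theoretic claim that if $j,j'\in \Y$ are non-adjacent in $G$, then there exists $T\subseteq \cmb_{\X}(j)$ (and an appropriate $\X$-conditioning set of size at most $n$) such that $\X_S\cup \Y_T$ d-separates $j$ and $j'$. This follows the same template as Theorem~\ref{theorem:main}: because $\X$ precedes $\Y$ in the layering, every directed path between $j$ and $j'$ that is not blocked by $\X$ passes through $\Y$-ancestors of $\{j,j'\}$, and standard ancestor-based d-separator constructions \citep[Ch.~6]{spirtes_causation_2001} then yield a valid separator inside $\cmb_{\X}(j)\cup\X$.

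For stage (ii), given the correct skeleton, the between-layer edges are oriented $\X\to\Y$ by the partial ordering. For unshielded triples $(i,j,k)$ with $j\in \Y$, part (b) of Assumption~\ref{ass:within-layer-faithful} guarantees that the conditional (in)dependence tests used by the algorithm correctly distinguish colliders from non-colliders at $j$, so all v-structures are recovered exactly as in $G$. For stage (iii), starting from this pattern I would iteratively apply Meek's orientation rules. By the soundness and completeness of Meek's rules in the presence of background knowledge \citep{meek_causal_1995,perkovic_complete_2018}, the resulting object is the maximally oriented PDAG of the partial-ordering-Markov-equivalence class of $G$, which is by definition the maximal PDAG of $G$ given $\X\prec\Y$.

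The main obstacle is the graph-theoretic claim in stage~(i): verifying that d-separators of non-adjacent $\Y$-pairs can always be taken from $\X_S\cup \cmb_{\X}(j)$ with $|S|$ bounded so that the conditioning set fits into the $|\widehat S^{(0)}_j\cap \widehat S^{(1)}_j|\le n$ budget. Once that is in place, the rest is essentially bookkeeping: consistency of the conditional independence tests is inherited from Proposition~\ref{lem:screen_pcor} (or its high-dimensional counterparts) and Theorem~\ref{thm:search_linGaussSEM}, v-structure identification is a direct consequence of Assumption~\ref{ass:within-layer-faithful}(b), and the Meek-rule propagation step is standard.
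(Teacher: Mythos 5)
Your proposal is correct and follows essentially the same route as the paper, which simply notes that the skeleton argument is identical to that of Theorem~\ref{theorem:main} and then invokes Meek's rules with the partial ordering as background knowledge to obtain the maximal PDAG. The ``main obstacle'' you flag in stage~(i) is resolved exactly as in the proof of Theorem~\ref{theorem:main}: for non-adjacent $j,j'\in\Y$ one takes $T=\pa_j\cap\Y\subseteq\cmb_{\X}(j)$ together with the $\X$-side conditioning set, and weak union handles the extra non-descendant nodes in the conditioning set.
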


\subsection{Directed Graphs with Multiple Layers}\label{sec:moresets}

The theory and algorithm developed in Section~\ref{sec:estimation}  can be extended 
to scenarios with multiple layers.
To facilitate this extension, we introduce a general representation of the problem. 
Suppose $V$ is a random vector following some distribution Markov to a graph $G=(V,E)$. 
Suppose $G$ admits a partial-ordering $\mathcal{O}=\{V_1\prec\cdots\prec V_L\}$ where $V = \cup_{\ell=1}^L V_\ell$.
Parallel to the notation in 2-layer case, we define, for each $j\in V_\ell$, $1\leq\ell\leq L$,
\begin{align*}
    S^{(0)}_j&:= \begin{cases} \varnothing& \text{ if } j\in V_1\\
    \left\{k\in \cup_{i=1}^{\ell-1} V_{i}: V_k\not\independent V_j|\left(\cup_{i=1}^{\ell-1}V_i\right)\setminus \{k\} \right\} &\text{ otherwise},
    \end{cases}\\
    S^{(1)}_j&:= \left\{k\in  V_{\ell}\setminus\{j\}: V_k\not\independent V_j|V_{S^{(0)}_j}\cup V_\ell \setminus \{k,j\} \right\}.
\end{align*}
The next lemma generalizes Lemma~\ref{lemma:h0hjintersection}.

\begin{lem}\label{lemma:h0hjintersection_multi}
    For any graph $G$ admitting the partial ordering $\mathcal{O}=\{V_1\prec\cdots\prec V_L\}$, the followings hold:
    \begin{itemize}
        \item For each $j\in V_\ell$, $2\leq \ell\leq r$,
        a node $k\in S^{(0)}_j\cap S^{(1)}_j $ if and only if $(k,j)\in G$ or $(k,j)\notin G$ but
        there exists a path $k\to j'\to \cdots \to j$ with $j'\in V_\ell$ and $\CH_j\cap \CH_k\cap V_\ell\neq \varnothing$.
        \item For 
        each $j\in V_\ell$, $1\leq \ell\leq r$,
        a node $k\in S^{(1)}_j\cap V_\ell$ if and only if $k\in\ADJ_j$ or $k\notin \ADJ_j$ but
         $\CH_j\cap \CH_k\cap V_\ell\neq \varnothing$.
    \end{itemize}
\end{lem}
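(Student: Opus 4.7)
The plan is to convert each conditional independence assertion into a d-separation statement via faithfulness, and then classify the active paths that survive the two conditioning sets by exploiting the layered acyclic structure of $G$. Throughout, write $U_\ell := \cup_{i<\ell} V_i$ for the strict upper layers, so the conditioning set in $S^{(0)}_j$ is $U_\ell \setminus \{k\}$ and the conditioning set in $S^{(1)}_j$ is $V_{S^{(0)}_j} \cup V_\ell \setminus \{k,j\}$.

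For Part 1, fix $j \in V_\ell$ with $\ell \geq 2$ and $k \in U_\ell$. The $(\Leftarrow)$ direction is immediate: a direct edge $k \to j$ is open under any conditioning excluding $\{k,j\}$; a directed path $k \to j' \to \cdots \to j$ with internal nodes in $V_\ell$ is open under the $S^{(0)}_j$-conditioning (those internal nodes are not conditioned on), and the v-structure $k \to c \leftarrow j$ at a common child $c \in V_\ell$ is activated by the $S^{(1)}_j$-conditioning. For the $(\Rightarrow)$ direction with $(k,j) \notin G$, begin with an open path $\pi_0$ witnessing $k \in S^{(0)}_j$. Its first edge cannot be $k \leftarrow v_1$, since $v_1$ would then be a parent of $k$ lying in $U_\ell$ and acting as a chain or fork node, hence blocked. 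So the first edge is $k \to v_1$. If $v_1 \in U_\ell$, then $v_1$ is conditioned on and must be a collider on $\pi_0$, which forces the next edge to be $v_2 \to v_1$ with $v_2 \in U_\ell$; but then $v_2$ is a chain/fork in $U_\ell$ and blocks. If $v_1$ lies in a layer strictly below $V_\ell$, then $\pi_0$ can never return to $V_\ell$ because the partial ordering forbids edges from lower layers into $V_\ell$. Hence $v_1 \in V_\ell$, and iterating gives the required directed path $k \to v_1 \to \cdots \to j$ with all internal nodes in $V_\ell$. A parallel analysis on an open path $\pi_1$ witnessing $k \in S^{(1)}_j$---where conditioning on $V_\ell \setminus \{k,j\}$ now blocks every chain or fork in $V_\ell$---forces the presence of an activated collider in $V_\ell$, and layered acyclicity forces this collider to be a common child of $k$ and $j$.

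For Part 2, fix $k \in V_\ell \setminus \{j\}$. If $k \in \ADJ_j$, the direct edge is a one-step open path under any conditioning excluding $\{k,j\}$. Otherwise, if $k \in S^{(1)}_j$, then by the multi-layer analogue of Lemma~\ref{lemma:screening} (which follows from Lemma~\ref{lemma:CMBproperties}) the set $V_{S^{(0)}_j}$ contains the upper-layer portion of the Markov blanket of $V_j$, so every excursion of an active path into $U_\ell$ is blocked. Conditioning on $V_\ell \setminus \{k,j\}$ further blocks every chain or fork in $V_\ell$. The only remaining activation mechanism is a collider in $V_\ell$ (or one whose descendant is in the conditioning set), which by layered acyclicity must be a common child of $k$ and $j$ lying in $V_\ell$, as claimed.

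The main obstacle is the $(\Rightarrow)$ direction, where I need to rule out exotic active paths that alternate between colliders in upper layers (activated by conditioning) and short chain segments in $V_\ell$ or below. Two structural facts drive the argument: descendants of any node in $V_\ell$ or below remain in $V_\ell$ or below, so a collider in $V_\ell$ or lower is never activated by an upper-layer conditioning set; and any non-collider in $U_\ell$ is blocked either directly by the $S^{(0)}_j$-conditioning or, in the $S^{(1)}_j$ analysis, by the upper-layer Markov blanket contained in $V_{S^{(0)}_j}$. Combined with a short induction on path length and case analysis on the direction of each edge leaving $k$, these facts force every active path into one of the two canonical forms described in the lemma.
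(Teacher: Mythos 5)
Your $(\Leftarrow)$ direction and your classification of the path $\pi_0$ witnessing $k\in S^{(0)}_j$ are essentially sound, modulo one imprecision: a path \emph{can} re-enter $V_\ell$ from a strictly lower layer by traversing a downward edge against its orientation, so "the partial ordering forbids edges from lower layers into $V_\ell$" is not by itself a reason the path dies; the correct reason is that any such excursion forces a collider in a layer below $\ell$, none of whose descendants can lie in a conditioning set contained in layers $\leq\ell$. The genuine gap is in your treatment of the $S^{(1)}_j$-conditioning. That conditioning set, $V_{S^{(0)}_j}\cup V_\ell\setminus\{k,j\}$, contains all of $V_\ell\setminus\{k,j\}$ but, in the upper layers, only $S^{(0)}_j$ rather than all of $\cup_{i<\ell}V_i\setminus\{k\}$. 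Your Part 1 asserts that the activated collider in $V_\ell$ "must be a common child of $k$ and $j$," and your Part 2 asserts that $V_{S^{(0)}_j}$ "contains the upper-layer portion of the Markov blanket of $V_j$"; both claims are doing the real work of excluding detours through unconditioned upper-layer nodes, and neither is justified. In fact both fail: an upper-layer co-parent $w$ of a child of $j$ is generally \emph{not} in $S^{(0)}_j$, because the collider at the common child lies in layer $\ell$ and cannot be activated by an upper-layer conditioning set. Concretely, take $V_1=\{k,w\}$, $V_2=\{j',j,u\}$ with edges $k\to w$, $w\to u$, $j\to u$, $k\to j'$, $j'\to j$. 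Then $S^{(0)}_j=\{k\}$ (the only paths from $w$ to $j$ are blocked given $\{k\}$), and the path $k\to w\to u\leftarrow j$ is active given $V_{S^{(0)}_j}\cup V_2\setminus\{k,j\}=\{j',u\}$: the node $w$ is an unconditioned chain node and $u$ is a conditioned collider. Hence $k\in S^{(0)}_j\cap S^{(1)}_j$ while $\CH_k\cap\CH_j=\varnothing$; the activated collider is a common child of $j$ and $w$, not of $j$ and $k$. Your case analysis therefore does not close, and the same leak (through unconditioned upper-layer non-colliders) affects your Part 2.

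For comparison, the paper does not attempt a fresh path analysis here at all: it proves the first bullet by reducing to the two-layer Lemma~\ref{lemma:h0hjintersection}, treating $\cup_{i<\ell}V_i$ as $\X$ and $V_\ell$ as $\Y$ with the later layers marginalized out, and the second bullet via the Markov-blanket identity of Lemma~\ref{lemma:CMBproperties}. In the two-layer lemma the relevant set $S^{(-j)}_j$ conditions on \emph{all} of $\X_{-k}$, so every upper-layer non-collider genuinely is blocked; the passage from that full conditioning set to the $S^{(0)}_j$-restricted one appearing in $S^{(1)}_j$ is the separate content of Lemma~\ref{lemma:screening}, which your direct argument neither invokes nor reproves. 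To repair your proof you would need to either run the path analysis with the full conditioning set $\left(\cup_{i<\ell}V_i\right)\setminus\{k\}\cup V_\ell\setminus\{j\}$ and then establish the screening equivalence, or explicitly control active paths escaping through $\cup_{i<\ell}V_i\setminus S^{(0)}_j$ --- and the example above indicates that the latter cannot be done for the statement with the paper's literal definition of $S^{(1)}_j$, so this is a point worth flagging to the authors rather than papering over.
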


\begin{algorithm}[t]
	\caption{{\DAG} Learning from Partial Orderings (Multiple Layer Setting)}
	\label{alg:framework_multi}
	\SetKwInOut{Input}{Input}
	\SetKwInOut{Output}{Output}
	\Input{Observations from random variables $W \sim \mathcal{P}_G$,\\ Partial Ordering $\mathcal{O}=\{V_1\prec\cdots\prec V_L\}$ where $V = \cup_{i=1}^L V_\ell$
}
	\Output{A estimated edge set of $G$}
	\lFor{$j\in V$}{infer
	${S}^{(0)}_j$ and 
	${S}^{(1)}_j$
	}
	$\widehat E_{\textnormal{between}}\gets \left\{(k, j):2\leq \ell\leq L, j\in V_\ell,k\in S^{(0)}_j\cap S^{(1)}_j\right\}$\;
	$\widehat E_{\textnormal{within}}\gets \left\{(k, j):1\leq \ell\leq L, j\in V_\ell,k\in S^{(1)}_j\cap V_\ell\right\}$\;
	\For{$d=0,1,\ldots$}{
	    \For{$\ell=0,1,\ldots,L$}{
		\For{$(k,j)\in \widehat E_{\textnormal{between}}\cup \widehat E_{\textnormal{within}}$, $j\in V_\ell$}{
				\For{$T\subseteq S^{(1)}_j\cap V_\ell$, $|T|=d$}{
					\lIf{$V_k\independent V_j|
					V_{\left(S^{(0)}_j\cap S^{(1)}_j\right)\cup T\setminus \{k\}}$}{remove $(k,j)$  and break}
				}
			}
		}
		\lIf{no edge can be removed}{break}
	}
	Orient all edges in $\widehat E_{\textnormal{between}}$ by $\mathcal{O}$; then apply Meek's rules to orient edges in $\widehat E_{\textnormal{within}}$\;
	\Return $\widehat E_{\textnormal{between}}\cup \widehat E_{\textnormal{within}}$.
\end{algorithm}

Lemma~\ref{lemma:h0hjintersection_multi} suggests a general framework for utilizing any layering information to facilitate {\DAG} learning. The multi-layer version of the algorithm is presented in  Algorithm~\ref{alg:framework_multi}.
The faithfulness condition required for the success of this framework is given below for the set of variables $\W$.
\begin{assm}[Layering-faithfulness]\label{ass:layering-faith}
For a graph $G=(V,E)$ admitting a partial-ordering $\mathcal{O}=\{V_1\prec\cdots\prec V_L\}$, we say a distribution $\mathcal{P}$ is layering faithful to $G$ with respect to $\mathcal{O}$ if the followings hold:
\begin{itemize}
    \item \emph{Adjacency faithfulness:} For all non-adjacent pairs $j,k$ with $j\in V_\ell$, $k\in V_{\ell'}$, $\ell \geq \ell'$, it holds that $W_j\not\independent W_k|\W_{\cup_{i=1}^{\ell-1}V_i\setminus \{k\}}$ and
    $W_j\not\independent W_k|\W_{(\cup_{i=1}^{\ell-1}V_i)\cup T \setminus \{k\}}$ for all $T\subseteq V_\ell\setminus\{j,k\}$.
    \item \emph{Orientation faithfulness:} 
    For all unshielded triples $(i,j,k)$ such that  $j,k$
    are in the same layer $V_s$ and $i$ is in some previous layer, if the configuration of the path $(i,j,k)$ is $i\to j\leftarrow k$ then $W_i\not\independent W_k \mid \W_T\cup\{j\}$ for all $T\subseteq \cup_{t=1}^{s}V_t\setminus \{i,k\}$; otherwise $W_i\not\independent W_k \mid \W_T$ for all $T\subseteq \cup_{t=1}^{s}V_t\setminus \{i,j,k\}$.
\end{itemize}
\end{assm}
We note that orientation faithfulness is only needed for triplets when at least two of the three nodes are in the same layer. 
This is because otherwise the orientation is already 
implied by partial ordering. 

\begin{thm}\label{thm:multilayer}
Under Assumption~\ref{ass:layering-faith}, the population version of
Algorithm~\ref{alg:framework_multi} recovers the maximal PDAG of $G$.
\end{thm}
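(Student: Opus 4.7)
The plan is to show that each of the three phases of Algorithm~\ref{alg:framework_multi}---screening, searching, and orientation---behaves as intended in the population regime, and then combine them. Working layer by layer is natural: because $S^{(0)}_j$ only conditions on strictly earlier layers, and $S^{(1)}_j$ adds the current layer, each layer's screening quantities are well-defined without reference to what happens further downstream. I will accordingly induct on $\ell$, using layering-faithfulness (Assumption~\ref{ass:layering-faith}) to translate graphical d-separation and d-connection into conditional independence relations throughout.

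For the screening step, the goal is to show that $\widehat E_{\textnormal{between}} \cup \widehat E_{\textnormal{within}}$ contains every true skeleton edge of $G$ touching layer $V_\ell$. This is precisely what Lemma~\ref{lemma:h0hjintersection_multi} is built for: it tells us that $k \in S^{(0)}_j \cap S^{(1)}_j$ whenever $(k,j) \in G$ with $k$ in an earlier layer, and $k \in S^{(1)}_j \cap V_\ell$ whenever $k$ is adjacent to $j$ inside $V_\ell$. I would only need to verify that the ``if'' direction of that lemma is in fact powered by adjacency faithfulness in the present context, which is immediate from the first bullet of Assumption~\ref{ass:layering-faith}. Thus the candidate edge set is a genuine supergraph of the true skeleton.

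For the searching step, I need to show two things: (a) any true skeleton edge survives all conditional independence tests performed by the algorithm, and (b) every spurious candidate edge is eventually removed by one such test. Claim (a) follows directly from the adjacency-faithfulness clause of Assumption~\ref{ass:layering-faith}, since every conditioning set used by the algorithm on the pair $(k,j)$ with $k \in V_{\ell'}$, $j \in V_\ell$, $\ell' \le \ell$, is of the form $\W_{(S^{(0)}_j\cap S^{(1)}_j)\cup T\setminus\{k\}}$ with $T \subseteq S^{(1)}_j \cap V_\ell$, and is therefore a subset of $\left(\cup_{i=1}^{\ell-1}V_i\right)\cup (V_\ell\setminus\{j,k\})$, for which adjacency faithfulness guarantees dependence whenever $(k,j)\in G$. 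Claim (b) is the main obstacle: I must produce, for each non-adjacent pair in the candidate set, a d-separator that lies in the algorithm's restricted search space. The standard DAG argument says that non-adjacent nodes admit a d-separator contained in the ancestors of the pair; since $k,j$ together lie in $\cup_{i=1}^{\ell}V_i$, so do their ancestors. I then need to argue that this ancestral d-separator can be taken inside $\left(S^{(0)}_j\cap S^{(1)}_j\right)\cup \left(S^{(1)}_j\cap V_\ell\right)$. The piece from earlier layers can be pruned to $S^{(0)}_j$ using the standard Markov blanket screening argument (already invoked in the two-layer proof of Theorem~\ref{theorem:main} via Lemma~\ref{lemma:screening}), and the piece in $V_\ell$ can be restricted to $S^{(1)}_j\cap V_\ell$ by Lemma~\ref{lemma:h0hjintersection_multi}'s second bullet, which shows that $S^{(1)}_j\cap V_\ell$ already contains every in-layer adjacent of $j$ and every in-layer parent of a common child. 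Together with the intersection property implicit in Lemma~\ref{lemma:CMBproperties}, this lets me replace any ancestral d-separator by one inside the allowed search space.

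Finally, for orientation, the between-layer edges are oriented deterministically by the partial ordering $\mathcal{O}$, which is correct by the definition of a layering. For within-layer edges, the orientation-faithfulness clause of Assumption~\ref{ass:layering-faith} is precisely what is needed to identify v-structures from the learned skeleton and d-separators: for any unshielded triple $(i,j,k)$ with at least two nodes in the same layer, the clause guarantees that $j$ lies in every separator of $i$ and $k$ if and only if it is not a collider. Once v-structures are correctly oriented, Meek's rules \citep{meek_causal_1995}---applied in the presence of the background orientations from $\mathcal{O}$---are known to yield the maximal PDAG consistent with both the Markov equivalence class and the background knowledge \citep{perkovic_complete_2018}. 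Combining the three phases yields the claim. The main technical hurdle, as noted, will be the careful bookkeeping in step (b) of the searching phase to certify that an ancestral d-separator can always be found inside the restricted search space defined by the screening sets.
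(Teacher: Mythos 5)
Your proposal is correct and follows essentially the same route as the paper's own (much terser) proof: screening yields a supergraph by Lemma~\ref{lemma:h0hjintersection_multi}, adjacency faithfulness ensures the searching loop's conditional independencies correspond to d-separations so no true edge is lost while spurious edges admit a separator in the restricted search space, and orientation by the ordering plus Meek's rules is complete by \citet{perkovic_complete_2018}. You in fact supply more detail than the paper does on the one genuinely delicate point---certifying that an ancestral d-separator can be found inside $\left(S^{(0)}_j\cap S^{(1)}_j\right)\cup\left(S^{(1)}_j\cap V_\ell\right)$---which the paper's proof leaves implicit.
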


\subsection{Weaker Notions of Partial Ordering}
Algorithm~\ref{alg:framework_multi} can be successful even if ordering information is not available for some variables. This is because the idea of using a screening loop and a searching loop to reduce computational burden can be more generally applied. 
Suppose that for every variable $j\in V$ there exist two sets 
$T^{\prec}_j \prec j$ and $T^{\succ}_j \succ j$. Then, we  slightly modify the construction of $S_j^{(0)}$ and $S_j^{(1)}$: for each $j\in V_\ell$,
\begin{align*}
    S^{(0)}_j&:= 
    \left\{k\in T^{\prec}_j: W_k\not\independent W_j|T^{\prec}_j\setminus \{k\} \right\}, \\
    S^{(1)}_j&:= \left\{k\in V\setminus(T^{\prec}_j\cup T^{\succ}_j\cup\{j\}): W_k\not\independent W_j|\W_{S^{(0)}_j}\cup\left( \W\setminus(T^{\prec}_j\cup T^{\succ}_j\cup\{j,k\})\right) \right\}.
\end{align*}
It is easy to see that Lemma~\ref{lemma:h0hjintersection_multi} hold with this generalized notion. 
Specifically, this means that 
Algorithm~\ref{alg:framework_multi} can also handle the setting in which $V$ can be partitioned into disjoint sets $V_1,\ldots, V_L, V'$, such that the layering information among $V_1,\ldots, V_L$ is known ($V_1\prec V_2\prec\cdots\prec V_L$) and $V'$ contains nodes with no partial ordering information. 
This generalization allows the algorithm to be applied in settings, where e.g., $V_1,\ldots, V_{L-1}$ represent confounders, exposures and mediators, $V_L$ represents the  outcomes and $V'$ may correspond to variables for which no partial ordering information is available.

\section{Numerical Studies}\label{sec:perfanal}

In this section, we compare our proposed framework in  Algorithm~\ref{alg:framework}, PODAG, with the PC algorithm and its ordering-aware variant introduced in  Section~\ref{sec:method}, PC+. Our numerical analyses comprise three aspects: a comparison of faithfulness conditions; simulation studies comparing {\DAG} estimation accuracy of various methods; and an analysis of quantitative trait loci (eQTL). 

\subsection{Comparison of Faithfulness Conditions}
To assess the faithfulness conditions of different algorithms, we examine the minimal absolute value of detectable partial correlations. 
Specifically, for a constraint-based structure learning algorithm that tests conditional independencies in the collection of tuples  
$\mathcal{L}$, the quantity 
\[\rho^*_{\min}(\mathcal{L})=
\min_{\{j,k,S\}\in \mathcal{L}}\left\{|\rho(j,k \mid S)|:\rho(j,k \mid S)\neq 0\right\}
\]
can be viewed as the strength of faithfulness condition required to learn the \DAG. 
A small value of $\rho_{\min}^*(\mathcal{L})$ indicates a weaker separation between signal and noise for the method $\mathcal{L}$, making it  harder to recover the correct {\DAG} using sample partial correlations. 
On the other hand, a larger value of $\rho_{\min}^*(\mathcal{L})$ 
indicates superior statistical efficiency.

We randomly generated $R=100$ {\DAGS} with $20$ nodes and two expected edges per node. 
For each \DAG, we constructed a linear Gaussian SEM with parameters drawn uniformly from $(-1,-0.1)\cup (0.1,1)$.
We inspected three algorithms, PC, PC+, and PODAG, and computed their corresponding $\rho_{\min}^*(\mathcal{L}_{\text{PC}}), \rho_{\min}^*(\mathcal{L}_{\text{PC+}}), \rho_{\min}^*(\mathcal{L}_{\text{PODAG}})$. 
We also counted the number of conditional independence tests performed by each method as a measure of computational efficiency. 
The results are shown in Figure~\ref{fig:faithful}.
It can be seen that the faithfulness requirement for PC is stronger than PC+, which is in turn stronger than PODAG. The results also show that PC and PC+ require more tests of conditional independence, further reducing their statistical efficiency. 
Noticeably, though PC+ utilizes the partial ordering information, its computational and statistical efficiency are subpar compared with PODAG. 

\begin{figure}
    \centering
    \includegraphics[width=0.9\linewidth]{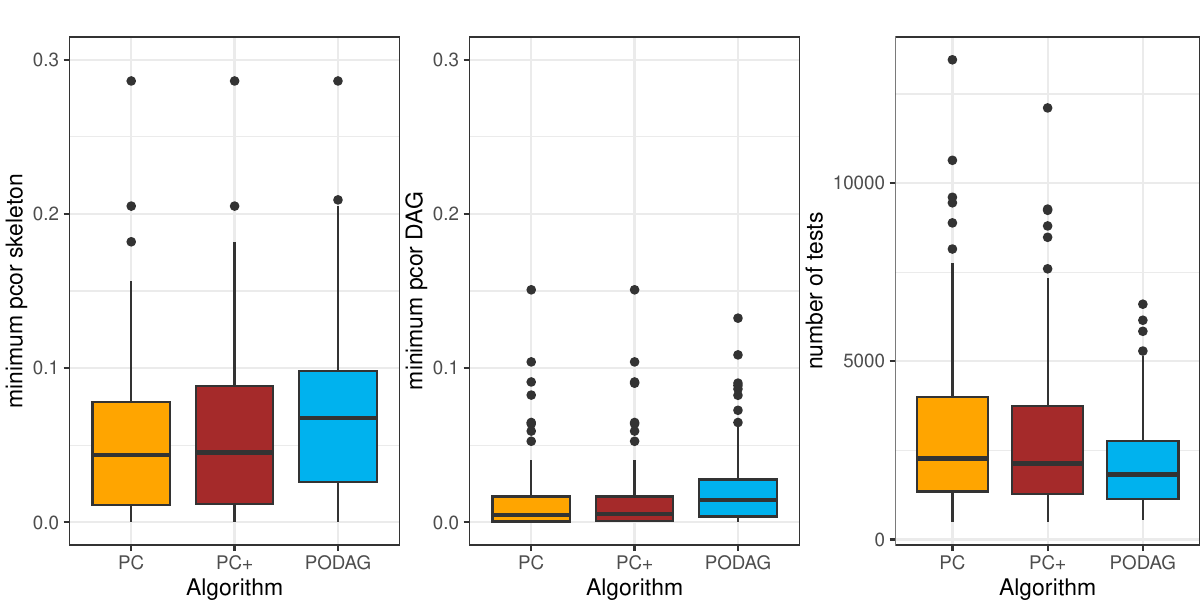}
    \caption{
    Comparison of $\rho_{\min}^*(\mathcal{L}_{\text{PC}}), \rho_{\min}^*(\mathcal{L}_{\text{PC+}})$ and  $\rho_{\min}^*(\mathcal{L}_{\text{PODAG}})$
    for recovering the  skeleton (left) and the entire {\DAG} (middle).
    The number of conditional independence tests is shown on the right panel.}
    \label{fig:faithful}
\end{figure}

\subsection{{\DAG} Estimation Performance}\label{sec:sim}
We compare the performance of the proposed PODAG algorithm in learning {\DAGS} from partial ordering with those of the PC and PC+ algorithms. To this end, we consider graphs with $50,100,150$ vertices and $3$ expected edges per node. Edges are randomly generated within and between layers, with higher probability of connections between layers. 
A weight matrix for linear SEM is generated with respect to the graph, with non-zero parameters drawn uniformly from $(-1,-0.1)\cup (0.1,1)$. 
We generate $n=500$ samples using SEMs with Gaussian errors. 
The vertex set is split into either $L=2$ or $L=5$ sets, corresponding to equal size layers.

\begin{figure}
\includegraphics[width=0.99\linewidth]{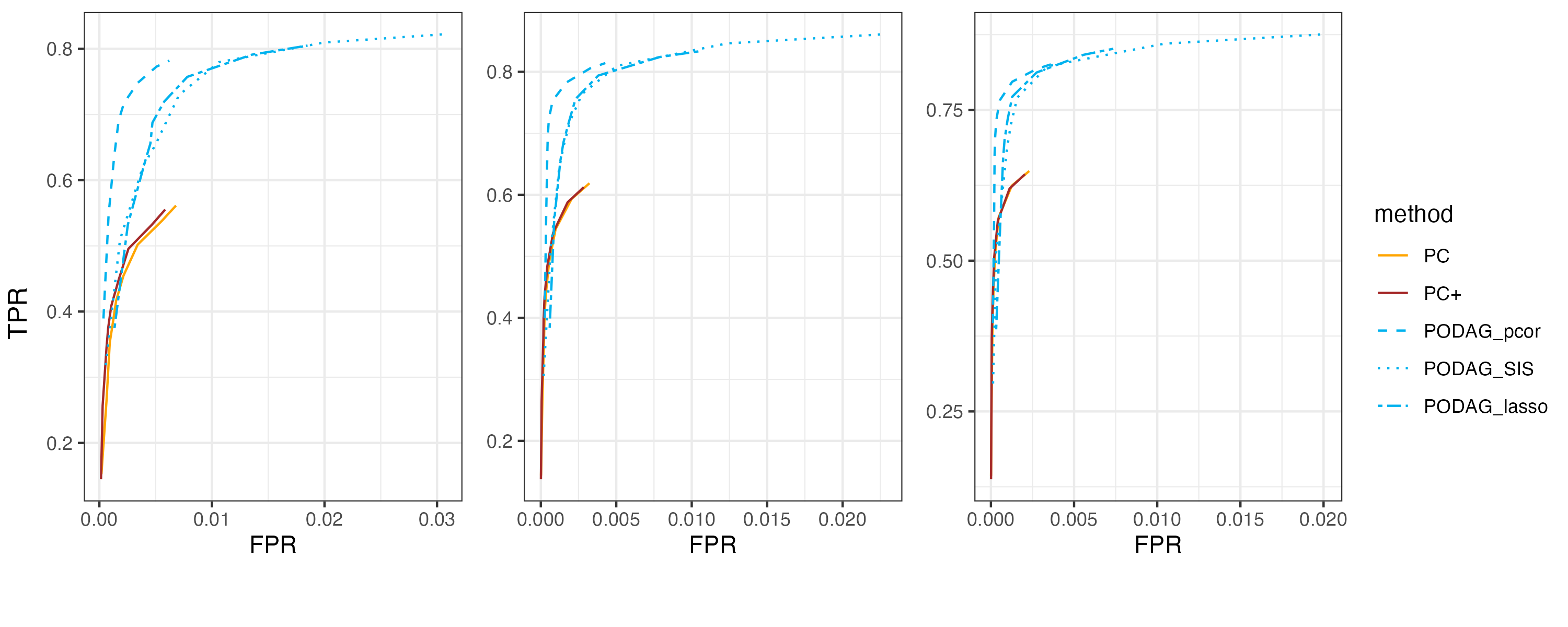}
\includegraphics[width=0.99\linewidth]{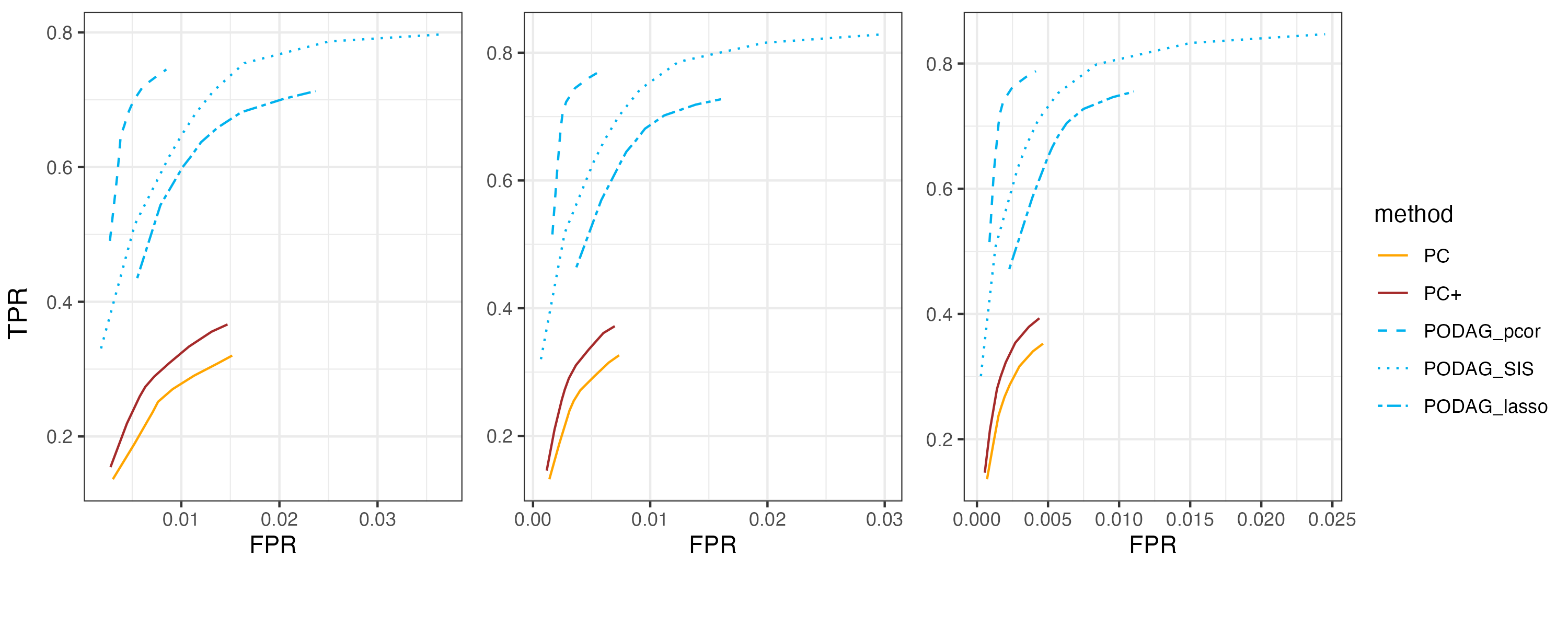}
    \caption{Average false positive and true positive rates (FPR, TPR) for learning {\DAGS} from partial orderings. The edges are generated from random ER graphs with $50$ (\emph{left}), $100$ (\emph{middle}), and $150$ (\emph{right}) nodes and 3 expected edges per node. Samples are drawn from Gaussian SEM with sample size $n=500$. Partial ordering information supplied to the algorithms in the form of two layers ({\it first row}) or five layers ({\it second row}).} \label{fig:precisionrecall_n500_allp}
\end{figure}

Figure~\ref{fig:precisionrecall_n500_allp} shows the average true positive and false positive rates (TPR, FPR) of the proposed PODAG algorithm compared with PC and PC$+$. Three variants of PODAG are presented in the figure, based on different screening methods in the first step: using sure independence screening (SIS), lasso and partial correlation. 
Regardless of the choice of screening, the results show that by utilizing the partial ordering, PODAG offers significant improvement over both the standard PC and PC+. 
In these simulations, partial correlation screening seems to offer the best performance. Moreover, the gap between high-dimensional screening methods (SIS and lasso) with partial correlation tightens as the dimension grows, especially for two-layer networks. 

The plots in Figure~\ref{fig:precisionrecall_n500_allp} correspond to different number of variables---50, 100, 150---equally divided into either $L=2$ or $L=5$ layers. While the performances are qualitatively similar across different number of variables, the plots show significant improvements in the performance of PODAG as the number of layers increases. This is expected, as larger number of layers ($L=5$) corresponds to more external information, which is advantageous for PODAG.  
The results also suggest that SIS screening may perform better than lasso screening in the five-layer network. However, this is likely due to the choice of tuning parameter for these methods. In the simulations reported here, for SIS screening, marginal associations were screened by calculating p-values using Fisher's transformation of correlations; an edge was then included if the corresponding p-value was less than 0.5 (to avoid false negatives). 
On the other hand, the tuning parameter for lasso screening was chosen by minimizing the Akaike information criterion (AIC), which balances prediction and complexity. Other approaches to selection of tuning parameters may change the size of the sets obtained from screening, and hence the relative performances of the two methods.

\subsection{Quantitative Trait Loci Mapping}\label{sec:data}
We illustrate the applications of PODAG by applying it to learn the bipartite network of interactions among DNA loci and gene expression levels. This problem is at the core of expression Quantitative trait loci (eQTL) mapping, which is a powerful approach for identifying sequence variants that alter gene functions \citep{nica_expression_2013}. While many tailored algorithms have been developed to identify DNA loci that affect gene expression levels, 
the primary approach in biological application is to focus on cis-eQTLs (i.e., local eQTLs), meaning those DNA loci around the gene of interest (e.g., within 500kb of the gene body). This focus is primarily driven by the limited power in detecting trans-eQTLs (i.e., distant eQTLs) by a genome-wide search.
State-of-the-art algorithms also incorporate knowledge of DNA binding cites and/or emerging omics data \citep{kumasaka2016fine, keele2020integrative} to obtain more accurate eQTL maps \citep{sun2012statistical}.

Regardless of the focus (i.e., cis versus trans regulatory effects), the computational approach commonly used to infer interactions between DNA loci and gene expression levels is based on the marginal association between expression level of each gene and the variability of each DNA locus in the sample. This approach provides an estimate of the $H^{0}$ graph defined in \eqref{eq:H0}. Since  the activities of other DNA loci are not taken into account when inferring the effect of each DNA locus on each gene expression level, the resulting estimate is a superset of $H^{0}$, unless an uncorrelated subset of DNA loci, for instance after LD-pruning \citep{fagny2017exploring}, is considered. 

\begin{figure}[t]
\centering
\includegraphics[width=.99\textwidth]{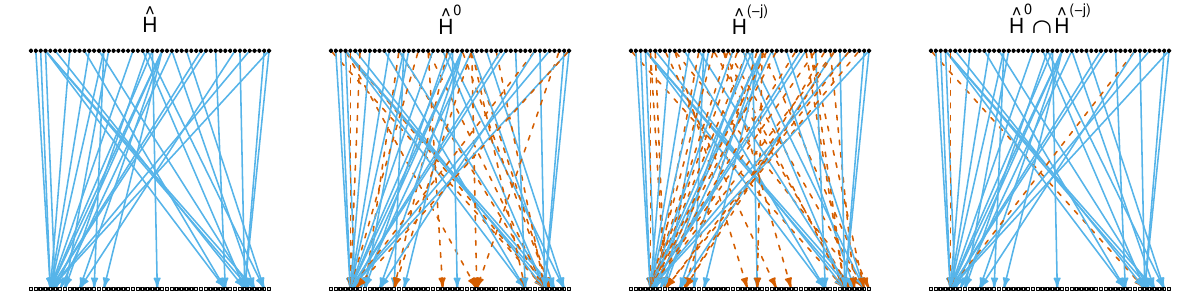}
\caption{Estimated quantitative trait mappings for yeast for $p=50$ randomly selected DNA loci and $q=50$ randomly selected gene expression levels. The plots show the estimated bipartite graph using the proposed PODAG algorithm, $\widehat H$; the $\widehat H^0$ estimate commonly obtained in eQTL analyses, by regressing the expression levels (nodes in the second layer) on the DNA loci; the $\widehat H^{(-j)}$ estimate defined in \eqref{eq:Hminusj} and the intersection of edges in $\widehat H^0$ and $\widehat H^{(-j)}$, which is the starting point of PODAG.}\label{fig:eQTL}
\end{figure}

To illustrate the potential advantages of the proposed PODAG algorithm in this setting, we use a yeast expression data set \citep{brem_landscape_2005}, containing eQTL data for 112 segments, each with 585 shared markers and 5428 target genes. As an illustrative example, and to facilitate visualization, we randomly select $p=50$ DNA loci and $q=50$ gene expression levels. 
Our analysis focuses on the direct effect of DNA markers on gene expression levels, corresponding to the bipartite network $H$, which can be estimated using PODAG by leveraging the  partial ordering of the nodes in the two layers. 

The results are presented in Figure~\ref{fig:eQTL}. As expected, the commonly used estimate of $H^{0}$, and the estimate of $H^{(-j)}$ in \eqref{eq:Hminusj}---obtained by also regressing on other gene expression levels---both contain additional edges compared with the PODAG estimate, $\widehat H$. The intersection of the first two networks, $\widehat H^0 \cap \widehat H^{(-j)}$ (discussed in Section~\ref{sec:challenge}), reduces these false positives. It also provides an effective starting point for PODAG, demonstrating the efficiency of the proposed algorithm.

\section{Discussion}\label{sec:disc}
We presented a new framework for leveraging partial ordering of variables in learning the structure of directed acyclic graphs (\DAGS). 
The new framework reduces the number of conditional independence tests compared with the methods that do not leverage shis information. It also requires a weaker version of the faithfulness assumption for recovering the underlying DAG.  

The proposed framework consists of two main step: a \emph{screening} step, where a superset of interactions is identified by leveraging the partial ordering; and a \emph{searching} step, similar to that used in the PC algorithm \citep{kalisch_estimating_2007}, that also leverages the partial ordering. 
The framework is general and can accommodate various estimation procedures in each of these steps. 
In low-dimensional linear structural equation models, partial correlation is the natural choice. Non-parametric approaches for testing for conditional independence \citep{azadkia2021simple, huang2022kernel, shi2021azadkia, shah2020hardness} can generalize this strategy to infer {\DAGS} from partial orderings under minimal distributional assumptions. In this paper, we considered sure independence screening (SIS) and lasso as two options for screening in high dimensions.  
However, many alternative algorithms can be used for estimation in high dimensions. As an illustration, the algorithm is extended in Appendix~\ref{sec:CAM} to causal additive models \citep{buhlmann2014cam}. 

An important issue for the screening step in high dimensions is the choice of tuning parameter for the estimation procedure. Commonly used approaches for selecting tuning parameters either focus on prediction or model selection. Neither is ideal for our screening, where the goal is to obtain a small superset of the relevant variables, which, theoretically, requires no false negatives. To achieve a balance, we used the Akaike information criterion (AIC) to tune the lasso parameter in our numerical studies. Recent advances in high-dimensional inference \citep{geer_asymptotically_2014, zhang2014confidence, javanmard_hypothesis_2014} could offer a viable alternative to choosing the tuning parameter and can also lead to more reliable identification of network edges.

\section*{Acknowledgements}
This work was partially supported by grants from the National Science Foundation and the National Institutes of Health. We thank Dr. Wei Sun for helpful comments on an earlier draft of the manuscript. 

\bibliographystyle{abbrvnat}
\bibliography{bib}

\clearpage
\appendix

\section{Proofs}\label{sec:proofs}

\subsection{Proof of Results in Section~\ref{sec:problem}}

\begin{proof}[Proof of Lemma~\ref{lemma:1}]
The claim in (i) follows directly from the definition of $d$-separation. In particular, if $X_k \ra Y_j \in G$ then the path from $X_k$ to $Y_j$ will not be $d$-separated by $\X_{- k}$, and hence $Y_j \dep X_k \mid \X_{- k}$.

To prove (ii), note that $\{ X_1, \ldots, X_q \} \prec \{ Y_1, \ldots, Y_p \}$ implies that $Y_{j_1} \ra \cdots \ra Y_{j_0}$ cannot include any $X_k$s. Thus, without loss of generality, suppose that 
\[
	Y_{j_1} \ra \cdots \ra Y_{j_0} \equiv Y_{j_1} \ra Y_{j_2} \ra \cdots \ra Y_{j_m} \ra Y_{j_0},
\]
where $m = 1$ is permitted.
Now, considering that $X_{k_0}$ is an ancestor of $Y_{j_0}$, by faithfulness of $\mathcal{P}$ with respect to $G$, in order for any set $S \subset V \backslash \{k_0, j_0\}$ to $d$-separate $X_{k_0}$ and $Y_{j_0}$, it must include at least one of the variables $Y_{j_1}, Y_{j_2}, \ldots, Y_{j_m}$. Noting that the construction used in $H$ only adjusts for $\X_{- k_0}$, and does not adjust for any $Y_j$s, the path from $X_{k_0}$ to $Y_{j_0}$ will not be $d$-separated by $\X_{- k_0}$. It follows from the definition of $d$-separation that $X_{k_0}$ and $Y_{j_0}$ are conditionally dependent given $\X_{- k_0}$, which means that $X_{k_0} \ra Y_{j_0} \in H^{(0)}$.
\end{proof}

\begin{proof}[Proof of Lemma~\ref{lemma:2}]
The result follows again from the definition of $d$-separation for {\DAGS}, and the faithfulness of $\mathcal{P}$. 
In particular, if $X_{k} \ra Y_{j} \in G$, $\{ X_{- k}, Y_{- j} \}$ does not $d$-separate $X_{k}$ from $Y_{j}$, and hence, $Y_j \dep X_k \mid \{ X_{- k}, Y_{- j} \}$.

Next, if $X_{k_0}$, $Y_{j_0}$ and $Y_{j_1}$ form an open collider in $G$, i.e., if $X_{k_0} \ra Y_{j_1} \la Y_{j_0}$ and $X_{k_0} \nra Y_{j_0}$, then the above argument implies that $X_{k_0} \ra Y_{j_1} \in H^{(-j)}$. On the other hand, since $Y_{j_1}$ is a common descendent of  $X_{k_0}$ and $Y_{j_0}$, $d$-separation implies that $X_{k_0} \dep Y_{j_0} \mid Y_{j_1}$, or more generally, $X_{k_0} \dep Y_{j_0} \mid \{ X_{- k_0}, Y_{- j_0} \}$. Thus, $X_{k_0} \ra Y_{j_0} \in H^{(-j)}$.
\end{proof}

\subsection{Proof of Results in Section~\ref{sec:method}}

\begin{proof}[Proof of Lemma~\ref{lemma:CMBproperties}]
    Suppose $\mathrm{mb}'(v)$ and $\mathrm{mb}''(v)$ are two distinct valid minimal Markov blankets (MB) of $v$. 
    Let $J= \mathrm{mb}'(v)\cap \mathrm{mb}''(v)$, $K=\mathrm{mb}'(v)\setminus J$, $L=\mathrm{mb}''(v)\setminus J$ and
    $W = V\setminus (J\cup K\cup L\cup v)$.
    By the intersection property, 
    $v\independent K|J\cup L$ 
    and 
    $v\independent L|J\cup K$ implies $v\independent K\cup L|J$.
    By the elementary formula of conditional independence,
    $v\independent K\cup L|J$ and $v\independent W\cup L|J\cup K$
    implies 
    $v\independent W\cup K\cup L|J$, and therefore
    $J$ is also a valid MB. But this is only possible when $C=\mathrm{mb}'(v)=\mathrm{mb}''(v)$.
    Hence the minimal MB is unique. 
    
    Next we show $\cmb_U(v)=\mathrm{mb}(v)\setminus U$ in two steps. 
    First, we show $\mathrm{mb}(v)\subseteq \left(\cmb_U(v)\cup U\right)$.
    We write $W = \mathrm{mb}(v)\cap \left(\cmb_U(v)\cup U\right)$, $Z=\mathrm{mb}(v)\setminus W$, $S=(\cmb_U(v)\cup U)\setminus W$. Then, by the definitions of conditional Markov blanket and Markov blanket,  $v\independent Z|W,S$ and 
    $v\independent S|W,Z$.
    By the intersection property,
    $v\independent Z,S|W$ which establishes $W$ as a valid Markov Blanket. However, since 
    $W\subseteq \mathrm{mb}(v)$ and MB is minimal,  this is only possible when 
    $W= \mathrm{mb}(v)$ and $Z=\varnothing$, i.e., $\mathrm{mb}(v)\subseteq\cmb_U(v)\cup U$.
    Finally we show $\cmb_U(v)\subseteq \mathrm{mb}(v)$.
    We write $T=\cmb_U(v)\cap \mathrm{mb}(v)$. Since $T\cup U\supseteq \mathrm{mb}(v)$, it  must holds that
    $v\independent S|T,U$, which implies that 
    $T$ is also a valid conditional Markov blanket. However, 
    since
    $T\subseteq \cmb_U(v)$
    and  the conditional Markov blanket is minimal, this is only possible when $T= \cmb_U(v)$, i.e.,
    $\cmb_U(v)\subseteq \mathrm{mb}(v)$. This completes the proof. 
\end{proof}

\begin{proof}[Proof of Theorem~\ref{theorem:main}]
Under layering-adjacency-faithfulness, if $k\in\X$ and
$j\in\Y$ are adjacent, then
$X_k\not\independent Y_j|\X_{-k}$
and $X_k\not\independent Y_j|\X_{-k}\cup \Y_{-j}$, 
so $k\in S^{(0)}_j\cap S^{(-j)}_j$. Therefore $\widehat E$ contains all true edges. 

By the intersection property, it holds that 
$X_k\independent Y_j|\X_{\left\{S^{(0)}_j\cap S^{(-j)}_j\right\}\setminus k}\cup \Y_T$, and hence, $ X_k\independent Y_j|\X_{-k}\cup \Y_T$.
Under layering-adjacency-faithfulness,
this implies no false negative errors can be made by the searching loop. 
Suppose $k\in \X$
and $j\in\Y$ are non-adjacent, 
then we must have $X_k\independent Y_j \mid \PA_j$,
which implies 
$X_k\independent Y_j \mid \X_{\left\{S^{(0)}_j\cap S^{(-j)}_j\right\}\setminus k}\cup \Y_{T}$ with the set $T=\PA_j \cap\Y$.
This construction provides one $d$-separator, and hence there could not be any false positive errors either.
\end{proof}

\begin{proof}[Proof of Lemma~\ref{lem:superset}]
Since both sets are replaced by their supersets, $\widehat E$ is still a superset of true edges. 
Under layering-adjacency-faithfulness,
for any adjacent pairs $k\in\X$ and $j\in\Y$ and any set $T\subset\Y_{-j}$, it holds that $X_k\not\independent Y_j|\X_{-k}\cup T$, so the searching loop in Algorithm~\ref{alg:framework} will not make any false negative errors.
Therefore, we only need to show that all additional edges can be removed by the searching loop. 
For any nonadjacent $k\in\X$ and $j\in\Y$, let $W\subseteq \X$ be an arbitrary set satisfying  $W\supseteq \left(S^{(0)}\cap S^{(-j)}_j\right)$. 
Then, 
$X_k\independent Y_j|\X_{W\setminus k}\cup \Y_{\pa_j\cap\Y}$. The existence of this separator implies that the edge between $k$ and $j$ can be removed by the searching loop. 
\end{proof}

\subsection{Proof of Results in Section~\ref{sec:estimation}}
We first state an auxiliary lemma. 
\begin{lem}[Partial correlation and regression coefficient]\label{lem:zerocoeff=zeropcor}
	Let $\beta^S_j$ be the linear regression coefficients regressing 
	a set of variables $\X_S$ onto $X_j$, and let $\rho(i,j|S)$ denote the partial correlation of $X_i$ and $X_j$ given $\X_S$. Then, 
	\[
	\beta_{ji}^{S\cup i}= 0 \Leftrightarrow \rho(i,j|S)=0\Leftrightarrow
	\Sigma_{i,j}-\Sigma_{S,i}^\top \Sigma_{S,S}^{-1}\Sigma_{S,j}=0.
	\]
\end{lem}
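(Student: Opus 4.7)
The plan is to verify both equivalences by showing that the partial correlation and the regression coefficient are each proportional (by a strictly positive factor) to the common Schur-complement expression $\Sigma_{i,j}-\Sigma_{S,i}^\top \Sigma_{S,S}^{-1}\Sigma_{S,j}$.

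First, I would handle the equivalence $\rho(i,j|S)=0 \Leftrightarrow \Sigma_{i,j}-\Sigma_{S,i}^\top \Sigma_{S,S}^{-1}\Sigma_{S,j}=0$. The standard characterization of partial correlation says that $\rho(i,j|S)$ equals the ordinary correlation between the residuals $X_i-\mathrm{proj}(X_i\mid \X_S)$ and $X_j-\mathrm{proj}(X_j\mid \X_S)$, where $\mathrm{proj}$ denotes the population linear projection onto $\X_S$. A direct computation of the covariance of these two residuals yields exactly $\Sigma_{i,j}-\Sigma_{S,i}^\top\Sigma_{S,S}^{-1}\Sigma_{S,j}$. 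Since partial correlation differs from this covariance only by the product of residual standard deviations (both strictly positive under the implicit nondegeneracy of $\Sigma_{S\cup\{i,j\},S\cup\{i,j\}}$), the two quantities vanish simultaneously.

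Next, for the equivalence $\beta^{S\cup i}_{ji}=0 \Leftrightarrow \Sigma_{i,j}-\Sigma_{S,i}^\top\Sigma_{S,S}^{-1}\Sigma_{S,j}=0$, I would start from the normal equations $\beta^{S\cup i}_{j}=\Sigma_{S\cup i,\,S\cup i}^{-1}\Sigma_{S\cup i,\,j}$. Applying the block inversion formula with respect to the split $(S,\{i\})$, I extract the $i$-th coordinate: after cancellation, it equals
\[
\beta^{S\cup i}_{ji}\;=\;\frac{\Sigma_{i,j}-\Sigma_{S,i}^\top\Sigma_{S,S}^{-1}\Sigma_{S,j}}{\Sigma_{i,i}-\Sigma_{S,i}^\top\Sigma_{S,S}^{-1}\Sigma_{S,i}}.
\]
The denominator is the residual variance of $X_i$ after projecting onto $\X_S$, which is strictly positive by nondegeneracy, so the coefficient vanishes precisely when the Schur complement in the numerator vanishes. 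Combining this with the previous paragraph closes the chain of equivalences.

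The only nontrivial point is bookkeeping in the block inversion formula to correctly isolate the coordinate of $\beta^{S\cup i}_j$ associated with $X_i$; otherwise the argument is routine linear algebra. The result is of course standard, and this short derivation makes the proportionality explicit for later use in tying together partial-correlation tests and variable-selection based characterizations of $S^{(0)}_j$ and $S^{(1)}_j$.
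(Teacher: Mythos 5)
Your proposal is correct and follows essentially the same route as the paper: the regression-coefficient half is the identical block-inversion computation (the paper's $(D-B^\top A^{-1}B)^{-1}(E-B^\top A^{-1}C)$ is exactly your displayed ratio of Schur complements), and both halves reduce everything to the vanishing of $\Sigma_{i,j}-\Sigma_{S,i}^\top\Sigma_{S,S}^{-1}\Sigma_{S,j}$. The only cosmetic difference is that for the partial correlation you use the residual-correlation characterization directly, whereas the paper passes through the precision-matrix formula $\rho(i,j|S)=-\Omega_{ij}/\sqrt{\Omega_{ii}\Omega_{jj}}$; these are equivalent and your version is, if anything, slightly more self-contained.
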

\begin{proof}
	Denote the block covariance matrix 
	$\Sigma_{S\cup \{i,j\}, S\cup \{i,j\}}$ as 
	\[
	\begin{pmatrix}
	A & B & C\\ B^\top & D & E\\ C^\top & E^\top & F
	\end{pmatrix} := \begin{pmatrix}
	\Sigma_{S,S} & \Sigma_{S,i}  & \Sigma_{S,j} \\ 
	\Sigma_{i,S}  & \Sigma_{i,i}  & \Sigma_{i,j} \\ \Sigma_{j,S}  & \Sigma_{j,i}  & 
	\Sigma_{j,j} 
	\end{pmatrix}
	\]
	For the first term, we have $\beta_{j}^{S\cup i} = \left(\Sigma_{S\cup i, S\cup i}\right)^{-1} \Sigma_{S\cup i,j}$, and hence 
	\[
	\beta_{ji}^{S\cup i} = \left(D - B^\top A^{-1} B\right)^{-1} \left(E-  B^\top  A^{-1}C\right).
	\]
	For the second term, we have 
	$\rho(i,j|S)=-\frac{\Omega_{ij}}{\sqrt{\Omega_{ii}\Omega_{jj}}}$, where $\Omega$ is the precision matrix: $\Omega=\Sigma^{-1}$. Therefore, $\rho(i,j|S)=0$ if and only if 
	$ [(\Sigma_{S\cup \{i,j\}, S\cup \{i,j\}})^{-1}]_{i,j}=0$. Concretely, 
\[
[(\Sigma_{S\cup \{i,j\}, S\cup \{i,j\}})^{-1}]_{i,j}=-\beta_{ji}^{S\cup i} \left(F - 
C^\top A^{-1}  C - (-C^\top A^{-1} B + E^\top )  
\beta_{ji}^{S\cup i} \right)^{-1}.
\]
	Both quantities equal zero if and only if  $\Sigma_{i,j}-\Sigma_{S,i}^\top \Sigma_{S,S}^{-1}\Sigma_{S,j}=0$.
\end{proof}

\begin{proof}[Proof of Theorem~\ref{thm:search_linGaussSEM}]
    Let $\alpha_n = 2(1-\Phi(n^{1/2}c_n/2))$ and denote $h_n=\max_{j\in\Y}|\widehat S^{(0)}_j\cap \widehat S^{(1)}_j|$.
    We first suppose  Algorithm~\ref{alg:framework} is continued until some level
     $m_n'\geq m_n$ such that 
     $m_n'+h_n=O(n^{1-b})$ as in Assumption~\ref{ass:gaussian_reach}.
    By Lemma~3 of \citet{kalisch_estimating_2007}, the probability of any type I error in edge $k,j$ with conditioning set $T\subset \Y\setminus{j}$, denoted as $E_{k,j|T}^{\mathrm{I}}$, is bounded by an exponential term:
    \[
    \sup_{k\in\X,j\in\Y,T\subset  \Y\setminus {j}, |T|\leq m_n'}\PP{E^{\mathrm{II}}_{k,j|T}|\mathcal{A}}\leq O(n-m_n'-h_n)\exp\left(-C_3(n-m_n'-h_n)c_n^2\right),
    \]
    and the probability of false negative error is also bounded by an exponential term:
    \[
    \sup_{k\in\X,j\in\Y,T\subset  \Y\setminus {j}, |T|\leq m_n'}\PP{E^{\mathrm{II}}_{k,j|T}|\mathcal{A}}\leq O(n-m_n'-h_n)\exp\left(-C_4(n-m_n'-h_n)c_n^2\right),
    \]
    for finite constants $C_3$ and $C_4$. 
    In our algorithm, $|\{k,j,T\}:|T|\leq m_n|=O\left(pq^{1+m_n'}\right)=O\left(\exp\left(c_0n^\kappa\right)\right)$.
    So the total error probability can be bounded: 
 \begin{align}\label{eqn:PCerr} \nonumber
    &\PP{\text{an error occurs in Algorithm~\ref{alg:framework} with p-cor testing}|\mathcal{A}}\\ \nonumber
    &\leq O\left(\exp(c_0n^\kappa)\right)
    O\Big[(n-m_n'-h_n)\exp\left(-C_5(n-m_n'-h_n)c_n^2\right)\Big]\\ \nonumber
    &\leq O \Big[\exp\left(
   c_0n^\kappa  + \log(n-m_n'-h_n)- C_5\left(n^{1-2d}-m_n'n^{-2d}-h_nn^{-2d}\right)
    \right)\Big]\\ 
    &=O\left(\exp\left(-Cn^{1-2d}\right)\right).
 \end{align}
By \eqref{eqn:PCerr}, with high probability, the sample version of searching loop makes no mistakes up to search level $m_n'$. 
Then by the reasoning in Lemma~5 of \citet{kalisch_estimating_2007}, the searching loop also has high probability of terminating at the same level as the population version, which is either $m_n-1$ or $m_n$. This completes the proof. 
\end{proof}

\begin{proof}[Proof of Lemma~\ref{lem:screen_pcor}]
    We apply Lemma~3 of \citet{kalisch_estimating_2007}
    with the same notion of $E^{\mathrm{II}}_{j,\widehat S^{(0)}_j}$ and
    $E^{\mathrm{II}}_{j,\widehat S^{(1)}_j}$ as in Theorem~\ref{thm:search_linGaussSEM}. Then, by the same argument, it holds that 
    \[
    \sup_{j\in\Y}\PP{E^{\mathrm{II}}_{j,\widehat S^{(0)}_j}}\leq O(n+1-p)\exp\left(-C_4(n+1-p)c_n^2\right),
    \]
    \[
    \sup_{j\in\Y}\PP{E^{\mathrm{II}}_{j,\widehat S^{(1)}_j}}\leq O(n+2-p-q)\exp\left(-C_4(n+2-p-q)c_n^2\right).
    \]
    Since there are in total $pq+(p+q-1)q$ many tests, the probability of errors can be bounded by
    \[
    \PP{\neg \mathcal{A}}\leq q(2p+q-1)(n+2-p-q)\exp\left(-C'(n+2-p-q)c_n^2\right)=O\left(\exp(-C'n^{1-2d})\right),
    \]
    using the fact that $n\gg p+q$. 
\end{proof}

\begin{proof}[Proof of Proposition~\ref{prop:SIS}]
By Theorem 1 of \cite{fan_sure_2008},
for each selection problem, SIS has an 
error probability of $O\left(\exp\left(-Cn^{1-2d}/\log n\right)\right)$.
Applying a union bound over 2$p$ problems, the error probability is 
\begin{align*}
O\left(\exp\left(n^\kappa-Cn^{1-2d}/\log n\right)\right)
&= O\left(\exp\left(n^{1-2d}/(n^{1-2d-\kappa}-C/\log n)\right)\right) \\
&= O\left(\exp\left(-Cn^{1-2d}/\log n\right)\right).
\end{align*}
\end{proof}

\begin{proof}[Proof of Proposition~\ref{thm:screen_lasso}]
To complete this proof, we first show that under the Gaussian construction, 
all design matrices satisfy the Restricted Eigenvalue (RE) condition. 
We then show that this condition implies bounded error of
the lasso estimate. 
Lastly, we show that under  Assumption~\ref{ass:gaussian_faith} the minimal non-zero coefficients are large enough to guarantee successful screening. 

The RE condition is a technical condition for lasso to have bounded error. 
In particular,  a  design matrix $\mathcal{D}\in\RR^{n\times r}$ satisfies RE 
over a set $S\subseteq [r]$ with
parameter $\eta$ if $\frac{1}{n}\norm{\mathcal{D}\Delta}_2^2\geq \eta\norm{\Delta}_2^2$ for all $\Delta\in\RR^r$ such that 
$\norm{\Delta_{[r]\setminus S}}_1\leq 3 \norm{\Delta_S}_1$. 
Let $\Gamma_{\min}(\cdot)$ be the minimum eigenvalue function and $\rho^2(\cdot)$
 the maximum diagonal entry. 
It is shown in Theorem 7.16 of \cite{wainwright_high-dimensional_2019} that 
for any random design matrix $\mathcal{D}\sim N(0,\Sigma_D)$,
there exists universal positive constants $c_1<1<c_2$
such that 
$$
\frac{\norm{\mathcal{D}w}_2^2}{n}\geq c_1\norm{\sqrt{\Sigma_D}w}_2^2-c_2\rho^2(\Sigma_D)\frac{\log r}{n}\norm{w}_1^2
$$
for all $w\in\RR^r$, with probability at least $1-\frac{\exp(-n/32)}{1-\exp(-n/32)}$.
This inequality implies
RE condition with $\eta=\frac{c_1}{2}\Gamma_{\min}(\Sigma_D)$
uniformly for any subset $S$ of cardinality at most 
$|S|\leq \frac{c_1}{32c_2} \frac{\Gamma_{\min}(\Sigma_D)}{\rho^2(\Sigma_D)}\frac{n}{\log r}$. 
Our required rates of $h_n=O(n^{1-b})$ and $s_n=O(n^{1-a})$ satisfy this sparsity requirement.
Therefore, under our assumption on minimum eigenvalues, 
the matrix $\X\cup\Y$ satisfied RE with some constant $\eta$
with probability at least $1-\frac{\exp(-n/32)}{1-\exp(-n/32)}$.
Consequently, the design matrices used to learn $S^{(0)}_j$ and $S^{(1)}_j$, 
i.e., $\X$ and 
$\{(\X\cup\Y_{-j})\}_{j\in\Y}$, 
all satisfy the RE condition. 

Next we apply Theorem~7.13 of \citet{wainwright_high-dimensional_2019}.
In particular, 
for any solution of the lasso problem with regularization level lower bounded as $\lambda_n\geq 2 \norm{\mathcal{D}^\top \omega /n}_{\infty}$ where $\mathcal{D}$ is the design matrix, it holds that 
$\norm{\hat\beta - \beta^*}_2\leq \frac{2}{\eta}\sqrt{s}\lambda_n$. 
In our case, this guarantees all the $\ell_2$ errors of estimation is bounded, in particular
\[
\max_{j\in\Y}\left(\max\{\norm{\hat\gamma_j-\gamma_j^*}_2,\norm{\hat\theta_j-\theta_j^*}_2\}\right)\leq \frac{2}{\eta}\sqrt{s_n}\lambda_n.
\]
For any Gaussian designs $\mathcal{D}\in\RR^{n\times r}$ with columns standardized to $\max_j\frac{\norm{\mathcal{D}_j}_2}{\sqrt{n}}\leq C$,
the Gaussian tail bound guarantees $\PP{\norm{\frac{\mathcal{D}^\top \omega }{n}}_{\infty}>C\sigma(\sqrt{\frac{2\log r}{n}}+\delta)}\leq 2\exp\left\{-\frac{n\delta^2}{2}\right\}$.

To learn $S^{(0)}_j$ with design matrix $\X$,  we set $\lambda_n\asymp\sqrt{2\log p/n}$, 
then the errors (in $\ell_2$ and $\ell_\infty$) are in the order of $O\left(\sqrt{\frac{2s_n\log p}{n}}\right)=O\left(n^{1-b+\kappa}\right)$ with probability at least $1-2\exp\left\{-\frac{n\delta^2}{2}\right\}$.
A union bound over all regression problems bounds all errors with probability $1-4p\exp\left\{-\frac{n^2\delta}{2}\right\}$.
Similarly, To learn $S^{(1)}_j$ with design matrix $\{(\X\cup\Y_{-j})\}_{j\in\Y}$, our choice of $\lambda_n\asymp\sqrt{2\log (p+q)/n}$ guarantees union bound over all regression problems with probability $1-4(p+q)\exp\left\{-\frac{n^2\delta}{2}\right\}=1-O\left(\exp\left\{c_0 n^\kappa - \frac{\delta}{2}n^2\right\}\right) = 1-O\left(\exp\left\{-\frac{\delta}{2}n^2\right\}\right)$.

Lastly, we show that Assumption~\ref{ass:gaussian_faith} implies a beta-min condition for the regression problems. Note that
\begin{align*}
    \gamma_{kj}&=\rho(j,k\mid\X_{-k})\sqrt{\mathrm{Var}(Y_j\mid\X)\mathrm{Var}(X_k|\X_{-k}\cup Y_j)},\\
    \theta_{kj}&=\rho(j,k\mid\X_{-k}\cup \Y_{-j})\sqrt{\mathrm{Var}(Y_j|\X_{-\{j,k\}})\mathrm{Var}(X_k|\X_{-\{j,k\}})},
\end{align*}
and the last terms are both bounded below by some constant, so that  $\min_{\gamma_{jk}\neq0}|\gamma_{kj}|>c'_n$ and $\min_{\theta_{jk}\neq0}|\theta_{kj}|>c'_n$ for some ${c'_n}^{-1}=O(n^{-d})$ from Assumption~\ref{ass:gaussian_dim}. 

The beta-min condition ensures that lasso successfully selects a superset of the  relevant covariate with the error bound specified above. 
Specifically, we have shown that with probability at least $1-O(e^{-\frac{n\delta^2}{2}})-\frac{e^{-n/32}}{1-e^{-n/32}}$, 
lasso solution recovers a superset of  $S^{(0)}_j$ and $S^{(1)}_j$ for all $j\in\Y$. 
\end{proof}

\subsection{Proof of Results in Section~\ref{sec:extensions}}

\begin{proof}[Proof of Lemma~\ref{lemma:withinlayer}]
    The proof is identical to that of Theorem ~\ref{theorem:main}. The only additional piece needed for successful recovery of  PDAG is the orientation step. The known partial ordering is a form of background knowledge of edge orientation, which, combined with Meek's rules of orientation, returns the maximal PDAG \cite[see][Problem~D]{meek_causal_1995}.
\end{proof}

\begin{proof}[Proof of Lemma~\ref{lemma:h0hjintersection_multi}]
    The first statement follows directly from Lemma~\ref{lemma:h0hjintersection} (treating $\cup_{i=1}^{\ell-1}V_i$ as $\X$ and $V_\ell$ as $\Y$). 
    The second statement follows from Lemma~\ref{lemma:CMBproperties} that $S^{(1)}_j\cap V_\ell = \text{mb}(j)\cap V_\ell$.
\end{proof}

\begin{proof}[Proof of Theorem~\ref{thm:multilayer}]
    By Lemma~\ref{lemma:h0hjintersection_multi}, the result of screening step is a superset of the edges in $G$. 
    Under the adjacency faithfulness assumption, all conditional independencies checked by the algorithm corresponds to d-separation and therefore 
     the correct skeleton is recovered.
    Finally, given correct d-separation relations, the orientation rules are complete and maximal \citep{perkovic_complete_2018}.
\end{proof}

\section{Causal Additive Models}\label{sec:CAM}

In this section, we outline how the proposed framework can be applied to the more flexible class of causal additive models (CAM) \citep{buhlmann_cam_2014}, that is, SEMs that are jointly additive:
\begin{equation}\label{eqn:CAM}
	W_j = \sum_{k \in \PA_j}f_{jk}(W_k) + \epsilon_j\qquad j=1,\ldots, p+q.
\end{equation}
Like the linear SEM in the main paper, we will discuss the searching and screening steps separately. 

For the searching step, we can use a general test of conditional independence. 
Here, we adopt the framework of \citet{chakraborty_nonparametric_2022}, 
based on conditional distance covariance (CdCov). 
Let two vectors $T\in\RR^a$ and $U\in\RR^b$, their \text{CdCov} given $Z$ is defined as
\[
\text{CdCov}(T,U|Z) =\frac{1}{c_ac_b}\int_{\RR^{a+b}}\frac{|f_{T,U|Z}(t,s)-f_{T|Z}(t)f_{U|Z}(s)|^2}{\norm{t}_a^{1+a}\norm{s}_{b}^{1+b}}.
\]
We define $\rho^*(T,U|Z)=\EE{\text{CdCov}^2(T,U|Z)}$. 
It is easy to see that $\rho^*(T,U|Z)=0$ if and only if $T\independent U|Z$. 

Let $K_H(\omega)=|H|^{-1}K(H^{-1}\omega)$ be some kernel function where $H$ is the diagonal matrix determined by bandwidth $h$ and denote $K_{iu}=K_H(Z_i-Z_u)$.
We also write $d^T_{ij}=\norm{T_i-T_j}_a$ and 
$d^U_{ij}=\norm{U_i-U_j}_b$. Define 
$d_{ijkl}:=(d^T_{ij}+d^T_{kl}-d^T_{ik}-d^T_{jl})(d^U_{ij}+d^U_{kl}-d^U_{ik}-d^U_{jl})$
and the symmetric form
$d^S_{ijkl}=d_{ijkl}+d_{ijlk}+d_{ilkj}$, 
We can use a plug-in estimate for $\rho^*(T,U|Z)$:
\[
\widehat \rho^*(T,U|Z):=\frac{1}{n}\sum_{u=1}^{n}\Delta_{i,j,k,l;u}
\, \text{ where } \,
\Delta_{i,j,k,l;u}:=\sum_{i,j,k,l}\frac{K_{iu}K_{ju}K_{ku}K_{lu}}{12(\sum_{m=1}^{n}K_{mu})^4}d_{ijkl}^S.
\]
Following the derivation of Theorem~3.3 in \cite{chakraborty_nonparametric_2022}, 
we can obtain the following result, which we state here without proof. 
\begin{prop}[Searching with CdCov test]\label{thm:cdcov}
    Suppose Assumption~\ref{ass:gaussian_reach} and \ref{ass:gaussian_dim} holds with $\kappa<1/4$.
    Assume  
    there exists $s_0>0$ such that for all 
    $0\leq s<s_0$, 
    \[
        \max_{W\in\X\cup\Y} \mathbb{E}\left[\exp(sW^2)\right] <\infty,
    \]
    and the kernel function $K(\cdot)$
    used to compute $\widehat \rho$
    is non-negative and uniformly bounded over its support.
    Assume in addition the faithfulness condition that there exists some $c_n$ such that
    \[
    \inf_{j\in\Y,k\in\X,T\subseteq \Y_{-j}}\left\{|\rho^*(j,k\mid\Y_T\cup\X_{-k})|: \rho^*(j,k\mid\Y_T\cup\X_{-k})\neq 0 \right\}>c_n,
    \]
    where $c_n^{-1}=O(n^d)$ with 
    $d<\frac{1}{4}-\frac{1}{2}\kappa$. 
    Then, conditioned on $\mathcal{A}(\widehat S^{(0)},S^{(1)})$,
    the searching loop in
    Algorithm~\ref{alg:framework}
    using test of CdCOV  returns the correct edge set with probability at least $1-O\left(\exp(-n^{1-2\gamma-2d})\right)-O\left(\exp(-n^\gamma)\right)$.
\end{prop}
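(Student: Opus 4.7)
}
The plan is to mirror the structure of the proof of Theorem~\ref{thm:search_linGaussSEM}, replacing the partial correlation tests (and Lemma~3 of \citet{kalisch_estimating_2007}) by the CdCov concentration bound of Chakraborty and Zhang. First, I would fix a threshold rule: declare $Y_j \independent X_k \mid \mathcal{Z}$ whenever $|\widehat\rho^*(Y_j,X_k\mid \mathcal{Z})| \le c_n/2$, where $\mathcal{Z}$ ranges over the conditioning sets that the searching loop is allowed to use, i.e., subsets $\X_{(S^{(0)}_j \cap S^{(1)}_j)\setminus\{k\}} \cup \Y_T$ with $T \subseteq \cmb_\X(j)$. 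Under the assumed faithfulness in terms of $\rho^*$, the true value $\rho^*$ is either $0$ or at least $c_n$, so an error at a single test requires the empirical CdCov to deviate from its population counterpart by at least $c_n/2$.

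The second step is to invoke the uniform deviation bound for $\widehat\rho^*$ underlying Theorem~3.3 of \citet{chakraborty_nonparametric_2022}. Choosing the kernel bandwidth as $h \asymp n^{-\gamma}$ for a suitable $\gamma>0$, the sub-Gaussian-type moment hypothesis $\mathbb{E}\exp(sW^2)<\infty$ together with the boundedness of $K(\cdot)$ yields a bound of the form
\[
\PP{|\widehat\rho^*(T,U\mid Z) - \rho^*(T,U\mid Z)| > c_n/2}
\;\le\; C_1\exp(-c_2 n^{1-2\gamma} c_n^2) + C_3\exp(-c_4 n^{\gamma}),
\]
for any single test. Using $c_n^{-1}=O(n^d)$, the first term is of order $\exp(-Cn^{1-2\gamma-2d})$. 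The restriction $d<\tfrac14-\tfrac\kappa2$ together with the freedom to choose $\gamma$ in an appropriate range ensures $1-2\gamma-2d>0$ and $\gamma>0$ simultaneously.

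Third, I would take a union bound across all $(k,j,T)$ tested by the searching loop. By Assumption~\ref{ass:gaussian_reach} the search level is at most $m_n' = O(n^{1-b})$, and on the event $\mathcal{A}(\widehat S^{(0)},\widehat S^{(1)})$ the candidate conditioning sets sit inside $\cmb_\X(j)\cup (S^{(0)}_j\cap S^{(1)}_j)$, whose cardinalities are controlled by $h_n+m_n$. So the total number of tests is at most $pq^{1+m_n'} = O(\exp(c_0 n^\kappa))$ by Assumption~\ref{ass:gaussian_dim}. Multiplying and taking logs gives a total error probability of order $\exp(c_0 n^\kappa - Cn^{1-2\gamma-2d}) + \exp(c_0 n^\kappa - Cn^\gamma)$. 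Under $\kappa<1/4$ and the choice of $\gamma$, both exponents are negative and of the orders $-n^{1-2\gamma-2d}$ and $-n^\gamma$ claimed in the statement.

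Fourth, one must verify that, on the high-probability event that every individual test is correct, the searching loop indeed recovers $H$. This is the structural part. By the faithfulness assumption on $\rho^*$, for any true edge $X_k\to Y_j$ and any allowed conditioning set we have $|\rho^*|\ge c_n$, so no true edge is ever removed; conversely, for any spurious edge in $\widehat E$, Theorem~\ref{theorem:main} (whose proof only uses the intersection property and the existence of some $d$-separator inside $\cmb_\X(j)$) guarantees that a separating set of the required form exists, so the loop will find it and remove the edge. Combining with the screening event $\mathcal{A}$ already given as the conditioning gives the stated conclusion.

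The main obstacle is the third step: obtaining the union-bounded concentration for $\widehat\rho^*$ uniformly over all conditioning sets and tuples, since the Chakraborty--Zhang bound is typically stated for a single fixed $(T,U,Z)$. Handling this uniformly requires (i) choosing $\gamma$ so that $1-2\gamma-2d>\kappa$ and $\gamma>\kappa$ hold simultaneously (feasible exactly because $\kappa<1/4$ and $d<\tfrac14-\tfrac\kappa2$), and (ii) controlling the kernel-smoothing error uniformly as the dimension of the conditioning variable grows with $n$. I expect the bandwidth calibration and the uniform control of the kernel denominator $\sum_m K_{mu}$ to be the delicate pieces; the rest is essentially bookkeeping analogous to the proof of Theorem~\ref{thm:search_linGaussSEM}.
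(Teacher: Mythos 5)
The paper states this proposition explicitly \emph{without proof}, deferring entirely to ``the derivation of Theorem~3.3 in \citet{chakraborty_nonparametric_2022},'' so there is no written argument to compare against; your outline is precisely the route the paper gestures at: replace the Gaussian partial-correlation concentration (Lemma~3 of \citet{kalisch_estimating_2007}) in the proof of Theorem~\ref{thm:search_linGaussSEM} by the two-term CdCov deviation bound (one term $\exp(-Cn^{1-2\gamma}c_n^2)$ from concentration, one term $\exp(-Cn^\gamma)$ from kernel smoothing), union-bound over the $O(pq^{1+m_n'})=O(\exp(c_0n^\kappa))$ tests, and invoke Theorem~\ref{theorem:main} for structural correctness on the no-error event. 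Your feasibility arithmetic is right: $\kappa<1/4$ and $d<\tfrac14-\tfrac\kappa2$ give $3\kappa+2d<1$, which is exactly what is needed to choose $\gamma$ with $\gamma>\kappa$ and $1-2\gamma-2d>\kappa$ simultaneously. The one caveat you raise---that the Chakraborty--Zhang bound is stated for a fixed conditioning tuple and that uniformity over conditioning sets of growing cardinality (and the attendant bandwidth/denominator control) needs separate justification---is a genuine gap, but it is a gap in the paper's own (absent) argument as much as in yours, and flagging it is appropriate.
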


Next we discuss the screening step 
for nonlinear SEM models. 
We consider the family of functions $f_{uv}^{(r)}(x_u)=\Psi_{uv}\beta_{uv}$, where $\Psi_{uv}$ is a $n\times r$ matrix whose columns are basis functions used to model the additive components $f_{uv}$, and 
$\beta_{uv}$ is a $r$-vector containing the associated effects.
We write $\phi_{uvt}$ as the $t$-th 
coefficient in $\Phi_{uv}$. 
We denote $\Psi_{S}$ as the concatenated basis functions in 
$\{\Psi_{uv}:u\in S\}$.
Denote $\Sigma_{S,S}=(n^{-1}\Psi_S^\top \Psi_S)$.

In this model $k$ has no direct causal effect on $j$ if and only if
$f_{jk}\equiv 0$. Under some regularity conditions, there exists 
a truncation parameter $r$ large enough such that $f_{uv}\equiv 0\Leftrightarrow f_{uv}^{(r)}\equiv 0\Leftrightarrow \beta_{uv}=[0,\ldots,0]^\top$
for all $u$ and $v$. 
In high-dimensional problems, we
 estimate the coefficients with 
$\ell_1/\ell_2$ norm penalization. 
Concretely, for a node $u\in V$, we can estimate $\widehat S^{(0)}_j$ and $\widehat S^{(1)}_j$ by solving the following problems
\begin{align*}
	\widehat S^{(0)}_j &= \left\{
	k: \hat f^{(r)}_{jk} \equiv 0: \hat f^{(r)}_{jk} =\argmin_{\{f_{jl}\}_{l\in \X}\in \mathcal{F}^{(r)}} 
\norm{Y_j - \sum_{l\in \X }f_{jl}(X_l)}_n^2 + \lambda\sum_{l\in \X}\norm{f_{jl}(X_l)}_n
\right\}\\
\widehat S^{(1)}_j &= \left\{
	k: \hat f^{(r)}_{jk} \equiv 0: \hat f^{(r)}_{jk} =\argmin_{\{f_{jl}\}_{l\in \X_{\widehat S^{(0)}}\cup\Y_{-j}}\in \mathcal{F}^{(r)}} 
\norm{Y_j - \sum_{l\in \X_{\widehat S^{(0)}}\cup\Y_{-j} }f_{jl}(W_l)}_n^2 \right.\\
&\quad\quad+\left.\lambda\sum_{l\in \X_{\widehat S^{(0)}}\cup\Y_{-j}}\norm{f_{jl}(W_l)}_n,
\right\}
\end{align*}
where the group lasso penalties are used to jointly penalize the $r$ smoothing bases of each variable. 
This estimator is similar to SPACEJAM \citep{voorman_graph_2014} with edges disagreeing with the layering information hard-coded as zero. 
The following results relies on the general error computation in \citet{haris_generalized_2022}.
A similar result can be found in \cite{tan_doubly_2019}. 

We next state the smoothness assumption needed on the basis approximation. 
\begin{assm}[Truncated basis]\label{ass:gam_truncation}
Suppose there exists $r=O(1)$ such that $\{f_{uv}\}$ are sufficiently sooth in the sense that
\[
|f_{uv}^{(r)}(x_u) - f_{uv}(x_u) |=O_p(1/r^t)
\]
uniformly for all $u,v\in V$ for some $t\in \mathbb{N}$.
\end{assm}

Next we assume two standard conditions for generalized additive models (GAMs). The compatibility condition may be shown for random design, but for simplicity we just assume the conditions. 

\begin{assm}[Compatibility]\label{ass:gam_compat}
Suppose there exists some compatibility constant $\phi>0$ such that if for all $j\in \Y$ and all functions in the form of $f^{(r)}_j(x)=\sum_{k\in \X}f_{jk}^{(r)}(x_k)$ that satisfy $\sum_{k\in \X\setminus{S^{(0)}_j}}\norm{f_{jk}^{(r)}}_n\leq 3\sum_{k\in S^{(0)}_j}\norm{f_k^{(r)}}_n$, it holds that
\[
\sum_{k\in S^{(0)}_j}\norm{f_{jk}^{(r)}}_n\leq \norm{f_{k}^{(r)}}\sqrt{|S^{(0)}_j|}/\phi,\] for some norm $\norm{\cdot}$. 

Also assume that for all $j\in \Y$ and all functions in the form of $f^{(r)}_j(x)=\sum_{k\in \X\cup \Y_{-j}}f_{jk}^{(r)}(w_k)$ that satisfy $\sum_{k\in \X\cup\Y_{-j}\setminus{S^{(1)}_j}}\norm{f_{jk}^{(r)}}_n\leq 3\sum_{k\in S^{(1)}_j}\norm{f_k^{(r)}}_n$, 
\[
\sum_{k\in S^{(1)}_j}\norm{f_{jk}^{(r)}}_n\leq \norm{f_{k}^{(r)}}\sqrt{|S^{(1)}_j|}/\phi.\]
\end{assm}
\begin{assm}[GAM screening]\label{ass:gam_screen}
Denote  
    $s_{\max}=\max_{j\in \Y}\max_{i\in\{0,1\}}
    |S^{(i)}_j|$ and suppose 
    $s_{\max}=o(n/\log (p+q))$.
    Let $\lambda\asymp \sqrt{\log (p+q)/n}$. 
    Suppose 
    \[
    \min_{u\in V}\min_{i\in \{0,1\}}\min_{v\in S^{(i)}_j}\norm{f_{vu}^{(r),0}}=\Omega\left(s_{\max} \frac{\log (p+q)}{n}\right).
    \]
    Let $f^*_u$ be an arbitrary function such that $\sum_{i=1}^n f^*_u(x_{u,i})=0$. 
Suppose there exists some constant $M^*$ satisfying
$M^* =O( |S^{(1)}_j|\lambda/\phi^2)$
such that $f_u\in \mathcal{F}^{(r)}_{\text{local}}$ if and only if $\norm{f_u-f^*_u}_n\leq M^*$.
Further suppose that $\boldsymbol\varepsilon (f_u^*)=O(s_{\max}\lambda^2/\phi^2)$ for all $u,k$. The next result follows directly from the results in \cite{haris_generalized_2022}.
\end{assm}
\begin{prop}[Screening with GAM]\label{thm:screen_gam}
    Suppose Assumption~\ref{ass:gam_truncation}-\ref{ass:gam_screen} hold.
    Then with 
    $n\to\infty$, $p+q=O(n^\xi)$ for some $\xi\geq 0$, and  
	the penalty level stated in Assumption~\ref{ass:gam_screen}, 
	the resulting undirected graph from the screening loop of  Algorithm~\ref{alg:framework} 
	is a supergraph of $H$.
\end{prop}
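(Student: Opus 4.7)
The plan is to derive Proposition~\ref{thm:screen_gam} by casting each of the two screening subproblems (estimating $\widehat S^{(0)}_j$ and $\widehat S^{(1)}_j$) as a generalized group lasso problem in the framework of \citet{haris_generalized_2022}, obtaining uniform estimation error bounds, and then combining these with the beta-min condition in Assumption~\ref{ass:gam_screen} to guarantee no-false-negative screening.

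First, I would fix an arbitrary $j\in\mathcal{Y}$ and decompose the estimation target as $Y_j = \sum_{k\in\pa_j} f_{jk}(W_k) + \varepsilon_j$, then replace each $f_{jk}$ by its truncated basis approximation $f_{jk}^{(r)}$. By Assumption~\ref{ass:gam_truncation}, the pointwise truncation error is $O_p(1/r^t)$ uniformly in $(u,v)$, which contributes a bias term to the residuals that is of strictly smaller order than the statistical error we will derive from the group lasso. I would then express both screening estimators in the canonical generalized group lasso form of \cite{haris_generalized_2022}, with the design given by the concatenation of basis matrices $\Psi_{jk}$ and group penalty $\lambda \sum_k \|f_{jk}\|_n$. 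The sparsity parameter in the bound is $|S^{(0)}_j|$ for the first problem and $|S^{(1)}_j|$ for the second, both bounded by $s_{\max}$.

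Next, I would invoke the generalized lasso oracle inequality from \cite{haris_generalized_2022}: under the compatibility condition (Assumption~\ref{ass:gam_compat}) and the choice $\lambda \asymp \sqrt{\log(p+q)/n}$, on a high-probability event for the noise gradient, every estimated group function satisfies $\sum_{k}\|\hat f_{jk}^{(r)} - f_{jk}^{(r),0}\|_n = O\bigl(s_{\max}\lambda/\phi^2\bigr)$. The high-probability event follows from the sub-Gaussian-type concentration of $\Psi_{jk}^\top \varepsilon/n$ in the group $\ell_\infty$ norm, together with a union bound over the $p$ (or $p+q-1$) groups, and Assumption~\ref{ass:gam_screen} ensures $f_{jk}^{(r),0}$ remains in the local approximation region $\mathcal{F}^{(r)}_{\text{local}}$ with $\boldsymbol\varepsilon(f_{jk}^*) = O(s_{\max}\lambda^2/\phi^2)$.

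Then I would turn the per-component $\|\cdot\|_n$ error bound into a screening guarantee: if $k\in S^{(0)}_j$ (respectively $S^{(1)}_j$), the beta-min condition in Assumption~\ref{ass:gam_screen} gives $\|f_{jk}^{(r),0}\|_n = \Omega(s_{\max}\log(p+q)/n)$, which dominates the error $O(s_{\max}\lambda/\phi^2)=O(s_{\max}\sqrt{\log(p+q)/n}/\phi^2)$ asymptotically under $s_{\max}=o(n/\log(p+q))$, so the estimated $\hat f_{jk}^{(r)}$ cannot be identically zero. Hence $k\in \widehat S^{(i)}_j$ for the appropriate $i$. Finally I would take a union bound over the $2q$ regression problems; the exceptional probability remains $o(1)$ because the group-lasso concentration event fails with probability at most polynomial in $(p+q)^{-1}$, which is absorbed once $q=O((p+q)^\xi)$. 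The upper layer cardinality constraint $|\widehat S^{(0)}_j\cap \widehat S^{(1)}_j|\le n$ required by the event $\mathcal{A}$ in Theorem~\ref{thm:search_linGaussSEM} is automatic here since the sparsity of the group lasso is at most $n$ by standard degrees-of-freedom arguments.

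The main obstacle I anticipate is verifying that the technical conditions of \cite{haris_generalized_2022}—particularly the existence of a suitable local region $\mathcal{F}^{(r)}_{\text{local}}$ and the approximation error bound $\boldsymbol\varepsilon(f^*_u)=O(s_{\max}\lambda^2/\phi^2)$—propagate cleanly to both screening regressions, given that the second regression uses the (random) estimated set $\widehat S^{(0)}_j$ as its conditioning set. I would handle this by conditioning on the success event of the first regression and re-applying the oracle inequality to the second, noting that restricting to a superset of $S^{(0)}_j$ only reduces the effective sparsity and hence preserves all rate conclusions.
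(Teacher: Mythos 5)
First, a point of reference: the paper itself gives no written proof of Proposition~\ref{thm:screen_gam}; it simply asserts that the result ``follows directly from the results in \cite{haris_generalized_2022}.'' Your proposal is therefore a reconstruction of the intended argument, and its overall architecture --- cast each screening subproblem as a generalized group lasso, invoke an oracle inequality under the compatibility condition of Assumption~\ref{ass:gam_compat} with $\lambda\asymp\sqrt{\log(p+q)/n}$, convert to a per-component error bound, compare against the beta-min floor, and union bound over the $2q$ regressions (conditioning on the success of the first stage to handle the random design set $\widehat S^{(0)}_j$ in the second) --- is exactly the route the citation points to. That part is fine.

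There is, however, a genuine quantitative gap at the decisive step. You claim that the signal floor $\norm{f_{jk}^{(r),0}} = \Omega\bigl(s_{\max}\log(p+q)/n\bigr)$ from Assumption~\ref{ass:gam_screen} ``dominates'' the per-component estimation error $O\bigl(s_{\max}\sqrt{\log(p+q)/n}/\phi^2\bigr)$. It does not: writing $x=\log(p+q)/n\to 0$ (which holds since $p+q=O(n^\xi)$), the comparison reduces to $x$ versus $\sqrt{x}$, and $x\ll\sqrt{x}$ as $x\to 0$. So the stated minimum signal, $s_{\max}\lambda^2$, is asymptotically \emph{smaller} than the error bound $s_{\max}\lambda/\phi^2$ you derive, and the no-false-negative conclusion does not follow from the rates as you have written them. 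To close this you need either (i) a beta-min floor of order $s_{\max}\lambda/\phi^2=s_{\max}\sqrt{\log(p+q)/n}/\phi^2$ (or $\sqrt{s_{\max}}\lambda/\phi$ if you argue via the total squared prediction error of a misspecified zero fit), or (ii) a sharper per-component bound, e.g.\ a group-$\ell_\infty$ bound of order $\lambda$ under an irrepresentability or restricted-eigenvalue condition stronger than compatibility --- in which case you should say which and verify it. A secondary, smaller issue: since Assumption~\ref{ass:gam_truncation} fixes $r=O(1)$, the truncation error $O_p(1/r^t)$ is a constant, not a vanishing bias; the argument should instead lean on the paper's exact-support equivalence $f_{uv}\equiv 0\Leftrightarrow f^{(r)}_{uv}\equiv 0\Leftrightarrow\beta_{uv}=0$ rather than on the bias being ``of strictly smaller order.''
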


Propositions~\ref{thm:screen_gam} and \ref{thm:cdcov} establish the consistency of the proposed framework in Algorithm~\ref{alg:framework} for learning {\DAGS} from CAMs, facilitating causal structure learning for a expressive family of distributions.

\end{document}